\documentclass{article}

\PassOptionsToPackage{numbers, compress}{natbib}

\usepackage[main, final]{neurips_2026}

\usepackage[utf8]{inputenc} 
\usepackage[T1]{fontenc}    
\usepackage{hyperref}       
\usepackage{url}            
\usepackage{booktabs}       
\usepackage{amsfonts}       
\usepackage{nicefrac}       
\usepackage{microtype}      
\usepackage{xcolor}         

\usepackage{graphicx}
\usepackage{subcaption}
\usepackage{caption}
\usepackage{pifont}
\usepackage{wrapfig}
\usepackage{booktabs}
\usepackage{multicol, multirow}
\usepackage{array}
\usepackage{tabularx}
\usepackage{makecell}

\usepackage{algorithm}
\usepackage{algorithmic}

\usepackage{amsmath}
\usepackage{amssymb}
\usepackage{mathtools}
\usepackage{amsthm}

\theoremstyle{plain}
\newtheorem{theorem}{Theorem}[section]

\newtheorem{corollary}[theorem]{Corollary}
\theoremstyle{definition}

\theoremstyle{remark}

\title{PDE-PFN: Prior-Data Fitted Neural PDE Solver}

\newcommand{\affilmark}[1]{\textsuperscript{#1}}

\author{%
\begin{tabular}{c}
\textbf{Jaehyeon Park}\affilmark{1}
\quad
\textbf{Mingu Kang}\affilmark{2}
\quad
\textbf{Dongseok Lee}\affilmark{3}
\quad
\textbf{Woojin Cho}\affilmark{4}
\\
\textbf{Kookjin Lee}\affilmark{5}
\quad
\textbf{Anthony Gruber}\affilmark{6}
\quad
\textbf{Youngjoon Hong}\affilmark{3}\footnotemark[1]
\quad
\textbf{Noseong Park}\affilmark{2}\footnotemark[1]
\\[0.7em]
\affilmark{1}SolverX,
\quad
\affilmark{2}KAIST,
\quad
\affilmark{3}Seoul National University,
\quad
\affilmark{4}TelePIX
\\
\affilmark{5}Arizona State University,
\quad
\affilmark{6}Sandia National Laboratories
\\[0.3em]
\texttt{jaehyeon.park@solverx.ai, sgn08050@kaist.ac.kr, lorafa@snu.ac.kr,}
\\
\texttt{woojin.py@gmail.com, kookjin.lee@asu.edu, adgrube@sandia.gov}
\\
\texttt{hongyj@snu.ac.kr,  noseong@kaist.ac.kr}
\end{tabular}%
}

\begin{document}

\maketitle

\begingroup
\renewcommand{\thefootnote}{\fnsymbol{footnote}}
\footnotetext[1]{Corresponding Authors}
\endgroup

\begin{abstract}
  Motivated by the success of large language models (LLMs) with broad generalizability and robustness to noisy or unreliable pre-training data, we explore in-context prediction of PDE solution fields using approximate pre-training data. In addition, inspired by the posterior predictive mean-based inference mechanism of the in-context learning in prior-data fitted networks (PFNs), we propose PDE-PFN, a prior-data fitted neural solver that approximates the posterior predictive mean of PDE solutions via in-context learning. PDE-PFN builds on a PFN architecture with self- and cross-attention mechanisms of Transformer and is pre-trained on noisy approximate solutions generated by physics-informed neural networks, serving as diverse but not necessarily exact priors. Through experiments on a range of two-dimensional PDEs, we demonstrate that PDE-PFN achieves empirical generalization across heterogeneous equations, robustness under noisy priors, and zero-gradient-update in-context inference capability. PDE-PFN achieves competitive or improved predictive accuracy relative to the evaluated baselines while supporting flexible context and query representations.
\end{abstract}

\begin{figure*}[ht]
  \centering
  \includegraphics[width=0.85\textwidth]{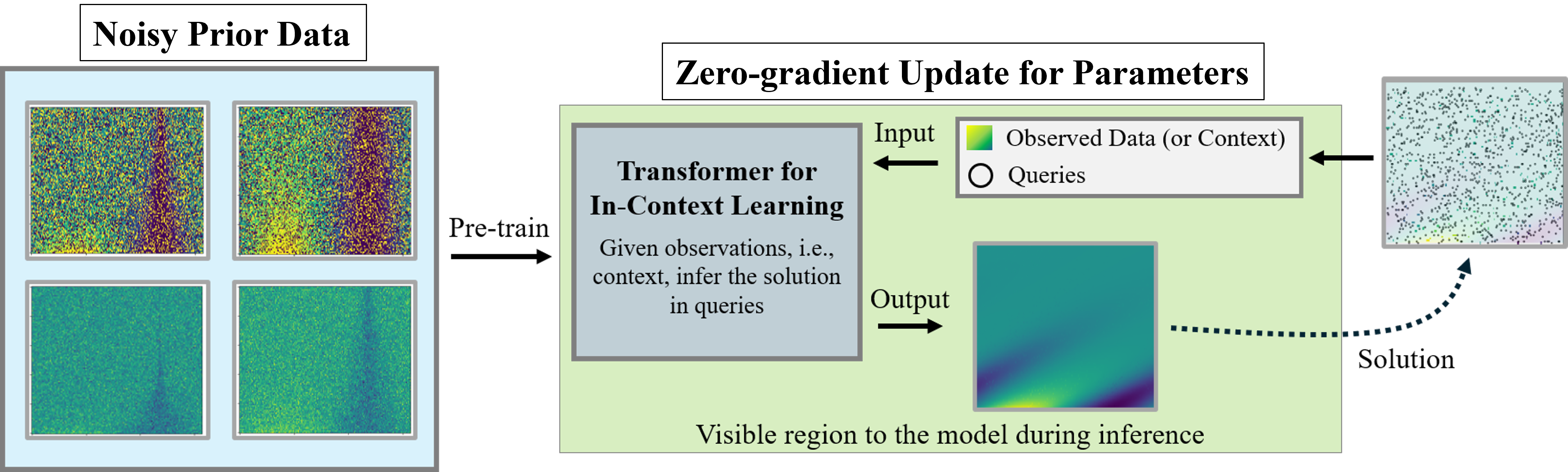}
  \caption{\textbf{The overall workflow of PDE-PFN.} Our model performs in-context learning (ICL) based on given observations (i.e., context) to infer solutions. At inference, PDE-PFN conditions on observations from the target instance without separate operator-level demonstrations or target-instance gradient updates. Even when trained with an approximated PINN prior, our method can improve over noisy approximate pretraining data.}
  \label{fig:Figure1}
  \vskip -0.1 in
\end{figure*}

\section{Introduction}

Partial differential equations (PDEs) have long been at the core of scientific and engineering research, as they provide a mathematical description of the fundamental laws governing diverse natural and engineered systems. With recent advances in machine learning, scientific machine learning (SciML)~\citep{raissi2019physics,willard2022integrating,subramanian2023towards,kim2024simple,choi2024spectral} has emerged as a promising paradigm for solving PDE-related problems.

Within the trend of SciML, there are two conventional approaches: physics-informed neural network (PINN)-based models~\citep{raissi2019physics,shukla2020physics,yang2021b,meng2020ppinn,yuan2022pinn} and operator-based models~\citep{li2021fourier,lu2021learning,tran2023factorized,guibas2021efficient}. PINNs embed a given physical law into a model by minimizing PDE residuals during training, without requiring large labeled datasets. Neural operator-based models learn mappings between infinite-dimensional function spaces given a target operator. While they are mostly general, some operator models require a fixed grid (or a pre-specified coordinate system).

In recent years, large language models (LLMs) have revolutionized natural language processing by introducing highly flexible and scalable architectures~\citep{brown2020language, kaplan2020scaling, touvron2023llama, frieder2023mathematical, chowdhery2023palm}. Motivated by the success of LLMs in enabling broad task generalization, scientific foundation models (SFMs)~\citep{mccabe2023multiple,yang2023context,yang2024pde,hang2025unisolver} have recently been proposed. These models are typically pre-trained on a wide range of PDEs and subsequently fine-tuned for diverse downstream tasks, extending their applicability beyond the narrow scope of conventional neural operators.

Although SFMs represent a meaningful advance by enabling generalization across different PDEs, they continue to inherit other fundamental drawbacks from their backbones. In particular, they lack robustness to noisy input and impose input rigidity, often constraining data to uniform grids or fixed spatial locations. To address these limitations, we draw inspiration from prior-fitted networks (PFNs)~\citep{muller2022transformers}, which implicitly utilizes the posterior predictive distribution (PPD) conditioned on prior information. Building on this idea, we propose \textbf{PDE-PFN}, a new SciML method that adopts a PFN architecture grounded in in-context learning. PDE-PFN uses PFN-style prior fitting to learn context-conditioned predictions, with experiments evaluating within-family task and coefficient generalization, cross-PDE transfer after fine-tuning, and flexible context/query inputs. The PPM perspective characterizes the population target of the MSE objective, while robustness to imperfect priors is evaluated empirically and supported theoretically only under the stated noise assumptions.

\paragraph{Within-family generalization and cross-PDE transfer}
We first evaluated the proposed method on the convection–diffusion–reaction (CDR) equations, a standard benchmark for parameterized PDEs. Our model exhibited strong task generalizability, successfully handling diverse tasks across both parameter and spatiotemporal domains without requiring task-specific fine-tuning. In addition, the CDR-pre-trained model demonstrated cross-PDE transfer after fine-tuning on downstream datasets using a reduced training schedule. Across the evaluated settings, PDE-PFN achieved competitive or improved predictive accuracy relative to task-specific baselines and fine-tuned scientific foundation models.

\paragraph{Flexibility on input shape without prior physical knowledge}
Our method advances SciML by removing two key constraints that limit existing approaches. First, it predicts solutions directly from observed quantities, such as velocity and pressure, without requiring access to symbolic equation information at inference time. This feature is particularly important in real-world applications where the underlying equations are incomplete or entirely unknown~\citep{chien2012manufacturing,rouf2021stock,beck2021perspective,nicolaou2023data,lee2024grid}. Excluding explicit equations from the input is thus an intentional design choice that broadens applicability across diverse domains.
Second, in complex systems such as semiconductor manufacturing, observational data are often collected from irregularly distributed sensors rather than uniform grids~\citep{myers2000improving,quirk2001semiconductor,chien2012manufacturing}. In such settings, the ability to handle inputs of arbitrary shape is essential. Our model accepts variable-sized sets of coordinate–value observations and allows target coordinates to be specified as queries.
Together, these properties support flexible prediction from pointwise observations without requiring a fixed observation grid. A detailed comparison with existing baselines is provided in Table~\ref{table:baseline_comparison} of Appendix~\ref{app:Baselines}. 

\paragraph{Robustness to noisy prior data (or pre-training data)}
For LLMs, one of the most challenging steps is collecting prior data, which typically involves crawling and cleaning sentences from the Internet. However, this process is far from perfect due to two key issues: (i) the Internet, as a data source, is inherently unreliable; and (ii) cleaning such vast amounts of data requires significant manual effort. Consequently, LLMs are often trained on incomplete or imperfect prior data but show surprising outcomes. This realistic yet critical issue has been largely overlooked in the literature on SFMs, despite their similarities to LLMs. In this work, we investigate PFN-style pre-training on approximate PINN solutions and demonstrate strong predictive performance in the evaluated noisy-prior settings.

\paragraph{Zero-gradient-update in-context inference} 
Within the CDR evaluations, a single pre-trained PDE-PFN checkpoint predicts query values from observed context without task-specific gradient updates or fine-tuning. 
The key property is that PDE-PFN adapts to the target PDE instance only through in-context conditioning. 
Compared with in-context operator-learning approaches such as ICON-LM~\citep{yang2025fine}, which require separate task/operator-level ``demos''\footnote{In ICON and ICON-LM, a demo refers to a set of (input, output) pairs of an operator to be inferred.} before prediction, PDE-PFN uses only available observations as the conditioning context and does not require an additional demonstration set or parameter update. 
This makes the inference procedure suitable for settings where irregularly sampled observations are available but retraining or collecting separate demos for each target operator is impractical.


\section{Background}\label{sec:back}
Consider a sequence of pairs $(c_{1},y_{1}),$ $(c_{2},y_{2}),$ $\cdots, $ $(c_{n},y_{n})$, each within the measurable space $(\mathcal{C} \times \mathcal{Y}, \mathcal{A})$, where $c_i$ represents the spatiotemporal coordinate, $y_i$ denotes the corresponding solution, n is data size, and $\mathcal{A}$ denotes the Borel $\sigma$-algebra on the measurable space $\mathcal{C} \times \mathcal{Y}$. These pairs are drawn from a family of probability density distributions $\{p_q : q \in \mathcal{Q}\}$, where $\mathcal{Q}$ represents the \textit{parameter space}, equipped with a $\sigma$-algebra $\mathcal{B}$, ensuring that the mappings $q \mapsto p_q(c, y)$ are measurable. The true parameter $\pi$ is an element of $\mathcal{Q}$, and the pairs $(c_i, y_i)$ are sampled according to $p_\pi$.
Lacking the information on $\pi$, we adopt the following posterior predictive inference framework, which is also used to describe the in-context learning capability of LLMs~\citep{xie2022an}, to establish a prior distribution $\Pi$ which is defined as a probability measure on $(\mathcal{Q}, \mathcal{B})$. Then we have 
\begin{equation}
\Pi(A \mid c,y) = \frac{\int_A p_q(c,y) \mathrm{d}\Pi(q)}{\int_\mathcal{Q} p_q(c,y)\mathrm{d}\Pi(q)},
\end{equation}
for any measurable set $A \in \mathcal{B}$. This prior is updated with the observed data to form the posterior distribution:
\begin{equation}
\Pi(A \mid D_n) = \frac{\int_A L_n(q) \mathrm{d}\Pi(q)}{\int_\mathcal{Q} L_n(q) \mathrm{d}\Pi(q)},
\end{equation}
where $L_n(q) = \prod_{i=1}^n \frac{p_q(c_{i},y_{i})}{p_\pi(c_{i},y_{i})}$ for $A \subset \mathcal{Q}$ and $D_n = \{(c_i, y_i)\}_{i=1}^{n}$. 
The resulting posterior is
\begin{equation}
q_n(c,y \mid D_n) = \int_\mathcal{Q} p_q(c,y) \mathrm{d}\Pi(q \mid D_n),
\end{equation}
and the posterior predictive distribution (PPD) is written as
\begin{equation}
q_n(y \mid c, D_n) = \int_\mathcal{Q} p_q(y \mid c) \, \mathrm{d}\Pi(q \mid D_n).
\end{equation}
As noted by \citep{nagler2023statistical,walker2003sufficient,walker2004modern,walker2004new,de2013bayesian}, for a well-behaved prior, the PPD converges toward true $\pi$ as the data size $n$ increases. This aligns with findings by \citet{de2013bayesian}, demonstrating that in well-specified scenarios, strong consistency is achieved as
\begin{equation}\label{eq:consistency}
\Pi^n \left(\{ q : H(p_\pi, p_q) > \epsilon \}\right) \to 0 \quad \text{almost surely}, 
\end{equation}
for any $\epsilon > 0$,
where $\Pi^n(A) = \int_A d\Pi(q \mid D_n)$ is the posterior measure and $H$ is the Hellinger distance: 
\begin{equation}
H(p, q) = \left(\int_{\mathcal{C}\times\mathcal{Y}} (\sqrt{p} - \sqrt{q})^2 \,\right)^{1/2}.
\end{equation}

\begin{theorem}\label{thm:neural_sensitivity}
There exists a neural network $p_{\hat{\theta}}$ that approximates the posterior predictive distribution $p_\pi(\cdot \mid c)$ (See Appendix~\ref{app:proof} for the full Theorem claim and its proof).
\end{theorem}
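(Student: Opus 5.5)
The argument splits the error between the network and the true conditional $p_\pi(\cdot\mid c)$ into a \emph{statistical} part and an \emph{approximation} part:
\begin{equation*}
\mathrm{dist}\bigl(p_{\hat\theta}(\cdot\mid c,D_n),\,p_\pi(\cdot\mid c)\bigr)\le \mathrm{dist}\bigl(p_{\hat\theta}(\cdot\mid c,D_n),\,q_n(\cdot\mid c,D_n)\bigr)+\mathrm{dist}\bigl(q_n(\cdot\mid c,D_n),\,p_\pi(\cdot\mid c)\bigr).
\end{equation*}
We would take $\mathrm{dist}$ to be the Hellinger distance, or total variation, since TV and Hellinger are comparable. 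The statistical term is handled by posterior consistency, and the approximation term by a universal approximation argument for set-conditioned networks (the PFN Transformer). Throughout we assume the well-specified setting $\pi\in\mathcal Q$ and a prior satisfying the conditions of \citet{de2013bayesian}, so that \eqref{eq:consistency} holds.

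\textbf{Step 1 (PPD consistency).} Let $\epsilon>0$ and set $A_\epsilon=\{q:H(p_\pi,p_q)>\epsilon\}$. By convexity of $H^2$, the joint predictive satisfies
\begin{equation*}
H^2\bigl(q_n,p_\pi\bigr)\le\int H^2(p_q,p_\pi)\,d\Pi(q\mid D_n)\le \epsilon^2+2\,\Pi^n(A_\epsilon).
\end{equation*}
By \eqref{eq:consistency}, the right-hand side tends to $\epsilon^2$ almost surely. To pass from joints to conditionals, we would assume that all models share the same marginal on $c$, for instance a fixed sampling design. Under that assumption the joint Hellinger distance is an average over $c$ of conditional Hellinger distances, which gives convergence in $L^2$ over $c$ of $q_n(\cdot\mid c,D_n)$ to $p_\pi(\cdot\mid c)$.

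\textbf{Step 2 (PFN training targets the PPD).} Next we would recall the PFN argument of \citet{muller2022transformers}. The prior-data loss $\mathbb E_{q\sim\Pi,\,D\cup\{(c,y)\}\sim p_q}[-\log p_\theta(y\mid c,D)]$ equals the expected $\mathrm{KL}(q_n(\cdot\mid c,D)\,\|\,p_\theta(\cdot\mid c,D))$ plus a constant independent of $\theta$. Its infimum over all measurable maps is therefore attained at $p_\theta=q_n$. For the MSE variant, the population minimizer is instead the posterior predictive mean.

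\textbf{Step 3 (approximation).} We would use universal approximation of permutation-invariant set functions by attention-based architectures, for example via a Deep Sets or Transformer approximation theorem on compact domains. This gives, for each $\delta$, parameters $\hat\theta$ with expected KL below $\delta$ relative to $q_n$, uniformly over $n\le N$. Pinsker's inequality converts this into a TV bound, and TV in turn controls Hellinger. Combining Steps 1 and 3 through the triangle inequality yields, for $n$ large and $\hat\theta$ chosen as above, error at most $\epsilon+\sqrt{\delta}$ up to constants. This is the existence statement.

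The main obstacle is Step 3. The map $(c,D_n)\mapsto q_n(\cdot\mid c,D_n)$ has a variable-size set input and a density-valued output, and it is continuous only under regularity assumptions. We would first restrict to compact $\mathcal C\times\mathcal Y$ and a bounded maximal context size, and discretize $y$ into a Riemann/bar distribution as PFNs do, which reduces the output to finitely many logits. We would then assume the logits of $q_n$ are continuous in $(c,D_n)$, which follows if $p_q$ is continuous and bounded below. With these reductions, standard universal approximation of continuous permutation-invariant functions applies.
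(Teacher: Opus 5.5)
Your triangle-inequality decomposition and Step 1 are essentially the paper's proof. The paper also applies the Hellinger triangle inequality through $q_n(\cdot\mid c,D_n)$. It bounds the network term by $\sqrt{\tfrac12\mathbb{E}_c[\mathrm{KL}]}$ using $H^2\le\tfrac12\mathrm{KL}$ and Jensen. For the statistical term it uses exactly the common-marginal hypothesis $p_q(c)=p_\pi(c)$ you single out to pass to joint densities, moves $\Pi^n$ outside the square root by concavity (your convexity of $H^2$), and splits over $\{q:H(p_\pi,p_q)>\epsilon\}$ before invoking \eqref{eq:consistency}. Your bound $\epsilon^2+2\,\Pi^n(A_\epsilon)$ is a slightly cleaner form of that chain.

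The genuine difference is Step 3; Step 2 is not needed for the existence claim. The paper never proves the approximation property. Its full theorem \emph{assumes} that for every $\epsilon$ some Transformer satisfies $\mathbb{E}_c[\mathrm{KL}(p_{\hat\theta}(\cdot\mid c,D_n),q_n(\cdot\mid c,D_n))]<\epsilon$ for every realization of $D_n$. It uses this with the same $\hat\theta$ for every $n$, which is what lets it fix the network and send $n\to\infty$.

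Your route tries to earn the existence claim rather than assume it, which fits the informal statement better, but the result is weaker in form. With context size capped at $N$, the network depends on $n$: you choose $n$ so the consistency term is small, then $N\ge n$, then $\hat\theta$. So you get ``for every tolerance, some $n$ and some network,'' not one network whose error vanishes as $n\to\infty$. Standard set-function approximation results do not supply the paper's uniform-in-$n$ hypothesis, because $q_n$ depends on $\sum_i\log p_q(c_i,y_i)$ and keeps sharpening as $n$ grows. In effect, your Step 3 shows how much work that hypothesis is doing.

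If you keep Step 3, three details need repair.
\begin{itemize}
\item The bar discretization adds a third term $H(\bar q_n,q_n)$ between the binned and the continuous predictive. Controlling it requires equicontinuity of $p_q(\cdot\mid c)$ in $y$, uniformly in $q$ and $c$.
\item Continuity of $(c,D_n)\mapsto q_n(\cdot\mid c,D_n)$ by dominated convergence needs a uniform upper bound on $p_q$, not only a lower bound.
\item Going from KL to TV by Pinsker and then back to Hellinger via $H\le\sqrt{2\,\mathrm{TV}}$ gives $\delta^{1/4}$ rather than $\sqrt{\delta}$. Use the direct Hellinger--KL bound, as the paper does; the loss is harmless for existence since $\delta$ is arbitrary.
\end{itemize}
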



Theorem~\ref{app:neural_sensitivity} demonstrates the existence of the neural network $p_{\hat{\theta}}$ that approximates the PPD. While this result justifies learning the full PPD, modeling the entire distribution can be computationally intractable in practice. Since our goal is deterministic PDE prediction, we instead focus on learning the posterior predictive mean (PPM) in the next section. 


\begin{figure*}
  \centering
  \includegraphics[width=0.8\textwidth]{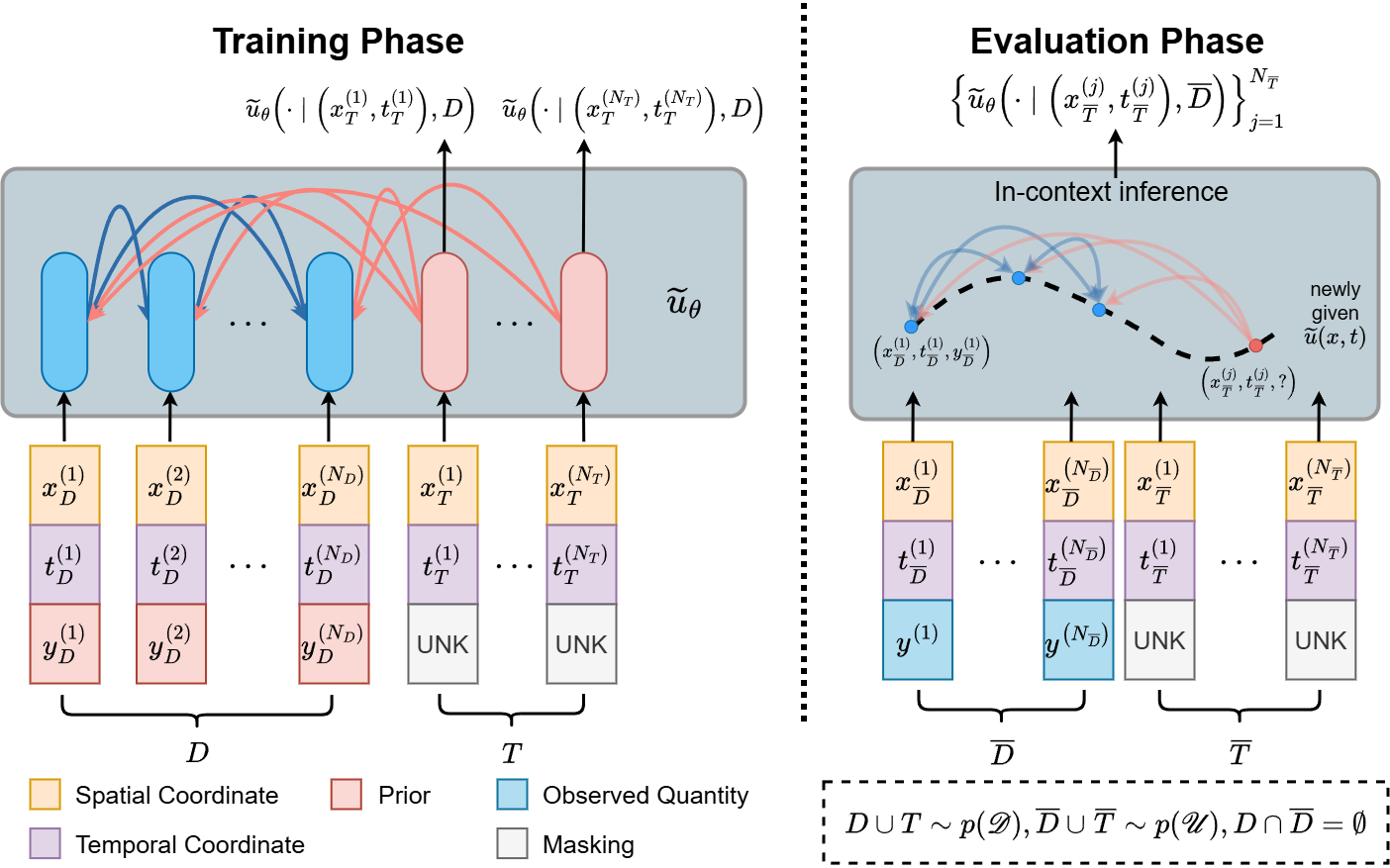}
  \caption{\textbf{A schematic diagram of our model.} \textit{(Left)} Given the contexts $D$ and queries $T$ drawn from the prior distribution $\mathcal{D}$, our model $\Tilde{u}_{\theta}$ is trained to infer the solutions of the queried points $T$ in the training phase. ICL is leveraged with self-attention (blue) within $D$ and cross-attention (red) from $T$ to $D$. \textit{(Right)} In the testing phase, the pre-trained $\Tilde{u}_{\theta}$ takes contexts $\overline{D}$ and queries $\overline{T}$ drawn from the ground truth distribution $\mathcal{U}$, and predicts for the queried points $\overline{T}$.}
  \label{fig:Figure2}
\end{figure*}

\section{Methods} \label{S:Method}
In the Background section, we denoted the spatiotemporal coordinate as $c$ for integration. However, for more clarity, as shown in Figure 2, we separate the spatiotemporal coordinate $c$ into a spatial coordinate $x$ and a temporal coordinate $t$, i.e., $c=(x,t)$. Let $\alpha$ denote a parameter vector representing the coefficients of the governing PDE dynamics. Figure~\ref{fig:Figure2} illustrates the schematic diagram of our model. As defined in Figure~\ref{fig:Figure2}, $D = \{(x_D^{(i)},t_D^{(i)}, y_D^{(i)})\}_{i=1}^{N_{D}}$ and $T = \{(x_T^{(i)},t_T^{(i)}, y_T^{(i)})\}_{i=1}^{N_{T}}$ represent the context set and the query set for ICL, respectively. We assume these sets are independently and identically sampled from a family of probability density distributions $\{p_{\boldsymbol{\alpha}} : \boldsymbol{\alpha} \in \mathcal{Q}\}$. Here, each $y$ corresponds to the quantity at $(x,t)$ for a PDE solution $u(\boldsymbol{\alpha})$ determined by the PDE parameter $\boldsymbol{\alpha}$. Depending on the problem setting, $u$ can be either the ground-truth solution or an approximated solution generated by a PINN, since we consider both exact and approximated prior data. The PPD of the solutions given the context set is written as $\pi(y \mid x,t, D) = \int_{\mathcal{Q}} p_{\boldsymbol{\alpha}} (y \mid x,t) \, \mathrm d\Pi (\boldsymbol{\alpha} \mid D)$ and its posterior predictive mean (PPM), which is our training goal, as
\begin{equation}\label{eqs:ppd} 
\mathbb{E}[Y|x,t,D] = \int_Y y\:\pi(y \mid x,t, D) \, \mathrm dy. 
\end{equation}
For categorical outputs, PFNs can use cross-entropy to approximate posterior predictive class probabilities~\citep{xie2022an,muller2022transformers}. In this work, because PDE-PFN is a deterministic regressor, we minimize the MSE between the predicted solution and the target solution.

\begin{theorem}
Under squared loss, the population-optimal deterministic predictor is the posterior predictive mean conditioned on the available context. Even with the unbiased target noise, the same population minimizer equals the clean-solution PPM conditioned on the noisy context. (See Appendix~\ref{app:proof} for the full Theorem claim and its proof).
\end{theorem}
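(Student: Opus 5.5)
The plan is to treat both claims as instances of the $L^2$ projection characterization of conditional expectation. I set up a joint probability space on which the PDE parameter $\boldsymbol{\alpha}\sim\Pi$, the context $D$, the query coordinate $(x,t)$, and the target $Y$ are defined. I assume $\mathbb{E}[Y^2]<\infty$. A deterministic predictor is any measurable map $f(x,t,D)$, and its population risk is $R(f)=\mathbb{E}\bigl[(Y-f(x,t,D))^2\bigr]$.

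\textbf{Clean case.} Write $m(x,t,D)=\mathbb{E}[Y\mid x,t,D]$. Expanding the risk gives
\begin{equation*}
R(f)=\mathbb{E}\bigl[(Y-m)^2\bigr]+\mathbb{E}\bigl[(m-f)^2\bigr]+2\,\mathbb{E}\bigl[(Y-m)(m-f)\bigr].
\end{equation*}
The cross term vanishes by the tower property, since $m-f$ is $\sigma(x,t,D)$-measurable and $\mathbb{E}[Y-m\mid x,t,D]=0$. Hence $R(f)\ge R(m)$, with equality iff $f=m$ almost surely. It then remains to identify $m$ with the PPM in \eqref{eqs:ppd}. The only probabilistic input is conditional independence: given $\boldsymbol{\alpha}$, the context and query points are i.i.d. from $p_{\boldsymbol{\alpha}}$. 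It follows that the conditional law of $Y$ given $(x,t,D)$ is $\int p_{\boldsymbol{\alpha}}(y\mid x,t)\,\mathrm d\Pi(\boldsymbol{\alpha}\mid D)=\pi(y\mid x,t,D)$, by Bayes' rule applied to the joint density. Taking its mean gives \eqref{eqs:ppd}.

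\textbf{Noisy case.} Make the noise assumption explicit. Training targets are $\tilde Y=Y+\varepsilon$, where $Y=u(\boldsymbol{\alpha})(x,t)$ is the clean solution value. The context is a noisy version $\tilde D$, and unbiasedness means $\mathbb{E}[\varepsilon\mid Y,x,t,\tilde D]=0$, i.e.\ the target noise is conditionally mean-zero given the information the predictor sees. I also assume finite variance. The predictor can only use $(x,t,\tilde D)$, so the first step, applied with $\tilde Y$ in place of $Y$, shows the population minimizer is $\mathbb{E}[\tilde Y\mid x,t,\tilde D]$. By linearity and the tower property this equals
\begin{equation*}
\mathbb{E}[Y\mid x,t,\tilde D]+\mathbb{E}\bigl[\mathbb{E}[\varepsilon\mid Y,x,t,\tilde D]\mid x,t,\tilde D\bigr]=\mathbb{E}[Y\mid x,t,\tilde D],
\end{equation*}
which is the clean-solution PPM conditioned on the noisy context. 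As a bonus, the excess risk decomposes as $R(f)=\mathbb{E}[\varepsilon^2]+\mathbb{E}[(Y-f)^2]$, using the same vanishing cross term. This shows the noise adds only a constant and does not change the minimizer.

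\textbf{Expected obstacle.} The calculus is routine. The delicate part is stating the noise hypothesis precisely enough that the cross term $\mathbb{E}[\varepsilon\,(Y-f)]$ vanishes. Mean-zero noise alone is not sufficient: I need $\mathbb{E}[\varepsilon\mid Y,x,t,\tilde D]=0$, since noise correlated with the context would bias the minimizer. A second subtlety is justifying the Bayes-rule identification of the conditional law with $\pi(y\mid x,t,\tilde D)$ when the context is noisy. Here I would define the posterior $\Pi(\cdot\mid\tilde D)$ through the noisy likelihood $\int p_{\boldsymbol{\alpha}}(\tilde y\mid y)\dots$, assuming densities exist, and otherwise fall back on regular conditional distributions on the standard Borel space $\mathcal{C}\times\mathcal{Y}$.
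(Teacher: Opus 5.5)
Your proposal is correct and follows the paper's proof: the paper also conditions on $(c,\widetilde{D}_n)$, applies the bias--variance decomposition to get the conditional mean as the pointwise minimizer, and removes the query noise via $\mathbb{E}[\eta_c\mid c,\widetilde{D}_n]=0$, which it derives from its independence assumptions and which your conditional mean-zero hypothesis slightly generalizes. Your extra Bayes-rule identification with \eqref{eqs:ppd} goes beyond what the paper proves, and it silently requires the coordinate law to be independent of $\boldsymbol{\alpha}$ (as assumed in Theorem~\ref{app:neural_sensitivity}); otherwise the posterior would also condition on the query coordinate $(x,t)$.
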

As shown in Theorem~\ref{thm:mse_ppm}, this objective corresponds to estimating the PPM induced by the prior over PDE solution fields, rather than the full PPD. The theorem also clarifies that, under unbiased target noise, the MSE-optimal deterministic predictor remains the clean-solution conditional mean given the noisy context.


\paragraph{Model architecture} 
Since our model is essentially a prior-data fitted network (PFN), we construct its architecture by modifying certain components of the original PFN. To better handle the more complex PDE data, we first incorporate a Fourier feature embedding into the input data before it is passed to the encoder, extending the original design that was limited to tabular data. We also introduce several modifications to the Transformer architecture. Specifically, to improve the model’s ability to learn from complex data while reducing computational complexity and enhancing representational capacity, we adopt the attentive graph filter (AGF)~\cite{wi2025learning} structure for Transformer layers. A more detailed description of the model architecture can be found in Appendix~\ref{app:model_architecture}.

\paragraph{Benchmark PDEs}
We consider convection-diffusion-reaction (CDR) equations as benchmark PDEs for diverse tasks\footnote{We note that in Eq.~\eqref{eqs:governing_equation} $J$ reaction terms are considered. In previous works~\citep{cho2023hypernetwork, cho2024parameterized, kang2024can}, only one type of reaction term is considered. Eq.~\eqref{eqs:governing_equation} therefore defines a parameterized CDR prior with three selected nonlinear reaction terms for PFN training. Our experiments differ from previous works in this regard.}:
\begin{equation}
\label{eqs:governing_equation}
\begin{aligned}
    u_t = \mathcal{N}(\cdot), \;\;\;  \mathcal{N}(t, x, u, \boldsymbol{\beta}, \boldsymbol{\nu}, \rho_1, \cdots, \rho_J) = -\boldsymbol{\beta} \cdot \nabla u + \nabla \cdot (\boldsymbol{\nu} \nabla u) + \sum_{j=1}^{J} \rho_{j} f_j(u),
\end{aligned}
\end{equation}
where $u$ denotes the state variable; $\boldsymbol{\beta}$ is the convection (advection) velocity; $\boldsymbol{\nu}$ denotes the anisotropic diffusivity coefficient; and $\rho_1, \rho_2, \rho_3$ are scalar coefficients corresponding to the nonlinear reaction terms $f_1, f_2, f_3: \mathbb{R}\rightarrow\mathbb{R}$, respectively. For clarity, we will refer to the two-dimensional CDR equations, including cases where certain coefficients are zero, as the family of CDR equations. This family encompasses not only generic parabolic equations but also extends to hyperbolic equations such as convection equations. Hence, the family of CDR equations contains three key terms—convection, diffusion, and reaction—each with distinct properties, making it an ideal benchmark problem. To the best of our knowledge, this work is the first to address a wide range of reaction terms within the family of CDR equations using a single unified model. We use a set of CDR-related terms and a linear combination of three nonlinear reaction terms to generate prior data. This formulation enables the incorporation of diverse reaction dynamics, making the benchmark problems more comprehensive.

\paragraph{PINN-based approximations of PDE solution space} Although numerical solvers can solve the family of CDR equations, in this work, we use the predictions from PINNs as a representative example of noisy approximated data. This allows us to evaluate the robustness of our model to noise, which is commonly encountered in real-world scenarios. From this point, we refer to these PINN-based approximations as PINN priors. To approximate the solution space for the PDEs, we construct a practical parameter subspace, $\Omega \subseteq \mathcal{Q}$, which is the collection of coefficients in Eq.~\eqref{eqs:governing_equation} and has a dictionary form $\Omega = \{ \boldsymbol{\alpha} := (\boldsymbol{\beta}, \boldsymbol{\nu}, \rho_1, \cdots, \rho_{J}) \}$. Consequently, the target exact prior $\mathcal{U}$ defined below represents the collection of solutions $u(\boldsymbol{\alpha})$ at Eq.~\eqref{eqs:governing_equation} for each parameter $\boldsymbol{\alpha} \in \Omega$, where $\mathcal{X}$ and $\mathcal{T}$ correspond to the spatial and temporal domains of interest, respectively.
\begin{equation}
\begin{aligned}
\mathcal{U} = \bigcup_{\boldsymbol{\alpha} \in \Omega} \{ u(\boldsymbol{\alpha}) \,|\, u_t = \mathcal{N}(t, x, u, \boldsymbol{\alpha}) \}, \;\;\;\; u: \mathcal{X} \times \mathcal{T} \rightarrow \mathbb{R}.
\end{aligned}
\end{equation}
Since the target exact prior data $\mathcal{U}$ is hard to obtain, we instead use a PINN prior $\mathcal{D}$ that closely approximates $\mathcal{U}$ as follows. Suppose $\Tilde{u}(\boldsymbol{\alpha})$ is the prediction by PINN, which is trained to predict the PDE $u_t = \mathcal{N}(\cdot)$ (see Appendix \ref{app:PINN} for details). The PINN prior $\mathcal{D}$ is a collection of approximate solutions $\Tilde{u}(\boldsymbol{\alpha})$ for each $\boldsymbol{\alpha} \in \Omega$,
\begin{equation}
\begin{aligned}
\mathcal{D} = \bigcup_{\boldsymbol{\alpha} \in \Omega} \{ \tilde{u}(\boldsymbol{\alpha}) \}, \;\;\;\; p(\mathcal{D}) \sim p(\mathcal{U}).
\end{aligned}
\end{equation}
Subsequently, the model performs the posterior predictive mean inference conditioned on the generated prior $p(\mathcal{D})$ through ICL.

\paragraph{Objective function} From a given parameter space $\Omega$, the parameter $\boldsymbol{\alpha}$ is randomly drawn i.i.d. from $\Omega$. After this, the corresponding prior $\Tilde{u}(\boldsymbol{\alpha})$ is given as an input to our model $\Tilde{u}_{\theta}$ to minimize the MSE at the queried points, as expressed in Eq.~\eqref{eqs:train_loss1}, which leads to the posterior predictive mean inference~\citep{muller2022transformers} --- in other words, the MSE loss is enough to approximate the mean of Eq.~\eqref{eqs:ppd} without knowing its uncertainty. The specific MSE loss chosen to regress the solution over the spatial and temporal domain for a given $\Tilde{u}(\boldsymbol{\alpha})$ is defined as 
\begin{equation}
\mathcal{L}_{\boldsymbol{\alpha}} = \frac{1}{N_T} \sum_{j=1}^{N_T} \left[ \tilde{u}_{\theta}(x_T^{(j)}, t_T^{(j)} \mid D) - \tilde{u}(\boldsymbol{\alpha})(x_T^{(j)}, t_T^{(j)})\right]^2.
\label{eqs:train_loss1}
\end{equation}
In addition, we introduce a regularization term into the objective function to ensure orthogonality during the SVD process within the AGF layer (see Appendix~\ref{app:model_architecture} for detail). We denote this term as $\mathcal{L}_{AGF}$. To balance the trade-off between the loss and the regularization, we scale the latter by the number of context points $N_D$. Consequently, the final objective function $\mathcal{L}$ is expressed as
\begin{equation}
\mathcal{L} = \mathcal{L}_{\boldsymbol{\alpha}} + \frac{1}{N_D} \mathcal{L}_{AGF}.
\label{eqs:train_loss2}
\end{equation}
The concrete flow of the training and the evaluation process is described in Appendix~\ref{app:Training Algorithm} and~\ref{app:Evaluation}.


\section{Experiments} \label{S:Experiments}

Our experimental section is divided into three phases. In the first phase, we introduce the dataset and establish the baselines for our experiments. In the second phase, we pre-train our model on the family of 2D CDR equations and evaluate it on a variety of tasks without any additional task-specific fine-tuning. This allows us to examine the model’s ability to handle diverse tasks. In the third phase, we fine-tune our model on different PDEs, including the shallow water equations (SWE) and compressible Navier-Stokes equations (CNSE), to assess cross-PDE transfer after fine-tuning. Table~\ref{table:Roadmap} summarizes the experiments in this section.

\begin{table*}[htbp]
\captionsetup{skip=1.5pt}
\caption{Roadmap of the experiment section. `Inter' and `Extra' denote spatiotemporal interpolation and temporal extrapolation tasks, respectively. The `Coeff/dataset' column specifies either the coefficients of the equation or the dataset used in evaluation. \label{table:Roadmap}}
\small
\centering
\resizebox{0.9\textwidth}{!}{%
\begin{tabular}{lllll}
\toprule
\textbf{Section}       & \textbf{Task}    &    \textbf{Coeff/Dataset}    &   \textbf{Equation}  &   \textbf{Objective}  \\ 
\midrule
\ref{SSS:CDR_seen}    & Inter \& Extra    & Seen        &  \multirow{2}{*}{Family of 2D CDR equations} & Within-family task generalization \\
\ref{SSS:CDR_unseen}  & Inter \& Extra    & Unseen      &    &  Within-family coefficient generalization \\
\midrule
\ref{SSS:SWE}         & Inter \& Extra    & Seen \& Unseen  &  SWE   &  Cross-PDE transfer after fine-tuning \\
\ref{SSS:CNSE}        & Operator Learning & Unseen          &  CNSE  &  Cross-PDE transfer after fine-tuning  \\
\bottomrule
\end{tabular}%
}
\end{table*}


\subsection{Experimental Setup}
\label{SS:Experimental_setup}

\paragraph{Dataset}
We consider a family of 2D CDR equations with three reaction terms as the pre-training dataset. This family is constructed from equations with coefficient dictionary ($\beta_x, \beta_y, \nu_x, \nu_y, \rho_1, \rho_2, \rho_3 \in \{0.0, 0.5, 1.0\}$), resulting in a total of 2,187 unique PDEs. They are simulated from the same initial condition, which allows differences in the resulting dynamics to stem solely from the variations in coefficients. For PINN prior, each PINN is trained independently, and its predicted solutions are used as noisy priors. On average, the generated PINN priors exhibit an $L_2$ relative error of 0.02280. For cross-PDE transfer experiments, we employ the SWE and CNSE datasets from PDEBench~\citep{takamoto2022pdebench}. These two equations are chosen because they are commonly used to validate PDE solvers in SciML. Both datasets consist of 1,000 samples generated from distinct initial conditions. The SWE dataset contains a single feature, the water height, whereas the CNSE dataset includes $x$- and $y$-velocities, pressure, and density. Detailed information on the datasets can be found in Appendix~\ref{app:datasets}.

\paragraph{Baseline methods} 
We compare our model with six baseline models: deep operator network (DeepONet)~\citep{lu2021learning}, Fourier neural operator (FNO)~\citep{li2021fourier}, factorized Fourier neural operator (F-FNO)~\citep{tran2023factorized}, adaptive Fourier neural operator (A-FNO)~\citep{guibas2021efficient}, latent neural operator (LNO)~\citep{wang2024latent}, Poseidon~\citep{herde2024poseidon}, and auto-regressive denoising pre-training operator Transformer (DPOT)~\citep{hao2024dpot}. These baselines are chosen for their scalability and ability to effectively handle complex 2D PDEs. For Poseidon and DPOT, scientific foundation models with publicly available pre-trained weights, we use the released weights and perform fine-tuning. The other baselines are trained and evaluated separately on each dataset and task. For the spatiotemporal interpolation task, DeepONet is the only baseline used, excluding those that require grid inputs. The best hyperparameters for all models are selected based on their best validation performance in each task.
Detailed information on the baselines used in each experiment can be found in Appendix~\ref{app:Baselines}.


\subsection{Pre-training on the Family of 2D CDR Equations and Evaluation}
\label{SS:CDR}

The training dataset is constructed from 6 time steps within the range $t \in [0.0, 0.5]$ at intervals of 0.1. Our model takes as an input set of contexts, which include domain coordinates paired with their solution values, along with queries, which consist of domain coordinates where the solution values are masked. We denote the version of our model pre-trained on numeric priors as \textbf{Ours} and the version pre-trained on PINN priors as \textbf{Ours (PINN)}. For the baselines, the training procedure differs by task. In the spatiotemporal interpolation task, since DeepONet requires fixed points as branch input, we predefined the fixed points in training data by sampling and the remaining points are used as queries to be predicted. In the temporal extrapolation task, the baselines are trained to take the grid points at a time $t_n$ as input and predict the grid points at the subsequent time step $t_{n+1}$.
Further information on the training procedure, such as the number of epochs, can be found in Appendix~\ref{app:experimental_details}.


\subsubsection{Task Generalization for Seen coefficients}
\label{SSS:CDR_seen}

We first evaluate the task generalization ability in spatiotemporal interpolation and temporal extrapolation without additional task-specific fine-tuning. Both tasks are conducted within the coefficient dictionary used during pre-training, which we refer to as seen coefficients. For the spatiotemporal interpolation task, the baseline DeepONet requires fixed points as branch input. Therefore, our model is also provided with the same set of context points during evaluation to ensure a fair comparison. Specifically, we use the points at $t = 0.25$ as validation queries and the points at $t = 0.05, 0.15, 0.35,$ and $0.45$ as test queries. For the temporal extrapolation task, the points at $t = 0.5$ are used as contexts, and those at $t = 0.6$ as queries for validation. At test time, the points at $t = 0.6$ serve as contexts, and those at $t = 0.7, 0.8, 0.9,$ and $1.0$ are used as queries. The baselines predict a sequence of query points through rollout, whereas our model can directly predict the query points from their coordinates so that it is evaluated in this manner.
The evaluation results are shown in Table~\ref{table:CDR_seen}.

\begin{table*}[ht]
\captionsetup{skip=1.5pt}
\caption{The evaluation results for the spatiotemporal interpolation and temporal extrapolation tasks applied to the family of 2D CDR equations. They are measured at the seen coefficients ($\beta_x, \beta_y, \nu_x, \nu_y, \rho_1, \rho_2, \rho_3 \in \{0.0, 0.5, 1.0\}$). The best performance is marked in \textbf{bold} and the second-best performance is marked with \underline{underline}.}
\label{table:CDR_seen}
\small
\centering
    \centering
    \captionsetup{labelformat=empty}
    \resizebox{1.0\textwidth}{!}{%
    \begin{tabular}{llccccccccc}
    \toprule
    \textbf{Task}  & \textbf{Metric} & \textbf{Ours} & \textbf{Ours (PINN)} &  \textbf{DeepONet} & \textbf{FNO} & \textbf{F-FNO} & \textbf{A-FNO}  & \textbf{LNO}  & \textbf{Poseidon}   & \textbf{DPOT}  \\ 
    \midrule
    \multirow{3}{*}{\shortstack[l]{Spatiotemporal\\interpolation}} & $L_1$ Abs  & \textbf{0.01218}  & \underline{0.01507}  & 0.01922  & \multirow{3}{*}{-} & \multirow{3}{*}{-} & \multirow{3}{*}{-} & \multirow{3}{*}{-} & \multirow{3}{*}{-}  & \multirow{3}{*}{-} \\
     & $L_2$ Rel & \textbf{0.01643}  & \underline{0.02057}  & 0.02479  & & & & & & \\
     & $L_\infty$ Rel & \textbf{0.04600}  & 0.06584  & \underline{0.05980}  & & & & & &  \\
    \midrule
    \multirow{3}{*}{\shortstack[l]{Temporal\\extrapolation}} & $L_1$ Abs  & \textbf{0.01474}  & \underline{0.01940}  & 0.06529 & 0.03302 & 0.02731 & 0.05800 & 0.02876 & 0.14252 &  0.04924 \\
    & $L_2$ Rel & \textbf{0.02261}  & \underline{0.02705}   & 0.08212  & 0.04058 & 0.03430 & 0.07915 & 0.03665 & 0.15580  & 0.08587  \\
    & $L_\infty$ Rel & \textbf{0.08444}  & 0.08802   & 0.22802  & 0.09553 & 0.09877 & 0.28905 & 0.10948 & 0.28285  & \underline{0.08587}  \\
    \bottomrule
    \end{tabular}%
    }
\end{table*}

The evaluation results demonstrate that both versions of our model outperform the baselines on both tasks. This indicates that once trained on a single dataset, our model can achieve strong performance without requiring task-specific fine-tuning. The PINN-prior results support robustness to the approximation errors present in the evaluated pre-training data. Additionally, we conduct experiments where the PINN prior is provided as a context set during evaluation in Appendix~\ref{app:experi_CDR_given_noisy}.

\subsubsection{Task Generalization for Unseen coefficients}
\label{SSS:CDR_unseen}
We further evaluate the task generalization ability on both tasks in parameter space using the family of 2D CDR equations with unseen coefficients, without any additional fine-tuning. The unseen coefficients are categorized into two cases: coefficient interpolation and coefficient extrapolation. Coefficient interpolation experiments are performed on the equations with intermediate coefficient values not seen during pre-training, ($\beta_x, \beta_y, \nu_x, \nu_y, \rho_1, \rho_2, \rho_3 \in \{0.25, 0.75\}$). Coefficient extrapolation experiments, on the other hand, are conducted on the equations with larger coefficient values than those used in pre-training, ($\beta_x, \beta_y, \nu_x, \nu_y, \rho_1, \rho_2, \rho_3 \in \{1.25, 1.5\}$). For evaluation with unseen coefficients in the temporal extrapolation task, we modify the temporal extrapolation setup to remain within the temporal domain used during training: the points at $t = 0.1$ are provided as contexts, and models are asked to predict the values at $t = 0.2, 0.3, 0.4,$ and $0.5$. The evaluation results are summarized in Table~\ref{table:CDR_unseen}.

\begin{table*}[ht]
\captionsetup{skip=1.5pt}
\caption{The evaluation results for the spatiotemporal interpolation and temporal extrapolation tasks applied to the family of 2D CDR equations. They are measured at the unseen coefficients. The best performance is marked in \textbf{bold} and the second-best performance is marked with \underline{underline}.}
\label{table:CDR_unseen}
\small
\centering
    \centering
    \captionsetup{labelformat=empty}
    \resizebox{1.0\textwidth}{!}{%
    \begin{tabular}{lllccccccccc}
    \toprule
    \textbf{Coefficients} & \textbf{Task}  & \textbf{Metric} & \textbf{Ours} & \textbf{Ours (PINN)}  &  \textbf{DeepONet} & \textbf{FNO} & \textbf{F-FNO} & \textbf{A-FNO} & \textbf{LNO} & \textbf{Poseidon}   & \textbf{DPOT}  \\ 
    \midrule
    \multirow{6}{*}{Interpolation} & \multirow{3}{*}{\shortstack[l]{Spatiotemporal\\interpolation}} & $L_1$ Abs  & \textbf{0.01241}  & \underline{0.01277}  & 0.01347  & \multirow{3}{*}{-} & \multirow{3}{*}{-} & \multirow{3}{*}{-} & \multirow{3}{*}{-}  & \multirow{3}{*}{-} & \multirow{3}{*}{-} \\
    & & $L_2$ Rel & \textbf{0.01658}  & 0.01761  & \underline{0.01731}  &  &  &  &  &  &   \\
    & & $L_\infty$ Rel & \underline{0.04274}  &  0.05503  & \textbf{0.04114}  &  &  &  &  &  &   \\
    \cmidrule{2-12}
    & \multirow{3}{*}{\shortstack[l]{Temporal\\extrapolation}} & $L_1$ Abs  & \underline{0.08957}  & \textbf{0.08709}  & 0.11918 & 0.11625 & 0.11194 & 0.12220 & 0.11085  & 0.17740 &  0.13879 \\
    & & $L_2$ Rel  & \textbf{0.11990}  & \underline{0.12465}  & 0.14903  & 0.14658 & 0.14348 & 0.18191 & 0.14913 & 0.21340  & 0.15085  \\
    & & $L_\infty$ Rel  & \underline{0.29526}  &  0.36472 & 0.35196  & \textbf{0.28689} & 0.30813 & 0.64429 & 0.33280 & 0.39890  & 0.15085  \\ 
    \midrule
    \multirow{6}{*}{Extrapolation} & \multirow{3}{*}{\shortstack[l]{Spatiotemporal\\interpolation}} & $L_1$ Abs  & \underline{0.01661}  & \textbf{0.01582}  & 0.03576  & \multirow{3}{*}{-} & \multirow{3}{*}{-} & \multirow{3}{*}{-} & \multirow{3}{*}{-}  & \multirow{3}{*}{-} & \multirow{3}{*}{-} \\
    & & $L_2$ Rel & \underline{0.02398}  & \textbf{0.02038}  & 0.04669   &  &  &  &  &  &   \\
    & & $L_\infty$ Rel & \underline{0.08324}  & \textbf{0.06890}  & 0.10392   &  &  &  &  &  &  \\
    \cmidrule{2-12}
    & \multirow{3}{*}{\shortstack[l]{Temporal\\extrapolation}} & $L_1$ Abs  & \textbf{0.01027}  & \underline{0.02063}  & 0.05514 & 0.02652 & 0.02854 & 0.05013  & 0.05076 & 0.14661 &  0.05186 \\
    & & $L_2$ Rel  & \textbf{0.01068}  & \underline{0.02169}   & 0.06450  & 0.03043 & 0.03300 & 0.06351 & 0.06347 & 0.15650  & 0.05189  \\
    & & $L_\infty$ Rel  & \textbf{0.01661}  & \underline{0.03474}   & 0.16555  & 0.06124 & 0.08343 & 0.20955 & 0.15091 & 0.27502  & 0.05189  \\
    \bottomrule
    \end{tabular}%
    }
\end{table*}
The evaluation results show that both versions of our model outperform the baselines on both tasks under coefficient interpolation and extrapolation. This shows that our models can make accurate predictions not only on PDEs with coefficients seen during training but also on PDEs with unseen coefficients, without requiring additional fine-tuning. In other words, the models can generalize across the parameter space, achieving both interpolation and extrapolation.


\subsection{Cross-PDE Transfer after Fine-tuning} \label{SS:diverse_PDE}

To evaluate cross-PDE transfer after fine-tuning, we conduct experiments on heterogeneous PDEs beyond the 2D CDR family. On the shallow water equations (SWE) dataset, we assessed whether our model could adapt through fine-tuning to other PDEs on the same tasks as in the family, namely spatiotemporal interpolation and temporal extrapolation. The baselines for each task are trained in the same manner as in the family.
On the CNSE dataset, we further examined whether the model could adapt to trajectory prediction for a different PDE.

\subsubsection{Cross-PDE Transfer to Shallow Water Equations}
\label{SSS:SWE}

To evaluate the PDE generalization ability of our model, we conduct experiments on the SWE, assessing whether the tasks defined in the family of 2D CDR equations can be successfully transferred to this heterogeneous PDE using half as many epochs as the task-specific baselines. Out of the total 1,000 samples, 700 are used for training, and the training, validation, and test datasets are constructed over the time interval in the same manner as for the family of 2D CDR equations. The test dataset generated from the initial condition samples used during training is referred to as the seen dataset, while the test dataset generated from the remaining 300 samples is referred to as the unseen dataset. For the baselines, training for each task is carried out in the same manner as in pre-training.

\begin{table*}[ht]
\captionsetup{skip=1.5pt}
\caption{The evaluation results for the spatiotemporal interpolation and temporal extrapolation tasks applied to SWE. They are measured at the seen/unseen datasets. The best performance is marked in \textbf{bold} and the second-best performance is marked with \underline{underline}.}
\label{table:SWE}
\small
\centering
    \centering
    \captionsetup{labelformat=empty}
    \resizebox{1.0\textwidth}{!}{%
    \begin{tabular}{lllccccccccc}
    \toprule
    \textbf{\shortstack[l]{Evaluation\\dataset}} & \textbf{Task}  & \textbf{Metric} & \textbf{Ours} & \textbf{Ours (PINN)}  &  \textbf{DeepONet} & \textbf{FNO} & \textbf{F-FNO} & \textbf{A-FNO} & \textbf{LNO} & \textbf{Poseidon}   & \textbf{DPOT}  \\ 
    \midrule
    \multirow{6}{*}{Seen} & \multirow{3}{*}{\shortstack[l]{Spatiotemporal\\interpolation}} & $L_1$ Abs  & \textbf{0.00208}  & \underline{0.00269}  & 0.00271  & \multirow{3}{*}{-} & \multirow{3}{*}{-} & \multirow{3}{*}{-} & \multirow{3}{*}{-}  & \multirow{3}{*}{-} & \multirow{3}{*}{-} \\
    & & $L_2$ Rel & \textbf{0.00739}  & \underline{0.00822}  & 0.00997  & & & & & &  \\
    & & $L_\infty$ Rel & \textbf{0.05897}  & \underline{0.06950}  & 0.06951  & & & & & &  \\
    \cmidrule{2-12}
    & \multirow{3}{*}{\shortstack[l]{Temporal\\extrapolation}} & $L_1$ Abs  & \underline{0.01548}  & \textbf{0.01418}  & 0.06994 & 0.03271 & 0.02292 & 0.05036 & 0.02902 & 0.02964 &  0.03327 \\
    & & $L_2$ Rel  & \underline{0.03273}  & \textbf{0.03221}  & 0.08319  & 0.05429 & 0.04110 & 0.07436 & 0.05269 & 0.05208  & 0.05860  \\	
    & & $L_\infty$ Rel  &  \textbf{0.20590} & 0.24596  & 0.28736  & 0.24954 & \underline{0.24478} & 0.38030 & 0.24832 & 0.25668  & 0.29727  \\	
    
    \midrule
    \multirow{6}{*}{Unseen} & \multirow{3}{*}{\shortstack[l]{Spatiotemporal\\interpolation}} & $L_1$ Abs  & \textbf{0.00213}  & \underline{0.00272}  & 0.00278  & \multirow{3}{*}{-} & \multirow{3}{*}{-} & \multirow{3}{*}{-} & \multirow{3}{*}{-} & \multirow{3}{*}{-}  & \multirow{3}{*}{-} \\   
    & & $L_2$ Rel & \textbf{0.00754}  & \underline{0.00828}  & 0.01008  & & & & & &  \\
    & & $L_\infty$ Rel & \textbf{0.05986}  &  \underline{0.06981}   & 0.07047  & & & & & &  \\
    \cmidrule{2-12}
    & \multirow{3}{*}{\shortstack[l]{Temporal\\extrapolation}} & $L_1$ Abs  & \underline{0.01571}  & \textbf{0.01448}  & 0.07047 & 0.03342 & 0.02371 & 0.05099  & 0.02877  & 0.03021 &  0.03372 \\  	
    & & $L_2$ Rel  & \underline{0.03323}  & \textbf{0.03288}   & 0.08347  & 0.05530 & 0.04218 & 0.07513 & 0.05225 & 0.05306  & 0.05914  \\  
    & & $L_\infty$ Rel  & \textbf{0.20686}  & 0.24686   & 0.28441  & 0.24980 & \underline{0.24378}  & 0.38196 & 0.24788 & 0.25710  & 0.29757  \\  
    \bottomrule
    \end{tabular}%
    }
\end{table*}

The experimental results in Table~\ref{table:SWE} show that both versions of our model outperform the baselines on both spatiotemporal interpolation and temporal extrapolation tasks, regardless of whether the dataset is seen or unseen. Specifically, Ours achieves the best performance on spatiotemporal interpolation, while Ours (PINN) performs best on temporal extrapolation. These results support transfer to SWE after fine-tuning and prediction on trajectories generated from initial conditions not used for fine-tuning.


\subsubsection{Cross-PDE Transfer to Compressible Navier-Stokes Equations}
\label{SSS:CNSE}

To further evaluate cross-PDE transfer after fine-tuning, we conduct experiments on the compressible Navier–Stokes equations (CNSE), focusing in particular on learning the time trajectory predicting operator. This experiment is designed not only to evaluate adaptation to a different PDE family but also to assess the potential of our approach in learning an operator that was not included in previous experiments. The dataset consists of 1,000 samples, of which 700 are used for training, 100 for validation, and 200 for testing. The target operator is defined as follows: given the initial condition of four features, the models are required to predict the solution of them at later times $t=0.25, 0.5, 0.75,$ and $1.0$ (see Appendix~\ref{app:datasets_CNSE} for a formal description). Baseline models except DeepONet and DPOT are trained and evaluated using rollout predictions. In contrast, DeepONet and our model directly predict the solutions given the initial condition. Additionally, since DPOT allows the output timestep length to be adjusted, it also predicts the solutions in a single step.

\begin{table*}[ht]
\captionsetup{skip=1.5pt}
\caption{The evaluation results for the operator learning task applied to CNSE. The best performance is marked in \textbf{bold} and the second-best performance is marked with \underline{underline}.}
\label{table:CNSE}
\small
\centering
    \centering
    \captionsetup{labelformat=empty}
    \resizebox{1.0\textwidth}{!}{%
    \begin{tabular}{llccccccccc}
    \toprule
    \textbf{Task}  & \textbf{Metric} & \textbf{Ours} & \textbf{Ours (PINN)}  &  \textbf{DeepONet} & \textbf{FNO} & \textbf{F-FNO} & \textbf{A-FNO} & \textbf{LNO} & \textbf{Poseidon}  & \textbf{DPOT}  \\ 
    \midrule
    \multirow{3}{*}{\shortstack[l]{Operator\\learning}} & $L_1$ Abs  & 0.09497  & 0.09429 & 0.17797  & 0.09581	 & \textbf{0.08734}	 & 0.13637	& 0.10366  & 0.13313  & \underline{0.09164}  \\
    & $L_2$ Rel & \underline{0.54833}  & \textbf{0.54369}  & 1.09567  & 0.57905 & 0.55420 & 0.65798 & 0.62536 & 0.66904  & 0.67869  \\
    & $L_\infty$ Rel & \underline{0.63533}  & \textbf{0.63374}  & 1.09165  & 0.70708 & 0.72876 & 0.85729 & 0.66936 & 0.92909  & 1.15028   \\
    \bottomrule
    \end{tabular}%
    }
\end{table*}

The experimental results in Table~\ref{table:CNSE} show that both versions of our model outperform the baselines on two relative error metrics. Notably, although DPOT’s pre-training corpus already includes the CNSE dataset from PDEBench, our models still achieve superior performance over DPOT on both metrics. While DPOT was originally evaluated on predicting one time step ahead, in our setting the task was extended to predicting up to four time steps ahead. As a result, DPOT exhibited lower performance than reported in the original paper. These results support cross-PDE transfer after fine-tuning for multivariate trajectory prediction.

We further evaluate adaptation to additional PDE datasets and flexibility across input representations in supplementary experiments. First, we evaluate operator learning in the same form as in the CNSE experiments on the Airfoil dataset, which is constructed with an irregular mesh structure. In addition, we evaluate operator learning across different features, rather than within the same features, using the Darcy Flow dataset. Detailed information on these experiments is provided in Appendix~\ref{app:experi}.

\section{Related Works}
\paragraph{In-context learning} Transformers have shown remarkable ICL abilities in various studies. They generalize to unseen tasks by emulating Bayesian predictors \citep{panwar2024incontext} and linear models \citep{zhang2024trained}, while also efficiently performing Bayesian inference through prior-data fitted networks \citep{muller2022transformers}. Their robustness extends to learning different function classes, such as linear and sparse linear functions, decision trees, and two-layer neural networks, even under distribution shifts \citep{garg2022can}. Furthermore, Transformers can adaptively select algorithms based on input sequences, achieving near-optimal performance on tasks like noisy linear models \citep{bai2023transformers}. They are also fast and effective for tabular data classification \citep{hollmann2023tabpfn, hollmann2025tabpfn}. 
Recent work has further connected ICL to richer Bayesian inference procedures \citep{reuter2025can} and implicit optimization dynamics such as gradient descent or Newton-like updates \citep{von2023transformers, fu2024transformers}.

\paragraph{Context-query neural operators.}
GNOT~\cite{hao2023gnot} and UPT~\cite{alkin2024universal} provide related architectures for prediction from flexible inputs and coordinate queries. GNOT supports multiple input functions and irregular meshes, while UPT supports queries of a latent representation at arbitrary spatiotemporal locations. PDE-PFN emphasizes PFN-style episodic prior fitting over PDE instances and context/query sets, enabling the same CDR checkpoint to support multiple prediction settings without task-specific updates. Cross-attention itself is not a unique contribution; a matched GNOT comparison and its attribution limitations are reported in Appendix~\ref{app:gnot_comparison}.

\paragraph{Scientific foundation models} Recent studies have advanced in-context operator learning and PDE solving through Transformer-based models. The work \citet{ye2024pdeformer} introduces PDEformer, a versatile model for solving 1D PDEs with high accuracy in inverse problems. In-context operator learning has also been extended to multi-modal frameworks, as seen in \citet{yang2025fine}, where ICON-LM integrates natural language with equations to outperform traditional models. Additionally, \citet{yang2024pde} and \citet{yang2023context} demonstrate the generalization capabilities of ICON in solving various PDE-related tasks, highlighting ICON's few-shot learning performance across various problems in differential equations.
Several other studies have addressed the problem of solving diverse PDEs using a single trained model \citep{hang2025unisolver,mccabe2023multiple,herde2024poseidon,hao2024dpot}.
However, many of these approaches rely on symbolic PDE information, require task/operator-level demonstrations, or adapt through fine-tuning, making their objectives different from ours.

\section{Conclusion}\label{S:conclusion_limitation}

We presented PDE-PFN, a new method for scientific machine learning to directly approximate the mean of posterior predictive distribution of PDE solutions via in-context learning. Our experiments demonstrate task and coefficient generalization within the CDR family without gradient updates, cross-PDE transfer after fine-tuning, flexible context/query inputs without symbolic equations at inference, and robustness in the evaluated noisy-prior settings. These findings support PFN-style prior fitting as a promising approach to equation-free in-context prediction over PDE solution fields. Future work will examine continuous coefficient priors, broader initial- and boundary-condition distributions, and systematic scaling with model size, prior size and diversity, context size, and distributional distance. Residual and conservation diagnostics, structured distribution shifts, and probabilistic prediction under ambiguous contexts will further assess physical consistency and the scope of generalization.

\clearpage




\bibliography{neurips_2026.bib}
\bibliographystyle{plainnat}

\newpage


\appendix

\section{The Proof of Theorems}\label{app:proof}


\begin{theorem}[Existence of the neural network approximating the posterior predictive distribution]\label{app:neural_sensitivity}
Suppose that for any $ \epsilon > 0 $, there exists a Transformer parameterized by $\hat{\theta}$ such that 
\[
 \mathbb{E}_{c} \left[ KL \left( p_{\hat{\theta}}(\cdot \mid c, D_n), q_n(\cdot \mid c, D_n) \right) \right] < \epsilon,
\] for any realization of  $D_n$.
If the posterior consistency condition Eq.~\eqref{eq:consistency} holds, and for any $ q \in \mathcal{Q} $, $ p_q(c) = p_\pi(c) $ almost everywhere on $ \mathcal{X} $, then the following holds almost surely:
\[
\mathbb{E}_{c} \left[ H \left( p_{\hat{\theta}}(\cdot \mid c, D_n), p_\pi(\cdot \mid c) \right) \right] \xrightarrow{n \to \infty} 0.
\]
\end{theorem}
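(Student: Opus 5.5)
The plan is a triangle-inequality argument for the Hellinger distance, which is a metric. For each $c$,
\[
H\big(p_{\hat\theta}(\cdot\mid c,D_n),p_\pi(\cdot\mid c)\big)\le H\big(p_{\hat\theta}(\cdot\mid c,D_n),q_n(\cdot\mid c,D_n)\big)+H\big(q_n(\cdot\mid c,D_n),p_\pi(\cdot\mid c)\big).
\]
We then take $\mathbb{E}_c$ and control the two terms separately. The first term is the network approximation error and the second is the posterior predictive consistency error.

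\textbf{First term.} We use the standard inequality $H^2(p,q)\le \mathrm{KL}(p,q)$, which holds under the paper's normalization of $H$ without the factor $1/2$ (up to an absolute constant, which is harmless). Jensen's inequality then gives
\[
\mathbb{E}_c[H]\le(\mathbb{E}_c[H^2])^{1/2}\le(\mathbb{E}_c[\mathrm{KL}])^{1/2}<\sqrt{\epsilon}.
\]
This holds for every realization of $D_n$. Since $\epsilon$ is arbitrary, we may choose $\hat\theta=\hat\theta_n$ with $\epsilon_n\to0$; I read the statement this way, with the Transformer chosen depending on the tolerance. The first term then vanishes deterministically.

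\textbf{Second term.} Write $q_n(y\mid c,D_n)=\int p_q(y\mid c)\,d\Pi(q\mid D_n)$. The map $p\mapsto H(p,p_\pi(\cdot\mid c))$ is convex, because $H^2(p,r)=2-2\int\sqrt{pr}$, the map $p\mapsto\int\sqrt{pr}$ is concave, and the square root of a convex nonnegative function of this form is convex via the $L^2$-norm representation $\|\sqrt p-\sqrt r\|_2$ together with concavity of $p\mapsto\sqrt p$. Jensen's inequality then gives
\[
H(q_n(\cdot\mid c,D_n),p_\pi(\cdot\mid c))\le\int H(p_q(\cdot\mid c),p_\pi(\cdot\mid c))\,d\Pi(q\mid D_n).
\]
Next we pass from conditional to joint densities using the assumption $p_q(c)=p_\pi(c)$ a.e. Under it,
\[
H^2(p_q,p_\pi)=\int p_\pi(c)\,H^2(p_q(\cdot\mid c),p_\pi(\cdot\mid c))\,dc=\mathbb{E}_c[H_c^2],
\]
so $\mathbb{E}_c[H_c]\le H(p_q,p_\pi)$. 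Taking $\mathbb{E}_c$ and applying Fubini, the second term is at most $\int H(p_q,p_\pi)\,d\Pi(q\mid D_n)$. We split this integral over $\{H\le\epsilon\}$ and its complement. Since $H\le\sqrt2$ always, the bound is at most $\epsilon+\sqrt2\,\Pi^n(\{q:H(p_\pi,p_q)>\epsilon\})$, which tends to $\epsilon$ almost surely by Eq.~\eqref{eq:consistency}. Letting $\epsilon\downarrow0$ along a countable sequence preserves the almost-sure statement.

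\textbf{Expected obstacles.} The main obstacle is the convexity/Jensen step for Hellinger applied to a mixture. The cleanest route is to use joint convexity of $H^2$, since it is an $f$-divergence, which gives $\mathbb{E}_c[H_c^2(q_n,p_\pi)]\le\int H^2(p_q,p_\pi)\,d\Pi(q\mid D_n)$, and then apply Jensen in $c$ to return to $H$. This avoids needing convexity of $H$ itself. A secondary subtlety is the dependence of $\hat\theta$ on $\epsilon$, and hence on $n$, which we handle with a diagonal choice $\epsilon_n\to0$. Measurability of the maps involved, needed for Fubini, is assumed via the measurability hypotheses of Section~\ref{sec:back}.
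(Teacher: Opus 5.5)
Your overall structure is the paper's: the Hellinger triangle inequality, a KL-to-Hellinger bound plus Jensen in $c$ for the network term, and for the consistency term a mixture inequality, the conditional-to-joint identity from $p_q(c)=p_\pi(c)$, and a split of the posterior integral at level $\epsilon$. The real problem is the step you put first for the second term. The map $p\mapsto H(p,r)$ is \emph{not} convex, so the pointwise bound $H(q_n(\cdot\mid c,D_n),p_\pi(\cdot\mid c))\le\int H(p_q(\cdot\mid c),p_\pi(\cdot\mid c))\,d\Pi(q\mid D_n)$ is false in general. Composing the convex norm $\|\cdot-\sqrt r\|_2$ with the concave map $p\mapsto\sqrt p$ does not produce a convex function.

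Concretely, on a two-point space take $r=(0.99,0.01)$, $p=(0,1)$ and the mixture $m=0.9\,r+0.1\,p=(0.891,0.109)$. Then $0.9\,H(r,r)+0.1\,H(p,r)=0.1\sqrt{1.8}\approx0.134$, whereas $H(m,r)\approx0.236$. More generally, $H(r+s(p-r),r)\approx\tfrac{s}{2}\sqrt{\chi^2(p,r)}$ for small $s$. This exceeds $s\,H(p,r)\le s\sqrt2$ whenever $\chi^2(p,r)>8$. So a two-atom posterior already breaks the inequality as written, and the subsequent bound $\epsilon+\sqrt2\,\Pi^n(\cdot)$ is not justified along this route.

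The fallback in your last paragraph is the correct argument, and it is exactly what the paper does. Write $H^2=2-2\int\sqrt{pr}$ and use concavity of $\sqrt{\cdot}$ inside the Bhattacharyya coefficient to get $H_c^2(q_n,p_\pi)\le\int H_c^2(p_q,p_\pi)\,d\Pi^n(q)$. Then take $\mathbb{E}_c$, apply Tonelli, and use the equal-marginal identity to get $\mathbb{E}_c[H_c^2(q_n,p_\pi)]\le\int H^2(p_q,p_\pi)\,d\Pi^n(q)$. Only after that should you apply Jensen in $c$.

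You must also carry the square through the split: $\int H^2\,d\Pi^n\le\epsilon^2+2\,\Pi^n(\{q:H(p_\pi,p_q)>\epsilon\})$. This gives $\limsup_n\mathbb{E}_c[H_c(q_n,p_\pi)]\le\epsilon$ almost surely, which is all you need. With that substitution the proof goes through. Your diagonal choice $\hat\theta_n$ with $\epsilon_n\to0$, and your separate tolerances for the network error and the consistency split, are cleaner than the paper's ending. The paper reuses one $\epsilon$ for both, reaches the bound $\sqrt{\epsilon/2}+\sqrt{\epsilon}$, and concludes ``since $\epsilon$ is arbitrary,'' implicitly letting the network depend on $\epsilon$.
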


\begin{proof}
For any two probability distributions $p$ and $q$, recall that the Hellinger distance satisfies
\[
  H(p,q)^2 = \frac{1}{2} \int \Bigl(\sqrt{p(\theta)}-\sqrt{q(\theta)}\Bigr)^2\,d\theta \le \frac{1}{2}\,KL(p\|q).
\]
Thus, if we can show that the Kullback--Leibler divergence $KL\bigl(\hat{p}_{\hat{\theta}}(\cdot\mid c,\tilde{D})\,\|\,p_\pi(\cdot\mid c)\bigr)$ vanishes asymptotically, then the convergence in Hellinger distance follows.
For any $ n, \epsilon $, we derive that
\begin{align}
\mathbb{E}_{c} \left[ H \left( p_{\hat{\theta}}(\cdot \mid c, D_n), p_\pi(\cdot \mid c) \right) \right] & \overset{(1)}{\leq} \mathbb{E}_{c} \left[ H \left( p_{\hat{\theta}}(\cdot \mid c, D_n), q_n(\cdot \mid c, D_n) \right) \right] \\ 
&\quad + \mathbb{E}_{c} \left[ H \left( q_n(\cdot \mid c, D_n), p_\pi(\cdot \mid c) \right) \right] \\
&\overset{(2)}{\leq} \sqrt{\frac{1}{2} \mathbb{E}_{c} \left[ KL \left( p_{\hat{\theta}}(\cdot \mid c, D_n), q_n(\cdot \mid c, D_n) \right) \right]} \\
&\quad + \mathbb{E}_{c} \left[ H \left( q_n(\cdot \mid c, D_n), p_\pi(\cdot \mid c) \right) \right]\\
&\overset{(3)}{\leq} \sqrt{\frac{\epsilon}{2}} + \mathbb{E}_{c} \left[ 1 - \int_{\mathcal{Y}} \sqrt{\int p_q(y \mid c) p_\pi(y \mid c) \mathrm{d}\Pi^n(q)} \mathrm{d}y \right]^{1/2} \\
&\leq \sqrt{\frac{\epsilon}{2}} +\left[ 1 -\int_{\mathcal{C}}p_\pi(c) \int_{\mathcal{Y}} \frac{1}{p_\pi(c)}\sqrt{\int q(y, c) p_\pi(y, c) \mathrm{d}\Pi^n(q)} \mathrm{d}y \mathrm{d}c\right]^{1/2}\\
&\leq \sqrt{\frac{\epsilon}{2}} +  \left[ 1 - \int_{\mathcal{C}}\int_{\mathcal{Y}} \int \sqrt{p_q(y, c) p_\pi(y , c)} \mathrm{d}\Pi^n(q) \mathrm{d}y \mathrm{d}c \right]^{1/2} \\
&= \sqrt{\frac{\epsilon}{2}} + 
\left[ \int  H \left( p_q, p_\pi \right)^2 \mathrm{d}\Pi^n(q) \right]^{1/2} \\
&\leq \sqrt{\frac{\epsilon}{2}} +  \left[\int H \left( p_q, p_\pi\right) d\Pi^n(q) \right]^{1/2} \\
&\overset{(4)}{=} \sqrt{\frac{\epsilon}{2}} +  \left[ \int_{\{ q : H(p_\pi, p_q) > \epsilon \}} H \left( p_q, p_\pi \right) \mathrm{d}\Pi^n(q) \right]^{1/2}\\ 
&\quad + \left[ \int_{\{ q : H(p_\pi, p_q) \leq \epsilon \}} H \left( p_q, p_\pi \right) \mathrm{d}\Pi^n(q) \right]^{1/2}\\
&\overset{(5)}{=} \sqrt{\frac{\epsilon}{2}} + (\Pi^n(\{ q : H(p_\pi, p_q) > \epsilon \}) + \epsilon)^{1/2} \rightarrow \sqrt{\frac{\epsilon}{2}} + \sqrt{\epsilon} \quad \text{a.s.} 
\end{align}

The first inequality ($\overset{(1)}{\leq}$) is derived from the triangle inequality for the Hellinger distance, which states that for any intermediate distribution $ q(\cdot \mid c, D_n) $, we have
\[
H \left( p_{\hat{\theta}}(\cdot \mid c, D_n), p_\pi(\cdot \mid c) \right) \leq H \left( p_{\hat{\theta}}(\cdot \mid c, D_n), q_n(\cdot \mid c, D_n) \right) + H \left( q_n(\cdot \mid c, D_n), p_\pi(\cdot \mid c) \right).
\]
The second inequality ($\overset{(2)}{\leq}$) uses the fact that the Hellinger distance $ H(p, q) $ is bounded above by the square root of the KL divergence $ KL(p \parallel q) $, such that
\[
H(p, q)^2 \leq \frac{1}{2} KL(p \parallel q).
\]
Thus, we can bound the Hellinger distance by the KL divergence. In the third inequality ($\overset{(3)}{\leq}$), we make use of the assumption
\[
\mathbb{E}_{c} \left[ KL \left( p_{\hat{\theta}}(\cdot \mid c, D_n), q_n(\cdot \mid c, D_n) \right) \right] < \epsilon,
\]
and utilize the definition of the Hellinger distance.
In ($\overset{(4)}{=}$), we partition the domain into two regions--one where the Hellinger distance $ H(p_\pi, p_q) $ exceeds $ \epsilon $ and another where it is less than or equal to  $ \epsilon $--and use this partitioning to demonstrate the inequality.

Finally, in ($\overset{(5)}{=}$), by posterior consistency, the region where the Hellinger distance is greater than $ \epsilon $ vanishes as $n \rightarrow \infty$ such that 
\[
\Pi^n \left\{ q : H(p_\pi, p_q) > \epsilon \right\} \to 0 \quad \text{almost surely}.
\]
Since $\epsilon$ is arbitrary, we can conclude that \[
\mathbb{E}_{c} \left[ H \left( p_{\hat{\theta}}(\cdot \mid c, D_n), p_\pi(\cdot \mid c) \right) \right] \xrightarrow{n \to \infty} 0 \quad \text{almost surely}.
\]
\end{proof}

Under its stated assumptions, Theorem~\ref{app:neural_sensitivity} describes approximation of the full posterior predictive distribution as context grows, rather than convergence of the deterministic PDE-PFN implementation. However, PDE-PFN is a deterministic predictor and is trained using the mean squared error loss. Therefore, it does not explicitly model the full posterior predictive distribution or its uncertainty. We now clarify how this deterministic training objective is connected to the posterior predictive mean (PPM): under the squared-loss objective, the population-optimal predictor is precisely the conditional mean of the target solution given the context.

\begin{theorem}[Posterior-predictive-mean inference under squared loss] \label{thm:mse_ppm}
Let $q \in \mathcal{Q}$ denote a latent PDE instance drawn from the prior
distribution $\Pi$. For each $q$, let $u(q): \mathcal{X}\times\mathcal{T}\rightarrow \mathcal{Y}$
denote the corresponding PDE solution field, where for a spatiotemporal
coordinate $c=(x,t)$, the solution value is $y=u(q)(c)$. Let $\widetilde{D}_n=\{(c_i,\tilde{y}_i)\}_{i=1}^{n}$ be an observed context set, where $\tilde{y}_i = u(q)(c_i)+\eta_{c_i}$ and where $\eta_{c_i}$ denotes zero-mean observation noise. We assume that the noise variables are independent across spatiotemporal coordinates and independent of the latent PDE instance $q$.

For a query coordinate $c$, let the training target be $\tilde{Y}(c)=u(q)(c)+\eta_c.$
Consider any deterministic predictor $\tilde{u}_\theta(c,\widetilde{D}_n)$, corresponding to $\tilde{u}_{\theta}(c\mid \widetilde{D}_n)$ in PDE-PFN, trained with the mean squared error loss
\[
    \mathcal{L}(\tilde{u}_\theta) = \mathbb{E}
    \left[ \left\| \tilde{u}_\theta(c,\widetilde{D}_n)-\tilde{Y}(c) \right\|_2^2 \right],
\]
where the expectation is taken over $q\sim\Pi$, the context and query. Then the minimizer satisfies
\[
    \tilde{u}_\theta^{\star}(c,\widetilde{D}_n) = \mathbb{E}\!\left[\tilde{Y}(c)\mid c,\widetilde{D}_n\right]  = \mathbb{E}\!\left[u(q)(c)\mid c,\widetilde{D}_n\right].
\]
\end{theorem}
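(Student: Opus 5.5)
The plan has two parts. First, the squared-loss minimizer is the conditional mean of the target, by the usual $L^2$ projection argument. Second, the zero-mean, independent query noise drops out of that conditional mean. Throughout I assume $\mathbb{E}\|\tilde{Y}(c)\|_2^2<\infty$, so all conditional expectations exist and the risk is finite for square-integrable predictors. I also regard $\tilde{u}_\theta$ as ranging over all measurable functions of $(c,\widetilde{D}_n)$, since ``population-optimal'' refers to that class.

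\textbf{Step 1 (projection).} Write $Z=(c,\widetilde{D}_n)$ and $m(Z)=\mathbb{E}[\tilde{Y}(c)\mid Z]$. For any square-integrable $g(Z)$, expand
\[
\mathbb{E}\|g(Z)-\tilde{Y}\|_2^2=\mathbb{E}\|g(Z)-m(Z)\|_2^2+2\,\mathbb{E}\langle g(Z)-m(Z),\,m(Z)-\tilde{Y}\rangle+\mathbb{E}\|m(Z)-\tilde{Y}\|_2^2 .
\]
Condition the cross term on $Z$ using the tower property. Since $g(Z)-m(Z)$ is $Z$-measurable and $\mathbb{E}[m(Z)-\tilde{Y}\mid Z]=0$, the cross term vanishes. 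The risk is therefore minimized exactly when $g=m$ almost surely, which gives the first equality $\tilde{u}_\theta^{\star}=\mathbb{E}[\tilde{Y}(c)\mid c,\widetilde{D}_n]$.

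\textbf{Step 2 (removing the target noise).} By linearity,
\[
\mathbb{E}[\tilde{Y}(c)\mid Z]=\mathbb{E}[u(q)(c)\mid Z]+\mathbb{E}[\eta_c\mid Z].
\]
It remains to show $\mathbb{E}[\eta_c\mid Z]=0$. For this I need $\eta_c$ to be independent of $(q,c,(c_i)_i,(\eta_{c_i})_i)$. Then it is independent of $Z$, since $Z$ is a measurable function of those variables: $\tilde{y}_i=u(q)(c_i)+\eta_{c_i}$. Independence gives $\mathbb{E}[\eta_c\mid Z]=\mathbb{E}[\eta_c]=0$. This yields the second equality, i.e.\ the clean-solution posterior predictive mean conditioned on the noisy context, which is the PPM of the Methods section with $D$ replaced by $\widetilde{D}_n$.

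\textbf{Main obstacle.} The hard part is making the independence in Step 2 rigorous, because the statement only says the noise is independent ``across spatiotemporal coordinates'' and of $q$. Two issues arise. First, the query $c$ may coincide with some context point $c_i$. In that case $\eta_c=\eta_{c_i}$ appears in $\widetilde{D}_n$, and $\mathbb{E}[\eta_c\mid Z]$ need not vanish. Second, the coordinates $c,c_i$ are themselves random. I would handle both by stating explicitly that the query noise is drawn independently of the context noise, the coordinates and $q$. This holds, for instance, if queries and contexts are disjoint, as in the PFN training split, or if coordinates are sampled from a continuous distribution so that coincidences have probability zero. Under that clarification the argument above goes through. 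Without it, the theorem holds only for the first equality.
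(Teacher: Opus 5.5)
Your proposal is correct and follows the paper's proof essentially step for step: the paper conditions on $(c,\widetilde{D}_n)$ and applies the bias--variance decomposition, which is equivalent to your vanishing cross term, to obtain the conditional mean, and then removes $\eta_c$ by linearity. The independence you flag as the main obstacle is exactly what the paper invokes at that step, namely that the query noise is zero-mean and independent of the context noise, the latent instance, and the sampled coordinates; your explicit clarification, including the coincidence caveat $c=c_i$, makes precise an assumption the paper's proof uses without stating it in the theorem.
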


\begin{proof}
For any measurable deterministic predictor $\tilde{u}_\theta$, we decompose the MSE loss by conditioning on $(c,\widetilde{D}_n)$:
\[
\begin{aligned}
    \mathcal{L}(\tilde{u}_\theta) = \mathbb{E}
    \left[
        \mathbb{E}
        \left[
            \left\| \tilde{u}_\theta(c,\widetilde{D}_n)-\tilde{Y}(c) \right\|_2^2 \,\middle|\, c,\widetilde{D}_n
        \right]
    \right] .
\end{aligned}
\]
Using the standard bias-variance decomposition, the inner expectation becomes
\[
\begin{aligned}
    \mathbb{E}
    \left[
        \left\| \tilde{u}_\theta(c,\widetilde{D}_n)-\tilde{Y}(c) \right\|_2^2 \,\middle|\, c,\widetilde{D}_n
    \right]
    =
    \operatorname{Var}\!\left(\tilde{Y}(c)\mid c,\widetilde{D}_n\right) +
    \left\|
        \tilde{u}_\theta(c,\widetilde{D}_n) - \mathbb{E}\!\left[\tilde{Y}(c)\mid c,\widetilde{D}_n\right]
    \right\|_2^2 .
\end{aligned}
\]
The variance term does not depend on $\tilde{u}_\theta$. Therefore, the loss is minimized
pointwise by
\[
    \tilde{u}_\theta^{\star}(c,\widetilde{D}_n) = \mathbb{E}\!\left[\tilde{Y}(c)\mid c,\widetilde{D}_n\right].
\]
Since the query noise $\eta_c$ is zero-mean and independent of the context noise variables, the latent PDE instance, and the sampled coordinates, we have
\[
    \mathbb{E}[\eta_c\mid c,\widetilde{D}_n]=0.
\]
Therefore,
\[
\begin{aligned}
    \mathbb{E}\!\left[\tilde{Y}(c)\mid c,\widetilde{D}_n\right] =
    \mathbb{E}\!\left[u(q)(c)+\eta_c\mid c,\widetilde{D}_n\right] =
    \mathbb{E}\!\left[u(q)(c)\mid c,\widetilde{D}_n\right].
\end{aligned}
\]
This completes the proof.
\end{proof}

This result does not imply recovery of clean context observations or removal of arbitrary bias or correlated errors in PINN-generated priors.

\begin{corollary}[Effect of context size on the optimal MSE]
\label{cor:context_size_bayes_risk}
Let $\widetilde{D}_n\subset \widetilde{D}_{n+1}$ be nested context sets, and define the
population-optimal MSE with $n$ context observations as
\[
    \mathcal{L}_n^{\star}  =  \inf_{\tilde{u}_\theta} \mathbb{E}
    \left[
        \left\|  \tilde{u}_\theta(c,\widetilde{D}_n)-\tilde{Y}(c)  \right\|_2^2
    \right].
\]
Then,
\[
    \mathcal{L}_{n+1}^{\star} \leq \mathcal{L}_{n}^{\star}.
\]
In other words, the best achievable squared loss is non-increasing as more context observations are provided. Moreover, if the query target is noise-free or if the evaluation is performed against the clean solution $u(q)(c)$, and the latent PDE instance $q$ becomes identifiable from increasingly many context observations, then the corresponding clean-solution prediction error can approach zero as $n$ increases.
\end{corollary}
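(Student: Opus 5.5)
The monotonicity claim follows from the fact that enlarging the conditioning information can only lower the minimal squared loss. Since $\widetilde{D}_n\subset\widetilde{D}_{n+1}$, any predictor that uses only $\widetilde{D}_n$ is also a predictor based on $\widetilde{D}_{n+1}$: it simply ignores the extra observation. Write $\tilde{u}$ for such a predictor, viewed as a function of $(c,\widetilde{D}_{n+1})$ through the restriction map $\widetilde{D}_{n+1}\mapsto\widetilde{D}_n$. This restriction is measurable, because it just drops the last pair. Hence the class of admissible predictors at level $n$ embeds into the class at level $n+1$. An infimum over a larger class is no larger, so $\mathcal{L}_{n+1}^\star\le\mathcal{L}_n^\star$.

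As a cross-check I would also use Theorem~\ref{thm:mse_ppm}. It identifies $\mathcal{L}_n^\star=\mathbb{E}[\operatorname{Var}(\tilde{Y}(c)\mid c,\widetilde{D}_n)]$. The tower property together with the conditional Jensen inequality (equivalently, the $L^2$-projection onto a larger $\sigma$-algebra $\sigma(c,\widetilde{D}_n)\subseteq\sigma(c,\widetilde{D}_{n+1})$) shows that this expected conditional variance is non-increasing in $n$. Finite second moments of $\tilde{Y}$ are needed so that these quantities are well defined; I would state that as a standing assumption.

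For the second claim, I would evaluate against the clean target $u(q)(c)$. Take the predictor $\mathbb{E}[u(q)(c)\mid c,\widetilde{D}_n]$, which by Theorem~\ref{thm:mse_ppm} is the MSE minimizer. Its clean error is $\mathbb{E}[(u(q)(c)-\mathbb{E}[u(q)(c)\mid\mathcal{F}_n])^2]$ with $\mathcal{F}_n=\sigma(c,\widetilde{D}_n)$. The nested contexts make $(\mathcal{F}_n)$ a filtration, so $M_n=\mathbb{E}[u(q)(c)\mid\mathcal{F}_n]$ is an $L^2$-bounded martingale. By L\'evy's upward theorem, $M_n\to\mathbb{E}[u(q)(c)\mid\mathcal{F}_\infty]$ both almost surely and in $L^2$. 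I would formalize the identifiability assumption as ``$q$ is $\mathcal{F}_\infty$-measurable up to null sets''. Then $u(q)(c)$ is $\mathcal{F}_\infty$-measurable (with $c$ included), the limit equals $u(q)(c)$ itself, and the clean error tends to $0$.

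The main obstacle is making ``identifiable'' precise enough to get $\mathcal{F}_\infty$-measurability. I would state it as that assumption directly. Alternatively, I would derive it from posterior consistency, Eq.~\eqref{eq:consistency}, together with continuity of $q\mapsto u(q)(c)$, arguing that posterior concentration forces the posterior mean to converge to the truth. That route also needs a uniform-integrability bound on $u(q)(c)$, for instance bounded solutions, to pass from concentration to convergence of the mean.
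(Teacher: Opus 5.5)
Your proposal is correct, and it departs from the paper in two ways.

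\textbf{Monotonicity.} The paper uses Theorem~\ref{thm:mse_ppm} to write $\mathcal{L}_n^{\star}=\mathbb{E}[\operatorname{Var}(\tilde{Y}(c)\mid c,\widetilde{D}_n)]$ and then applies the law of total variance to the nested conditioning. With your $\mathcal{F}_n=\sigma(c,\widetilde{D}_n)$ and $M_n$, this gives the exact decrement $\mathcal{L}_n^{\star}-\mathcal{L}_{n+1}^{\star}=\mathbb{E}[\operatorname{Var}(M_{n+1}\mid\mathcal{F}_n)]=\mathbb{E}[(M_{n+1}-M_n)^2]$. That is essentially your cross-check, and its advantage is that it quantifies the gain as the expected squared update of the posterior predictive mean. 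Your primary argument, embedding the level-$n$ predictors into the level-$(n+1)$ class via the restriction map, is more elementary. It needs neither Theorem~\ref{thm:mse_ppm}, nor finite second moments, nor attainment of the infimum, and it works for any loss. It also makes visible what both proofs tacitly assume: the infimum runs over all measurable predictors, and $\widetilde{D}_n$ is a measurable function of $\widetilde{D}_{n+1}$. For a restricted architecture class, neither argument applies as written.

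\textbf{The ``Moreover'' claim.} This is the larger difference: the paper's proof stops after monotonicity and never argues this part, so your L\'evy-upward argument supplies something genuinely missing. The two approaches also combine well. The same variance identity, applied to $u(q)(c)$, shows that the clean error $\mathbb{E}[(u(q)(c)-M_n)^2]$ decreases by the squared martingale increments. Your $L^2$ convergence then identifies its limit as $\mathbb{E}[\operatorname{Var}(u(q)(c)\mid\mathcal{F}_\infty)]$, which is zero if and only if $u(q)(c)$ is $\mathcal{F}_\infty$-measurable modulo null sets. That condition, weaker than identifiability of $q$ itself, is the sharp hypothesis.

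Stating it as an assumption is preferable to your posterior-consistency alternative. Eq.~\eqref{eq:consistency} only gives concentration in $H(p_q,p_\pi)$ at a fixed true $\pi$. That route would therefore also need continuity of $q\mapsto u(q)(c)$ with respect to this Hellinger pseudometric, and consistency for $\Pi$-almost every $\pi$, before averaging over the prior.
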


\begin{proof}
By Theorem~\ref{thm:mse_ppm}, the population-optimal predictor with $n$
context observations is
\[
    \tilde{u}^{\star}_{\theta,n}(c,\widetilde{D}_n) = \mathbb{E}\!\left[\tilde{Y}(c)\mid c,\widetilde{D}_n\right],
\]
and the corresponding optimal MSE is
\[
    \mathcal{L}_n^{\star} = \mathbb{E}
    \left[
        \operatorname{Var}
        \left(  \tilde{Y}(c)\mid c,\widetilde{D}_n  \right)
    \right].
\]
Since $\widetilde{D}_n\subset \widetilde{D}_{n+1}$, conditioning on $\widetilde{D}_{n+1}$ provides at least as much information as conditioning on $\widetilde{D}_n$. By the law of total variance,
\[
    \mathbb{E}
    \left[
        \operatorname{Var}
        \left(  \tilde{Y}(c)\mid c,\widetilde{D}_n   \right)
    \right] =
    \mathbb{E}
    \left[
        \operatorname{Var}
        \left(  \tilde{Y}(c)\mid c,\widetilde{D}_{n+1}  \right)
    \right] 
    +
    \mathbb{E}
    \left[
        \operatorname{Var}
        \left(  \mathbb{E}[\tilde{Y}(c)\mid c,\widetilde{D}_{n+1}] \mid c,\widetilde{D}_n  \right)
    \right].
\]
The second term is non-negative, and therefore
\[
    \mathcal{L}_{n+1}^{\star} \leq \mathcal{L}_{n}^{\star}.
\]
This shows that increasing the number of context observations cannot increase the population-optimal squared loss.
\end{proof}


\section{Datasets} \label{app:datasets}

\subsection{Family of 2D Convection-Diffusion-Reaction Equations} \label{app:datasets_CDR}
The formulation of the family of 2D CDR equations with three reaction terms, used in Section~\ref{SS:CDR}, is given as follows.  
\begin{equation}
\begin{aligned}
\textnormal{2D CDR: }  \ u_t + \boldsymbol{\beta} \cdot \nabla u - \nabla \cdot (\boldsymbol{\nu} \nabla u) - \sum_{j=1}^{3} \rho_{j} f_j(u) = 0,  \ x \in \mathcal{X}, \ t \in \mathcal{T}, \\
\end{aligned}
\end{equation}  
where $\mathcal{X} = [0, \pi] \times [0, \pi]$ and  $\mathcal{T} = [0, 1]$ are solution domain with a resolution of $32 \times 32 \times 21$. For the family of 2D CDR equations, we use analytic solutions whenever they are available. When an analytic solution is not available, the solution is generated using the spectral method, following the same approach as in \citep{cho2023hypernetwork}. The solutions are computed over the time interval $t \in [0.0, 1.0]$ with 1001 time steps at intervals of 0.001, from which 21 time steps at intervals of 0.05 (from $t=0.0$ to $t=1.0$) are used in the experiments. We pre-train our model on the family of 2D CDR equations with coefficients ($\beta_x, \beta_y, \nu_x, \nu_y, \rho_1, \rho_2, \rho_3 \in {0.0, 0.5, 1.0}$). This results in a total of 2,187 unique PDEs. For PINN prior, because a PINN can be trained on only a single PDE at a time, we train an individual PINN for each of the 2,187 distinct 2D CDR equations.

\subsection{2D Shallow Water Equation}  \label{app:datasets_SWE}
The 2D shallow water equation (SWE) dataset used in Section~\ref{SSS:SWE} is obtained from PDEBench~\citep{takamoto2022pdebench}\footnote{Dataset source: https://github.com/pdebench/PDEBench}. The SWE is widely used as a benchmark dataset and corresponds to a hyperbolic PDE of the following form:
\begin{equation}
\begin{aligned}
\partial_t h + \partial_x (hu) + \partial_y (hv) = 0, \\ 
\partial_t (hu) + \partial_x \left(u^2h + \frac{1}{2} g_r h^2 \right) + \partial_y (uvh) = -g_r h \partial_x b, \\ 
\partial_t (hv) + \partial_y \left(v^2h + \frac{1}{2} g_r h^2 \right) + \partial_x (uvh) = -g_r h \partial_y b,
\end{aligned}
\end{equation} 
where $u$ and $v$ denote the horizontal and vertical velocities, $h$ is the water depth, and $g_r$ is the gravitational acceleration. The SWE is derived from the general Navier–Stokes equations and is well suited for modeling free-surface flow problems.

The dataset contains simulation results for the water height $h$ and is generated using the PyClaw~\citep{ketcheson2012pyclaw} Python package. It consists of samples produced from 1,000 different initial conditions. The simulation domain is $\mathcal{X} = [-2.5, 2.5]^2$ over the time interval $\mathcal{T} = [0.0, 1.0]$, with a resolution of $128 \times 128 \times 101$. From this, we subsample 21 time steps at intervals of 0.05 for use in our experiments.

\subsection{2D Compressible Navier-Stokes Equation}  \label{app:datasets_CNSE}

The 2D compressible Navier–Stokes equations dataset, referred to as CNSE, used in Section~\ref{SSS:CNSE} was obtained from PDEBench. The Navier–Stokes equation is one of the most widely adopted benchmarks for evaluating SciML PDE solvers, and in this work we focus on the compressible fluid case. The governing equations are given by
\begin{equation}\label{eq:CNSE}
\begin{aligned}
\quad \partial_t \rho + \nabla \cdot (\rho \mathbf{v}) &= 0, \\
\quad \rho \left( \partial_t \mathbf{v} + \mathbf{v} \cdot \nabla \mathbf{v} \right) &= -\nabla p + \eta \Delta \mathbf{v} + \left( \zeta + \frac{\eta}{3} \right) \nabla (\nabla \cdot \mathbf{v}), \\
\quad \partial_t \left( \epsilon + \frac{1}{2} \rho v^2 \right)& + \nabla \cdot \left[ \left( \epsilon + p + \frac{1}{2} \rho v^2 \right) \mathbf{v} - \mathbf{v} \cdot \boldsymbol{\sigma}' \right] = 0,
\end{aligned}
\end{equation}
where $\rho$ denotes the mass density, $\mathbf{v}$ the velocity, $p$ the gas pressure, $\epsilon = p / (\Gamma - 1)$ the internal energy with $\Gamma = 5/3$, $\boldsymbol{\sigma}'$ the viscous stress tensor, and $\eta, \zeta$ the shear and bulk viscosity, respectively.

Numerical solutions were computed in the spatial domain $ \mathcal{X} = [0.0, 1.0]^2$ and temporal domain $\mathcal{T} =[0.0, 1.0]$ using a second-order HLLC scheme~\citep{toro1994restoration} with the MUSCL method~\citep{van1979towards} for the inviscid part, and a central difference scheme for the viscous part. We use the solution computed by setting the Mach number as $M = |\mathbf{v}| / c_s = 1.0$, where $c_s = \Gamma p / \rho$ is the sound velocity, and both viscosity coefficients as $\eta = \zeta = 0.1$. Periodic boundary conditions were applied, and distinct initial conditions were generated from random fields. The dataset contains simulation results for $x$- and $y$-velocities $\mathbf{v}$, pressure $p$, and density $\rho$. The dataset consists of 1,000 samples simulated from different initial conditions, originally computed on a $128 \times 128 \times 21$ grid. Each feature in the dataset was normalized individually for the experiment. For our experiments, we employed a subsampled version of the dataset with resolution $64 \times 64 \times 5$. In the experiment we consider a field $ \mathbf{u}(x,t)=[\mathbf{v}(x,t), p(x,t), \rho(x,t)] \in\mathbb{R}^4$ for $x \in \mathcal{X}, \; t \in \mathcal{T}$. Here $\mathbf{u}(x,t)$ denotes a four-dimensional vector field representing four features of the system and the initial condition $\mathbf{u}(x,0.0) = \mathbf{u}_0(x)$ for $x \in \mathcal{X}$. We are interested in learning the trajectory-predicting operator $\mathcal{G}^\dagger : \mathbf{u}_0(x) \;\mapsto\; \big(\mathbf{u}(x,0.25), \, \mathbf{u}(x,0.5), \, \mathbf{u}(x,0.75), \, \mathbf{u}(x,1.0)\big),$ which maps the initial state $\mathbf{u}_0 \in L^2\big((0.0,1.0)^2;\mathbb{R}^4\big)$ to the solution at four future time instances.

\subsection{Airfoil}  \label{app:datasets_airfoil}

The Airfoil dataset used in Section~\ref{app:experi_Airfoil} was obtained from the dataset provided by~\citep{li2023transformer}\footnote{Dataset Source: https://github.com/BaratiLab/OFormer}. This dataset was originally generated in~\citep{pfaff2020learning} by solving the Euler equations for compressible flow using the finite volume method built into the SU2 library~\citep{palacios2013stanford}\footnote{Dataset and numerical solver source: https://github.com/merantix-momentum/gnn-bvp-solver}. The Euler equations have the following formulation:

\begin{equation}\label{eq:airfoil}
\begin{aligned}
  \partial_t \rho + \nabla \cdot (\rho \mathbf{v}) &= f_1, \\
\partial_t (\rho \mathbf{v}) + \nabla \cdot (\rho \mathbf{v} \otimes \mathbf{v} + p\mathbb{I}) &= \mathbf{f}_2, \\
\partial_t (\rho E) + \nabla \cdot (\rho E \mathbf{v} + p\mathbf{v}) &= f_3, \\
\rho := \rho(x,t), \ \mathbf{v} := \mathbf{v}(x,t), \ p := p(x,t), \ &x \in \mathcal{X}, \ t \in [0,T],
\end{aligned}
\end{equation}

where $\rho$ denotes the density, $\mathbf{v}$ the velocity field, $p$ the pressure, $E$ the total energy per unit mass, and $f_1,\mathbf{f}_2,f_3$ are generic source terms. \citet{li2023transformer} formulated this problem as a non-Markovian initial value problem and used a significantly larger time step size. The dataset spans $t \in \mathcal{T} = [0.0,4.8]$ with 101 time steps, each spaced by a time interval of 0.24. At each timepoint, the dataset contains values for 5,233 irregular mesh nodes, including node position, node type, velocity, pressure, and density. Similar to CNSE, we consider a field $ \mathbf{u}(x,t)=[\mathbf{v}(x,t), p(x,t), \rho(x,t)] \in\mathbb{R}^4$ for $x \in \mathcal{X}, \; t \in \mathcal{T}$. Here $\mathbf{u}(x,t)$ denotes a four-dimensional vector field representing four features of the system. The initial condition is given by $\mathbf{u}(x,0) = \mathbf{u}_0(x)$ for $x \in \Omega$. We want to learn the trajectory-predicting operator $\mathcal{G}^\dagger : \mathbf{u}_0(x) \;\mapsto\; \big(\mathbf{u}(x,1.2), \, \mathbf{u}(x,2.4), \, \mathbf{u}(x,3.6), \, \mathbf{u}(x,4.8)\big),$ which maps the initial state $\mathbf{u}_0 \in L^2\big(\mathcal{X};\mathbb{R}^4\big)$ to the solution at four future time instances.

\subsection{Darcy Flow}    \label{app:datasets_Darcy}
The Darcy Flow dataset used in Section~\ref{app:experi_Darcy} is also obtained from PDEBench. Darcy Flow is a widely adopted benchmark for evaluating operator learning models, formulated as a time-independent elliptic PDE. The governing equation is given by:
\begin{equation}\label{eq:darcy}
\begin{aligned}
  -\nabla (a(x) \nabla u(x)) &= f(x), \quad x \in \mathcal{X}, \\
  u(x) &= 0, \quad x \in \partial \mathcal{X},
\end{aligned}
\end{equation}
where the source term $f(x)$ is fixed to a constant value of $0.1$. The dataset consists of steady-state solutions $u$ corresponding to the viscosity fields $a$. To obtain these solutions, we simulate the following time-dependent formulation with random initial conditions until convergence to steady state:
\begin{equation}
    \partial_t u(x, t) - \nabla \left( a(x) \nabla u(x, t) \right) = f(x), \quad x \in \mathcal{X}
    \label{eq:darcy2}
\end{equation}
Numerical solutions are simulated using a second-order central difference scheme in both time and space. Each data sample is represented as an input–output pair, where the input is a viscosity field $a(x)$ and the output is the corresponding steady-state solution $u(x)$. Both inputs and outputs are discretized on a $128 \times 128$ spatial grid. The dataset contains 2,000 samples, each generated from distinct random initial conditions within the simulation domain $\mathcal{X} = [0.0, 1.0]^2$. Unlike the CNSE setting, here we aim to learn the operator mapping a viscosity field $a(x) \in L^2\big((0.0,1.0)^2;\mathbb{R}\big)$ to the corresponding steady-state solution $u(x)$ for $x \in \mathcal{X}$. Specifically, we want to learn the operator $\mathcal{G}^\dagger : a(x) \;\mapsto\; u(x)$.

\subsection{3D Diffusion-Reaction Equations} \label{app:datasets_3D_DR}
The formulation of the 3D diffusion-reaction equations, used in Section~\ref{app:experi_3d_gray_scott}, is given as follows.  
\begin{equation}
\begin{aligned}
u_t - \nabla \cdot (\boldsymbol{\nu} \nabla u) - \rho u\cdot(1-u) = 0,  \ x \in \mathcal{X}, \ t \in \mathcal{T}, \\
\end{aligned}
\end{equation}  
where $\mathcal{X} = [0, \pi] \times [0, \pi] \times [0, \pi]$ and  $\mathcal{T} = [0, 1]$ are solution domain with a resolution of $16 \times 16 \times 16 \times 21$. For generations, we used the same method and process with the family of 2D CDR equations. We conduct experiment on coefficients ($\nu_x, \nu_y, \nu_z, \rho \in {0.0, 0.5, 1.0, 1.5}$). This results in a total of 256 unique PDEs.

\subsection{3D Gray-Scott}    \label{app:datasets_gray_scott}
The 3D Gray-Scott equations dataset used in Appendix~\ref{app:experi_3d_gray_scott} is obtained from APEBench~\citep{koehler2024apebench}, a benchmark suite for autoregressive neural emulators of PDEs. In APEBench, 3D Gray-Scott is categorized as a reaction-diffusion system and is provided as a multi-channel PDE. The governing equations are given by
\begin{equation}
\begin{aligned}
\frac{\partial u_0}{\partial t} &= \nu_0 \nabla \cdot \nabla u_0 - u_0 u_1^2 + f(1-u_0), \\
\frac{\partial u_1}{\partial t} &= \nu_1 \nabla \cdot \nabla u_1 + u_0 u_1^2 - (f+k)u_1,
\end{aligned}
\end{equation}
where $u_0$ and $u_1$ denote the two chemical species, $\nu_0$ and $\nu_1$ are the diffusion coefficients, and $f$ and $k$ are the feed and kill rates, respectively. APEBench includes Gray-Scott among its supported semi-linear reaction-diffusion equations and uses a Fourier pseudo-spectral ETDRK solver for procedural trajectory generation. Following the APEBench setup, the Gray-Scott dynamics are generated in the physical parameterization. In particular, the default Gray-Scott configuration uses diffusion coefficients $\nu_0 = 2 \times 10^{-5}$ and $\nu_1 = 1 \times 10^{-5}$. For the Gray-Scott pattern type considered in APEBench, the feed and kill rates are selected according to the predefined pattern regimes introduced by Pearson~\cite{pearson1993complex}. Among those regimes, we only consider the case with $f=0.04$ and $k=0.06$. APEBench further notes that the 3D Gray-Scott experiments use a Gaussian-blob-type initialization for each channel. Both inputs and outputs are discretized on a $32 \times 32 \times 32$ spatial grid, and the dataset spans $t \in T = [0.0, 1000.0]$ with $51$ time steps for training and $t \in T = [0.0, 4000.0]$ with $201$ time steps for validation and testing, each spaced by a time interval of $20$. We subsample the dataset with five time intervals, i.e., $\Delta t=100$.

For our experiments, we use the 3D Gray-Scott dataset to evaluate operator learning in a higher-dimensional complex system. Let $u(x,t) = [u_0(x,t), u_1(x,t)] \in \mathbb{R}^2$ for $x \in \mathcal{X}$ and $t \in \mathcal{T}$ denote the two-channel state variable. Given the state at a current time step, the task is to predict the system state 5 time steps ahead. Thus, we learn the time-advancing operator $\mathcal{G}^{\dagger}: u(x,t) \mapsto u(x,t+\Delta t),$ which maps the current two-species concentration field to the future state after five time steps.

The specific information about PDE types used in each experiment is described in Table~\ref{table:Dataset}. Moreover, the detailed values regarding how many training, validation, and testing data points in each experiment are provided in Appendix~\ref{app:experimental_details}. For the spatiotemporal interpolation with unseen coefficients in the family of 2D CDR equation experiments, we use the same time range as in the corresponding task with seen coefficients.

\begin{table*}[htbp]
\caption{Time range of data points used for training, validation, and testing in all experiments. \label{table:Dataset}}
\small
\centering
\resizebox{1.0\linewidth}{!}{%
\begin{tabular}{llccccc}
\toprule
\multirow{2}{*}{\textbf{\begin{tabular}[l]{@{}l@{}} \textbf{PDE} \\ \textbf{(Section)} \end{tabular}}} & \multirow{2}{*}{\textbf{Task}} & \multirow{2}{*}{\textbf{Train}} & \multicolumn{2}{c}{\textbf{Valid}}    &  \multicolumn{2}{c}{\textbf{Test}}\\ 
\cmidrule(lr){4-5} \cmidrule(lr){6-7}
 &  &  & \textbf{Context} & \textbf{Query} &  \textbf{Context} & \textbf{Query} \\ 
\midrule
\multirow{7}{*}{\begin{tabular}[l]{@{}l@{}} Family of 2D \\ CDR equations\\ (Section~\ref{SS:CDR}) \end{tabular}} & \begin{tabular}[l]{@{}l@{}}Spatiotemporal \\ interpolation\end{tabular}   &    $t \in [0.0, 0.5]$    &   \begin{tabular}[c]{@{}c@{}}predefined in \\ $t \in [0.0, 0.5]$\end{tabular}   &   \begin{tabular}[c]{@{}c@{}}grid at \\ $t=0.25$\end{tabular}      &    \begin{tabular}[c]{@{}c@{}}predefined in \\ $t \in [0.0, 0.5]$\end{tabular}   &    \begin{tabular}[c]{@{}c@{}}grids at \\ $t \in \{0.05, 0.15, 0.35, 0.45\}$\end{tabular} \\
\cmidrule{2-7}
& \begin{tabular}[l]{@{}l@{}}Temporal \\ extrapolation\end{tabular} &    $t \in [0.0, 0.5]$    &   \begin{tabular}[c]{@{}c@{}} grid at \\ $t=0.5$\end{tabular}   &    \begin{tabular}[c]{@{}c@{}} grid at \\ $t=0.6$\end{tabular}    &    \begin{tabular}[c]{@{}c@{}} grid at  \\   $t=0.6$  \end{tabular}    &    \begin{tabular}[c]{@{}c@{}}  grids at  \\  $t \in [0.6, 1.0]$\end{tabular}  \\
\cmidrule{2-7}
& \begin{tabular}[l]{@{}l@{}} Unseen coeff \\ temporal \\ extrapolation\end{tabular} & - & - & - & \begin{tabular}[c]{@{}c@{}} grid at \\ $t=0.1$\end{tabular} & \begin{tabular}[c]{@{}c@{}} grids at \\ $t \in [0.2, 0.5]$\end{tabular} \\
\midrule
\multirow{4}{*}{\begin{tabular}[l]{@{}l@{}} 2D SWE \\ (Section~\ref{SSS:SWE}) \end{tabular}} & \begin{tabular}[l]{@{}l@{}}Spatiotemporal \\ interpolation\end{tabular} & $t \in [0.0, 0.5]$    &   \begin{tabular}[c]{@{}c@{}}predefined in \\ $t \in [0.0, 0.5]$\end{tabular}   &   \begin{tabular}[c]{@{}c@{}}grid at \\ $t=0.25$\end{tabular}      &    \begin{tabular}[c]{@{}c@{}}predefined in \\ $t \in [0.0, 0.5]$\end{tabular}   &    \begin{tabular}[c]{@{}c@{}}grids at \\ $t \in \{0.05, 0.15, 0.35, 0.45\}$\end{tabular} \\ 
\cmidrule{2-7}
& \begin{tabular}[l]{@{}l@{}}Temporal \\ extrapolation\end{tabular}  &  $t \in [0.0, 0.5]$    &   \begin{tabular}[c]{@{}c@{}} grid at \\ $t=0.5$\end{tabular}   &    \begin{tabular}[c]{@{}c@{}} grid at \\ $t=0.6$\end{tabular}    &    \begin{tabular}[c]{@{}c@{}} grid at  \\   $t=0.6$  \end{tabular}    &    \begin{tabular}[c]{@{}c@{}}  grids at  \\  $t \in [0.6, 1.0]$ \end{tabular} \\
\midrule
\begin{tabular}[l]{@{}l@{}} 2D CNSE \\ (Section~\ref{SSS:CNSE}) \end{tabular} & Operator learning &  \begin{tabular}[c]{@{}c@{}} $t \in \{0.0, 0.25,$ \\ $0.5, 0.75, 1.0\}$ \end{tabular} & \begin{tabular}[c]{@{}c@{}} grid at  \\   $t=0.0$  \end{tabular} & \begin{tabular}[c]{@{}c@{}} grids at  \\   $t \in \{0.25, 0.5, 0.75, 1.0\}$  \end{tabular} & \begin{tabular}[c]{@{}c@{}} grid at  \\   $t=0.0$  \end{tabular} & \begin{tabular}[c]{@{}c@{}} grids at  \\   $t \in \{0.25, 0.5, 0.75, 1.0\}$  \end{tabular} \\ 
\midrule
\begin{tabular}[l]{@{}l@{}} Airfoil \\ (Section~\ref{app:experi_Airfoil}) \end{tabular} & Operator learning &  \begin{tabular}[c]{@{}c@{}} $t \in \{0.0, 1.2,$ \\ $2.4, 3.6, 4.8\}$ \end{tabular} & \begin{tabular}[c]{@{}c@{}} grid at  \\   $t=0.0$  \end{tabular} & \begin{tabular}[c]{@{}c@{}} grids at  \\   $t \in \{1.2, 2.4, 3.6, 4.8\}$  \end{tabular} & \begin{tabular}[c]{@{}c@{}} grid at  \\   $t=0.0$  \end{tabular} & \begin{tabular}[c]{@{}c@{}} grids at  \\   $t \in \{1.2, 2.4, 3.6, 4.8\}$  \end{tabular} \\ 
\midrule
\begin{tabular}[l]{@{}l@{}}Darcy Flow \\ (Section~\ref{app:experi_Darcy}) \end{tabular} & Operator learning & \multicolumn{5}{c}{$a \mapsto u$}  \\ 
\midrule
\multirow{4}{*}{\begin{tabular}[l]{@{}l@{}} 3D diffusion-\\ reaction equations\\ (Section~\ref{app:experi_3d_dr}) \end{tabular}} & \begin{tabular}[l]{@{}l@{}}Spatiotemporal \\ interpolation\end{tabular}   &    $t \in [0.0, 0.5]$    &   \begin{tabular}[c]{@{}c@{}}predefined in \\ $t \in [0.0, 0.5]$\end{tabular}   &   \begin{tabular}[c]{@{}c@{}}grid at \\ $t=0.25$\end{tabular}      &    \begin{tabular}[c]{@{}c@{}}predefined in \\ $t \in [0.0, 0.5]$\end{tabular}   &    \begin{tabular}[c]{@{}c@{}}grids at \\ $t \in \{0.05, 0.15, 0.35, 0.45\}$\end{tabular} \\
\cmidrule{2-7}
& \begin{tabular}[l]{@{}l@{}}Temporal \\ extrapolation\end{tabular} &    $t \in [0.0, 0.5]$    &   \begin{tabular}[c]{@{}c@{}} grid at \\ $t=0.5$\end{tabular}   &    \begin{tabular}[c]{@{}c@{}} grid at \\ $t=0.6$\end{tabular}    &    \begin{tabular}[c]{@{}c@{}} grid at  \\   $t=0.6$  \end{tabular}    &    \begin{tabular}[c]{@{}c@{}}  grids at  \\  $t \in [0.6, 1.0]$\end{tabular}  \\
\midrule
\begin{tabular}[l]{@{}l@{}}3D Gray-Scott \\ (Section~\ref{app:experi_3d_gray_scott}) \end{tabular} & Operator learning &  \begin{tabular}[c]{@{}c@{}} $t \in \{0, 100,$ \\ $ \dots, 1000\}$ \end{tabular} & \begin{tabular}[c]{@{}c@{}} grid at  \\   $t \in \{1000, \dots, 3900\}$  \end{tabular} & \begin{tabular}[c]{@{}c@{}} grids at  \\   $t \in \{1100, \dots, 4000\}$  \end{tabular} & \begin{tabular}[c]{@{}c@{}} grid at  \\    $t \in \{1000, \dots, 3900\}$  \end{tabular} & \begin{tabular}[c]{@{}c@{}} grids at  \\   $t \in \{1100, \dots, 4000\}$  \end{tabular} \\ 

\bottomrule
\end{tabular}%
}
\end{table*}


\section{PINN Used in Prior Generation}
\label{app:PINN}

In this study, we utilize the PINN introduced by \citet{raissi2019physics} to generate PINN priors. The loss function employed during the training of the PINN is as follows:
\begin{equation}
\mathcal{L} = \mathcal{L}_u + \mathcal{L}_f + \mathcal{L}_b,
\end{equation}
where $\mathcal{L}_u, \mathcal{L}_f$ and $\mathcal{L}_b$ are defined as
\begin{equation}
\mathcal{L}_u = \frac{1}{N_u} \sum \left( \Tilde{u}(x, 0) - u(x, 0) \right)^2, \;\; \mathcal{L}_f = \frac{1}{N_f} \sum \left( \mathcal{N}(t, x, u, \alpha) \right)^2, \;\; \mathcal{L}_b = \frac{1}{N_b} \sum \left( \Tilde{u}(0, t) - \Tilde{u}(2 \pi, t) \right)^2,
\end{equation}
for $N_u$ points at initial condition, $N_f$ collocation points, and $N_b$ boundary points.

The generation time for the PINN prior varies depending on the number of training epochs and the PINN loss threshold. Based on the configuration used in our experiments, it took approximately 373 seconds to generate the prior for a single coefficient combination of the 2D CDR and utilized up to 4,552 MB of memory.


\section{Detailed Description of the Baselines} \label{app:Baselines}

We provided descriptions of each baseline discussed in Section~\ref{SS:Experimental_setup}. 
\begin{itemize}
    \item DeepONet is a neural operator architecture designed to learn operators mapping input functions to output functions. It combines branch and trunk networks to predict values in a function space.
    \item FNO learns solution operators for partial differential equations (PDEs) using the Fourier transform. By mapping inputs to a frequency domain, FNO captures complex patterns and long-range dependencies and models complex systems.
    \item F-FNO extends the Fourier Neural Operator by factorizing its layers to reduce computational costs. This factorization enables efficient learning of solution operators for complex systems.
    \item A-FNO is a variant of the Fourier Neural Operator that dynamically adjusts the resolution of the frequency domain during training. This adaptation aims to capture relevant features across scales, enabling more flexible modeling of complex systems.
    \item Poseidon is a multiscale operator transformer architecture designed to model solution operators for PDEs with continuous-in-time capability. It introduces time-conditioned layer normalization and processes multi-resolution input features to capture complex spatiotemporal dynamics.
    \item DPOT is a scalable neural operator model designed for PDEs, featuring a Fourier attention mechanism that enables efficient learning of complex spatiotemporal dynamics. Its architecture incorporates an auto-regressive denoising pre-training strategy, facilitating stable and flexible modeling across diverse PDE datasets with varying resolutions and geometries.
\end{itemize} 
In the spatiotemporal interpolation task, models other than DeepONet are excluded from comparison because they cannot process mesh-structured inputs. DeepONet requires input data from fixed positions for each dataset; therefore, for a fair comparison, our model is also provided with context data from the same positions during evaluation. For Poseidon and DPOT, which are scientific foundation models providing pre-trained weights, we utilized their released weights and performed fine-tuning. For a fair comparison, we trained them and our model using only half the number of epochs compared to the other models. Furthermore, computational costs, memory usage, experiment settings, and hyperparameters are comprehensively outlined in Appendix~\ref{app:experimental_details}.
\begin{table*}[ht]
\captionsetup{skip=1.5pt}
\caption{Major comparisons between baselines and our model on the shape of input and target data (context and query in our model). Neural operator baselines (excluding DeepONet) can only predict targets at predetermined locations and are limited to processing grid-structured input data. DeepONet, on the other hand, can handle inputs of various shapes but requires fixed-coordinate input values, though it can produce target solutions at desired coordinates. Our model accepts variable-sized coordinate–value context sets and supports coordinate queries, although accuracy depends on prior coverage and context informativeness.} 
\label{table:baseline_comparison}
    \begin{center}
    \begin{small}
    \makebox[1.0\textwidth][c]{
    \resizebox{1.0\textwidth}{!}{%
    \begin{tabular}{lcccc}
        \toprule
        \textbf{Data}   & \begin{tabular}[c]{@{}c@{}}\textbf{FNO \& F-FNO \& A-FNO }\\ \textbf{\& Poseidon \& DPOT}\end{tabular} & \textbf{LNO} & \textbf{DeepONet} & \textbf{Ours}\\
        \midrule
        Input data   & Predefined grid  &  Predefined mesh  &  Predefined mesh &  \textbf{Arbitrary Mesh} \\
        Target data  & Predefined grid  &  Predefined mesh  & \textbf{Arbitrary Mesh} & \textbf{Arbitrary Mesh}\\
        \bottomrule
    \end{tabular}}}
    \end{small}
    \end{center}
\end{table*}


\section{Model Architecture} \label{app:model_architecture}

\begin{figure}[ht]
  \centering
  \includegraphics[width=0.9\textwidth]{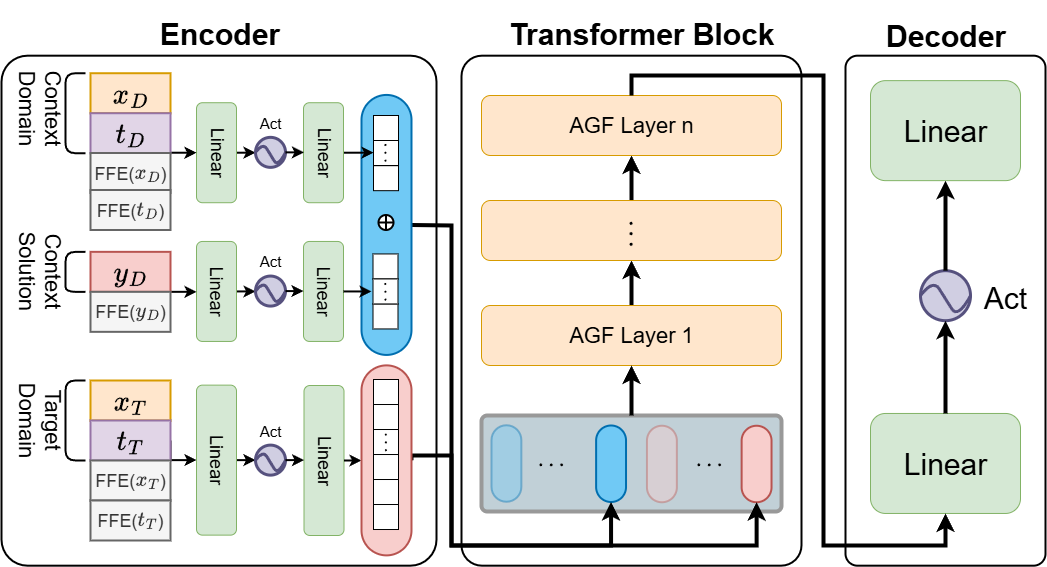}
  \caption{A diagram of PDE-PFN's architecture. The diagram is presented as an extension of Figure~\ref{fig:Figure2}. The model consists of three main components: an encoder, a Transformer block, and a decoder. FFE denotes the Fourier feature embedding, and Act denotes the activation function.}
\end{figure}

Our model is fundamentally based on the Prior-Fitted Network (PFN) architecture~\citep{muller2022transformers}, and consists of three main components: an encoder, a Transformer block, and a decoder. In this section, we describe the encoder. For clarity, we denote the training contexts and queries as $D$ and $T$, respectively, consistent with the main text. Each input in $D$ and $T$ is composed of spatial coordinates $x$, temporal coordinates $t$, and solution values $y$; for queries $T$, the solution values are masked. The encoder optionally augments the raw inputs with Fourier features before MLP processing; FFE dimension zero retains only the raw inputs.

\paragraph{Encoder}
The role of FFE is to augment the raw inputs with high-frequency components, enabling the model to better capture complex solution patterns. This advantage of using FFE in conjunction with MLPs has been verified in \citep{tancik2020fourier,ma2025robustifying}. For grid-based inputs, we can employ a discrete Fourier transform via fast Fourier transform (FFT). For coordinate-based inputs, we use sinusoidal feature mappings with predetermined frequencies. Specifically, given an input $v \in \mathbb{R}$, we define the embedding as
\begin{equation}
\text{FFE}(v; n)=[sin(v\cdot \omega_1), cos(v\cdot \omega_1), \cdots ,sin(v\cdot \omega_n),cos(v\cdot \omega_n)],
\end{equation}

where the frequency coefficients are set as $\omega_i = 2\pi / i$. After applying FFE, the encoder constructs three types of feature vectors:
$l_{CD} := [x_D, t_D, \text{FFE}(x_D), \text{FFE}(t_D)]$ for context domain information,
$l_{CS} := [y_D, \text{FFE}(y_D)]$ for context solutions, and
$l_{QD} := [x_T, t_T, \text{FFE}(x_T), \text{FFE}(t_T)]$ for query domain information, by concatenating the raw inputs and FFE results. Each of these vectors is passed through a separate MLP to learn PDE-specific embeddings. Each MLP consists of two linear layers and a single activation function between linear layers. For the activation, we employ a rational activation function~\citep{molina2019pade}, which has an adaptive nature and has been shown to provide greater flexibility compared to conventional nonlinearities. Finally, $l_{CD}$ and $l_{CS}$ are concatenated after passing each MLP to form the context representation used by the Transformer block.

\paragraph{Transformer block}
The encoder output is then processed through a Transformer block, which consists of a sequence of Transformer layers. Instead of the \textit{vanilla} Transformer layer, we adopt the attentive graph filter (AGF) layer~\cite{wi2025learning}, which provides greater flexibility while reducing computational complexity. The AGF layer enhances standard linear attention by learning attention in the singular value domain. Specifically, it decomposes the attention operation into a set of spectral components and adaptively learns their importance, enabling the model to capture the dependencies more effectively. 
Specifically, the attention matrix $A \in \mathbb{R}^{n \times m}$ can be decomposed using singular value decomposition (SVD) as $A=U \Sigma V^T$, where $U$ and $V$ are orthogonal matrices, and $\Sigma = \text{diag}(\sigma_1, \sigma_2, \ldots, \sigma_r)$ contains the singular values. Therefore, as discussed in Section~\ref{S:Method}, when using the AGF layer, we include a regularization term to ensure the orthogonality of the matrices $U$ and $V$. The regularization term is defined as follows:
$$
\mathcal{L}_{\text{AGF}} = \frac{1}{n^2} \left( \left\| U^\top U  - \mathbf{I} \right\| \right) + \frac{1}{m^2} \left( \left\| V V^\top - \mathbf{I} \right\| \right),
$$
where $\mathbf{I}$ is identity matrix. AGF learns to adaptively reweight these spectral components, effectively filtering the most informative modes while suppressing noise or redundant information.
By learning attention in the singular value domain, AGF provides a more powerful representation of global dependencies compared to conventional linear attention, while avoiding the quadratic complexity of the \textit{vanilla} Transformers.

To further enrich the model capacity, we separate the parameters used for self-attention and cross-attention within each layer. As illustrated in Figure~\ref{fig:Figure2}, self-attention is applied among the contexts $D$, while cross-attention allows queries $T$ to attend to contexts $D$. In the figure, self-attention and cross-attention are represented by blue and red arrows, respectively. The numbers of blue rods $D$ and red rods $T$ are described in Table~\ref{table:data_num} in Appendix~\ref{app:experimental_details}.

\paragraph{Decoder}
The output of the Transformer block is finally passed through a decoder, implemented as a simple MLP. Similar to the encoder, this MLP consists of two linear layers and a single activation function between linear layers. The decoder maps the latent representations produced by the Transformer into the predicted solution values at the queried coordinates.


\section{Training}
\label{app:Training Algorithm}

Unlike other task-specific models, our model is trained in an integrated manner, enabling it to solve multiple tasks simultaneously in that dataset. For cases where both spatiotemporal interpolation and temporal extrapolation tasks must be evaluated, such as in the family of 2D CDR equations and SWE datasets, we perform training with temporal extrapolation as the primary objective. In this setup, other baselines treat grid data points at $t_n$ and $t_{n+1}$ as fixed input–output pairs. In contrast, our model constructs pairs by randomly mixing the data points within each $(t_n, t_{n+1})$ pair independently, rather than preserving their exact correspondence. This strategy enables the model to learn in a way that supports both spatiotemporal interpolation and temporal extrapolation within a unified framework. For the CNSE, Airfoil, and Darcy Flow datasets, where we conducted experiments only on operator learning, random mixing was not applied.

\paragraph{Training algorithm}
We train our model as follows:

\begin{algorithm}[htbp]
\caption{Training our model}
\label{alg:train_transformer}
\begin{algorithmic}[1]
\STATE \textbf{Input:} {contexts $D$ and queries $T$ from prior $p(\mathcal{D})$ in dataset}
\STATE \textbf{Output:} {Our model $\tilde{u}_{\theta}$ which can approximate the posterior predictive mean}
\STATE Initialize the model $\Tilde{u}_{\theta}$
\FOR{$i=1$ to $n$}
    \STATE Sample $\boldsymbol{\alpha} \in \Omega$ and $D,T \subseteq \tilde{u}(\boldsymbol{\alpha}) \sim p(\mathcal{D})$ ($D := \{(x_D^{(i)}, t_D^{(i)})\}_{i=1}^{N_D}$, $T := \{(x_T^{(j)}, t_T^{(j)})\}_{j=1}^{N_T}$)
    \IF{dataset $\in$ \{Family of 2D CDR, SWE\}}
        \STATE Regenerate $D^\prime$ and $T^\prime$ by shuffling data points in $D \cup T$  independently within each $(t_n, t_{n+1})$ pair, while preserving $|D^\prime| = N_D$ and $|T^\prime| = N_T$
    \ENDIF
    \STATE Compute MSE loss $\mathcal{L}_{\boldsymbol{\alpha}} = \frac{1}{N_T} \sum_{j=1}^{N_T} \left\{ \tilde{u}_{\theta}(x_T^{(j)}, t_T^{(j)} | D^\prime_n) - \tilde{u}(x_T^{(j)}, t_T^{(j)})\right\}^2$
    \STATE Compute AGF regularization term $\mathcal{L}_{AGF}$ and objective function $\mathcal{L} = \mathcal{L}_{\boldsymbol{\alpha}} + \frac{1}{N_D}\mathcal{L}_{AGF}$
    
    \STATE Update parameters $\theta$ with an AdamW optimizer
\ENDFOR

\end{algorithmic}
\end{algorithm}

\section{Evaluation}
\label{app:Evaluation}
We employ $L_1$ absolute, $L_2$ relative, and $L_\infty$ relative errors between the model's prediction for test queries and the ground truth. The errors are then averaged over the target parameter space or test dataset. Given the true solution $y_{\alpha,k}$ and the corresponding prediction $\hat{y}_{\alpha,k}$ at the $k$-th time point out of a total of $K$ evaluation time points, each metric is computed as follows:
\begin{equation}
\begin{aligned}
L_p \ \text{abs error} = \frac{1}{|\Omega|\cdot K}\sum_{\alpha \in \Omega}\sum_{k=1}^{K} ||y_{\alpha,k}-\hat{y}_{\alpha,k} ||_{p} \ , \ L_p \ \text{rel error} = \frac{1}{|\Omega|\cdot K}\sum_{\alpha \in \Omega}\sum_{k=1}^{K} \frac{||y_{\alpha,k}-\hat{y}_{\alpha,k} ||_{p}}{||y_{\alpha,k}||_{p}}.
\end{aligned}
\end{equation}


\section{Additional Experiments} \label{app:experi}


\subsection{2D CDR Equations with systematically Noisy Prior Data}

In this experiment, we examine whether our model continues to perform well when the prior data is corrupted by systematic synthetic noise instead of being generated by PINNs. Specifically, we construct noisy prior data by injecting either Gaussian noise or uniform noise into the numerical solutions of the family of 2D CDR equations. For the Gaussian-noise setting, we add zero-mean Gaussian noise from $\mathcal{N}(0.0, 0.02)$, and for the uniform-noise setting, we add noise sampled from the uniform distribution $\mathcal{U}([-0.01,0.01])$. We then train our model on these noisy priors and evaluate it using the same procedure as in the main experiments.

\begin{table*}[ht]
\captionsetup{skip=1.5pt}
\caption{The evaluation results of our model trained using systematically noisy data for the spatiotemporal interpolation and temporal extrapolation tasks applied to the family of 2D CDR equations. The text in parentheses following “Ours” indicates the noise type and its standard deviation or range. They are measured at the seen coefficients ($\beta_x, \beta_y, \nu_x, \nu_y, \rho_1, \rho_2, \rho_3 \in \{0.0, 0.5, 1.0\}$). The best performance is marked in \textbf{bold} and the second-best performance is marked with \underline{underline}.}
\label{table:CDR_systemic_noise}
\small
\centering
    \centering
    \captionsetup{labelformat=empty}
    \resizebox{0.9\textwidth}{!}{%
    \begin{tabular}{llccccc}
    \toprule
    \textbf{Task}  & \textbf{Metric} & \textbf{Ours} & \textbf{Ours (PINN)} & \textbf{Ours (Gaussian 0.02)} & \textbf{Ours (Uniform $[-0.01,0.01]$)}  \\ 
    \midrule
    \multirow{3}{*}{\shortstack[l]{Spatiotemporal\\interpolation}} & $L_1$ Abs  & \underline{0.01218}  & 0.01507  & 0.01244  & \textbf{0.01060}  \\
     & $L_2$ Rel & \underline{0.01643}  & {0.02057}  & 0.01670  & \textbf{0.01450} \\
     & $L_\infty$ Rel & \underline{0.04600}  & 0.06584  & {0.04659}   & \textbf{0.04269} \\
    \midrule
    \multirow{3}{*}{\shortstack[l]{Temporal\\extrapolation}} & $L_1$ Abs  & \textbf{0.01474}  & 0.01940  & 0.01562  & \underline{0.01513} \\
    & $L_2$ Rel & \textbf{0.02261}  & 0.02705   & 0.02383   & \underline{0.02320} \\
    & $L_\infty$ Rel & \underline{0.08444}  & 0.08802  & 0.08750  & \textbf{0.08213} \\
    \bottomrule
    \end{tabular}
    }
\end{table*}

Table~\ref{table:CDR_systemic_noise} shows that predictive accuracy is largely maintained under the evaluated zero-mean Gaussian and uniform prior corruptions. In the spatiotemporal interpolation task, the model trained with uniform noise achieves the best performance on all three metrics, even outperforming the version trained on exact numerical priors. In the temporal extrapolation task, the model trained with uniform noise achieves the best performance on $L_\infty$ relative error and the second best performance on two metrics. The model trained with Gaussian noise also remains competitive across all metrics. These results support robustness in the tested noise regimes, without establishing robustness to arbitrary biased, correlated, or localized prior errors.

\subsection{Test Time Evaluation Given PINN Prior in the Family of 2D CDR Equations} \label{app:experi_CDR_given_noisy}

As an additional experiment on the family of 2D CDR equations, we modify the test procedure from Section~\ref{SSS:CDR_seen}. While keeping all other experimental settings the same, we compare the evaluation results when noisy PINN priors are provided as input during testing. This experiment is designed to assess the robustness of the models in producing accurate solutions despite noisy inputs.

\begin{table}[ht]
\captionsetup{skip=1.5pt}
\caption{The evaluation results for the spatiotemporal interpolation and temporal extrapolation task applied to the family of 2D CDR equations given a noisy PINN prior at evaluation. They are measured at the seen coefficients ($\beta_x, \beta_y, \nu_x, \nu_y, \rho_1, \rho_2, \rho_3 \in \{0.0, 0.5, 1.0\}$). The best performance is marked in \textbf{bold} and the second-best performance is marked with \underline{underline}.}
\label{table:CDR_seen_given_noisy}
\small
\centering
    \centering
    \captionsetup{labelformat=empty}
    \resizebox{1.0\columnwidth}{!}{%
    \begin{tabular}{llccccccccc}
    \toprule
    \textbf{Task}  & \textbf{Metric} & \textbf{Ours} & \textbf{Ours (PINN)}  &  \textbf{DeepONet} & \textbf{FNO} & \textbf{F-FNO} & \textbf{A-FNO} & \textbf{LNO}  & \textbf{Poseidon}   & \textbf{DPOT}  \\ 
    \midrule
    \multirow{3}{*}{\shortstack[l]{Spatiotemporal\\interpolation}} & $L_1$ Abs  & \textbf{0.01953}  & \underline{0.02445}  & 0.02540  & - & - & -  & -  & - \\
     & $L_2$ Rel & \textbf{0.02755}  & 0.03467  & \underline{0.03413}  & - & - & - & -  & -  & -  \\
     & $L_\infty$ Rel & \underline{0.10158}  & 0.10716  & \textbf{0.09286}  & - & - & - & -  & -  & -  \\
    \midrule
    \multirow{3}{*}{\shortstack[l]{Temporal\\extrapolation}} & $L_1$ Abs  & \textbf{0.01655}  & \underline{0.02099}  & 0.06888 & 0.03458 & 0.02816 & 0.05794  & 0.03048   & 0.14018 &  0.05051 \\
    & $L_2$ Rel  & \textbf{0.02486}  & \underline{0.02893}   & 0.08591  & 0.04268 & 0.03582 & 0.07903 & 0.03907 & 0.15380  & 0.08624  \\
    & $L_\infty$ Rel  & \underline{0.08830}  &  \textbf{0.09333}  & 0.23386  & 0.10203 & 0.10677 & 0.28805  &  0.11730  & 0.28481  & 0.08624  \\
    \bottomrule
    \end{tabular}%
    }
\end{table}

As shown in the experimental results presented in Table~\ref{table:CDR_seen_given_noisy}, Ours achieves the best performance, demonstrating strong robustness. For Ours (PINN), however, the performance in terms of two relative errors on the spatiotemporal interpolation task is inferior to that of DeepONet. This can be attributed to the fact that Ours (PINN) is pre-trained to predict solutions from PINN priors as inputs, which may cause the model to generate outputs closer to the PINN prior rather than the analytic solution when the prior is provided at test time. Nevertheless, since Ours (PINN) still achieves better performance in terms of $L_1$ absolute error, we can conclude that it also exhibits a certain degree of robustness.


\subsection{Test-Time Prediction with Biased Context Noise in the 2D CDR Family} 
\label{app:experi_biased}

Since Theorem~\ref{thm:mse_ppm} assumes unbiased noise, we further investigate how robust our model remains when the inference stage includes biased Gaussian noise added to the analytic solution. We set the standard deviation of Gaussian noise to $\sigma = 0.01$ and vary only the magnitude of the bias. The evaluation is performed on the temporal extrapolation task of the 2D CDR equations, using the $L_2$ relative error for comparison.

\begin{table}[ht]
\captionsetup{skip=1.5pt}
\caption{The evaluation results for the temporal extrapolation tasks applied to the family of 2D CDR equations. They are measured at the seen coefficients ($\beta_x, \beta_y, \nu_x, \nu_y, \rho_1, \rho_2, \rho_3 \in \{0.0, 0.5, 1.0\}$).}
\label{table:experi_biased}
\small
\centering
    \centering
    \captionsetup{labelformat=empty}
    \resizebox{1.0\columnwidth}{!}{%
    \begin{tabular}{llcccccccccc}
    \toprule
    \textbf{Bias} & \textbf{No noise} & \textbf{0.1\%} & \textbf{0.5\%} & \textbf{1.0\%} & \textbf{1.5\%} & \textbf{2.0\%} & \textbf{2.5\%} & \textbf{3.0\%} & \textbf{3.5\%} & \textbf{4.0\%}   \\ 
    \midrule
    $L_1$ Abs & 0.01474 & 0.01483 & 0.01472 & 0.01481 & 0.01514 & 0.01572 & 0.01664 & 0.01800 & 0.01985 & 0.02212  \\
    $L_2$ Rel & 0.02261  & 0.02271  &  0.02246 & 0.02244  & 0.02271  &  0.02330 & 0.02255  & 0.02573  &  0.02772  &  0.03009  \\
    $L_\infty$ Rel & 0.08444 & 0.08556 & 0.08459 & 0.08426 & 0.08490 & 0.08646 & 0.08901 & 0.09265 & 0.09733 & 0.10276  \\
    \bottomrule
    \end{tabular}%
    }
\end{table}

The experimental results in Table~\ref{table:experi_biased} show that when the bias is below 3\%, the performance degradation remains relatively minor compared to the bias level. However, once the bias reaches approximately 3\% or higher, the error begins to increase more noticeably. This indicates that PDE-PFN maintains robustness against moderate systematic bias, but large bias eventually leads to a degradation in accuracy as expected.


\subsection{Joint Coefficient and Temporal Extrapolation in the 2D CDR Family} 
\label{app:experi_2d_cdr_unseen_domain}

In the experiments on unseen coefficients presented in the main text, the spatiotemporal domain was restricted to the temporal range used during training. This design choice ensured that only the parameter domain included unseen values, preventing both parameter and temporal extrapolation from occurring simultaneously. In this section, we extend the evaluation to a more challenging setting in which both the parameter domain and the temporal domain involve unseen ranges. The unseen coefficient ranges follow those used in Section~\ref{SSS:CDR_unseen}, and for the temporal domain we adopt the evaluation range from the temporal extrapolation task in Section~\ref{SSS:CDR_seen}.

\begin{table}[ht]
\captionsetup{skip=1.5pt}
\caption{The evaluation results for the temporal extrapolation tasks applied to the family of 2D CDR equations. They are measured at the unseen coefficients and unseen temporal domain. The best performance is marked in \textbf{bold} and the second-best performance is marked with \underline{underline}.}
\label{table:CDR_unseen_unseen}
\small
\centering
    \centering
    \captionsetup{labelformat=empty}
    \resizebox{1.0\columnwidth}{!}{%
    \begin{tabular}{lllccccccccc}
    \toprule
    \textbf{Coefficients} & \textbf{Task}  & \textbf{Metric} & \textbf{Ours} & \textbf{Ours (PINN)}  &  \textbf{DeepONet} & \textbf{FNO} & \textbf{F-FNO} & \textbf{A-FNO} & \textbf{LNO} & \textbf{Poseidon}   & \textbf{DPOT}  \\ 
    \midrule
    \multirow{3}{*}{Interpolation} & \multirow{3}{*}{\shortstack[l]{Temporal\\extrapolation}} & $L_1$ Abs  & \textbf{0.00736} & \underline{0.00939} & 0.05139 & 0.10863 & 0.01405 & 0.04695 & 0.01839 & 0.13852 & 0.04610 \\
    & & $L_2$ Rel  & \textbf{0.00954} & \underline{0.01178} & 0.06179 & 0.14380 & 0.01636 & 0.06227 & 0.02049 & 0.14993 & 0.04676  \\
    & & $L_\infty$ Rel  & \textbf{0.03193} & \underline{0.03693} & 0.16008 & 0.34858 & 0.03847 & 0.21750 & 0.04512 & 0.26293 & 0.04676  \\ 
    \midrule
    \multirow{3}{*}{Extrapolation} & \multirow{3}{*}{\shortstack[l]{Temporal\\extrapolation}} & $L_1$ Abs  & \textbf{0.01027} & \underline{0.02063} & 0.05514 & 0.03887 & 0.02312 & 0.05013 & 0.02560 & 0.14661 & 0.05186 \\
    & & $L_2$ Rel  & \textbf{0.01068} & \underline{0.02169} & 0.06450 & 0.05114 & 0.02470 & 0.06351 & 0.02677 & 0.15650 & 0.05189 \\
    & & $L_\infty$ Rel  & \textbf{0.01661} & \underline{0.03474} & 0.16555 & 0.13713 & 0.04591 & 0.20955 & 0.04911 & 0.27502 & 0.05189  \\
    \bottomrule
    \end{tabular}%
    }
\end{table}

The experimental results in Table~\ref{table:CDR_unseen_unseen} show that our model continues to achieve the best performance. In contrast, the baselines experience greater difficulty when both types of extrapolation are required simultaneously, performing noticeably worse than our model in this setting.


\subsection{2D CDR Equations with Different Initial Condition} 
\label{app:experi_2d_cdr_different_initial}

In this experiment, we examine whether our model continues to perform well on the 2D CDR equations from the main paper when the dataset is generated using a different initial condition. Specifically, we replace the original initial condition, $1 + \sin(2x)\sin(2y)$, with $1 + \sin(3x)\sin(3y)$ to construct a new 2D CDR equation dataset. We then conduct training and evaluation following the same procedure used in the main experiments. 

\begin{table}[ht]
\captionsetup{skip=1.5pt}
\caption{The evaluation results for the spatiotemporal interpolation and temporal extrapolation tasks applied to the new family of 2D CDR equations. They are measured in the same way of Section~\ref{SSS:CDR_seen}. The best performance is marked in \textbf{bold} and the second-best performance is marked with \underline{underline}.}
\label{table:CDR_seen2}
\small
\centering
    \centering
    \captionsetup{labelformat=empty}
    \resizebox{1.0\columnwidth}{!}{%
    \begin{tabular}{llccccccccc}
    \toprule
    \textbf{Task}  & \textbf{Metric} & \textbf{Ours} &  \textbf{DeepONet} & \textbf{FNO} & \textbf{F-FNO} & \textbf{A-FNO}  & \textbf{LNO}  & \textbf{Poseidon}   & \textbf{DPOT}  \\ 
    \midrule
    \multirow{3}{*}{\shortstack[l]{Spatiotemporal\\interpolation}} & $L_1$ Abs  & \textbf{0.01339}  & \underline{0.07394}  & \multirow{3}{*}{-} & \multirow{3}{*}{-} & \multirow{3}{*}{-} & \multirow{3}{*}{-} & \multirow{3}{*}{-}  & \multirow{3}{*}{-} \\
     & $L_2$ Rel & \textbf{0.01885}  & \underline{0.09376}  & & & & & & \\
     & $L_\infty$ Rel & \textbf{0.06606}  & \underline{0.26320}  & & & & & &  \\
    \midrule
    \multirow{3}{*}{\shortstack[l]{Temporal\\extrapolation}} & $L_1$ Abs  & \textbf{0.02017}  & 0.03988 & \underline{0.03184} & 0.03371 & 0.08342 & 0.04696 &  0.09397 & 0.12628  \\
    & $L_2$ Rel & \textbf{0.02929}  & 0.05282 & \underline{0.04160} & 0.04192 & 0.09967 & 0.06282 & 0.10352 & 0.13633   \\
    & $L_\infty$ Rel & \textbf{0.10793}  & 0.17487   & 0.12671  & \underline{0.10980} & 0.27716 & 0.18313 & 0.23855 & 0.13633    \\
    \bottomrule
    \end{tabular}%
    }
\end{table}

The experimental results in Table~\ref{table:CDR_seen2} show that our model continues to achieve the best performance even when evaluated on datasets generated with a new initial condition. This confirms that our model does not merely perform well on a single fixed initial condition, but instead generalizes robustly across different initial conditions. Furthermore, the experiments on the SWE and CNSE datasets in the main paper also support this conclusion, as both datasets consist of samples generated from varying initial conditions under the same underlying PDE. Thus, our model is not limited to datasets constructed with a single, unified initial condition.


\subsection{2D CDR Equations with Random Initial Condition Sampled from Uniform Distribution} 
\label{app:experi_2d_cdr_uniform_initial}

In this experiment, we further examine whether our model continues to perform well on the family of 2D CDR equations when the initial condition is no longer fixed and instead varies randomly at the pixel level. Specifically, for each PDE instance, we generate the initial condition by sampling each spatial pixel independently from a uniform distribution. We then conduct training and evaluation following the same procedure used in the main experiments.

\begin{table*}[ht]
\captionsetup{skip=1.5pt}
\caption{The evaluation results for the spatiotemporal interpolation and temporal extrapolation tasks applied to the family of 2D CDR equations with random initial condition sampled from uniform distribution. They are measured at the seen coefficients ($\beta_x, \beta_y, \nu_x, \nu_y, \rho_1, \rho_2, \rho_3 \in \{0.0, 0.5, 1.0\}$). The best performance is marked in \textbf{bold} and the second-best performance is marked with \underline{underline}.}
\label{table:CDR_uniform_initial}
\small
\centering
    \centering
    \captionsetup{labelformat=empty}
    \resizebox{1.0\textwidth}{!}{%
    \begin{tabular}{llccccccccc}
    \toprule
    \textbf{Task}  & \textbf{Metric} & \textbf{Ours} & \textbf{Ours (PINN)} &  \textbf{DeepONet} & \textbf{FNO} & \textbf{F-FNO} & \textbf{A-FNO}  & \textbf{LNO}  & \textbf{Poseidon}   & \textbf{DPOT}  \\ 
    \midrule
    \multirow{3}{*}{\shortstack[l]{Spatiotemporal\\interpolation}} & $L_1$ Abs  & 0.1599  & \underline{0.1513}  & \textbf{0.1401}  & \multirow{3}{*}{-} & \multirow{3}{*}{-} & \multirow{3}{*}{-} & \multirow{3}{*}{-} & \multirow{3}{*}{-}  & \multirow{3}{*}{-} \\
     & $L_2$ Rel & \underline{0.2005}  & \textbf{0.1884}  & 0.2177  & & & & & & \\
     & $L_\infty$ Rel & \underline{0.4253}  & \textbf{0.3874}  & 0.6944  & & & & & &  \\
    \midrule
    \multirow{3}{*}{\shortstack[l]{Temporal\\extrapolation}} & $L_1$ Abs  & \textbf{0.0432}  & {0.0553}  & 0.0788 & 0.0492 & \underline{0.0449} & 0.0710 & 0.0480 & 0.1581 &  0.0720 \\
    & $L_2$ Rel & \textbf{0.0578}  & {0.0719}   & 0.0985  & 0.0641 & \textbf{0.0578} & 0.0940 & \underline{0.0603} & 0.1766  & 0.2117  \\
    & $L_\infty$ Rel & \textbf{0.1588}  & 0.1667   & 0.2706  & 0.1750 & \underline{0.1591} & 0.3008 & 0.1631 & 0.3449  & {0.2117}  \\
    \bottomrule
    \end{tabular}
    }
\end{table*}

The experimental results in Table~\ref{table:CDR_uniform_initial} show that our model continues to achieve strong performance even when the initial condition is randomly sampled from a uniform distribution at the pixel level. In the spatiotemporal interpolation task, Ours (PINN) achieves the best performance on two metrics, while Ours achieves the second-best performance on two of them. In the temporal extrapolation task, Ours achieves the best performance on the three metrics. These results indicate that PDE-PFN remains robust even when the initial condition varies substantially across samples and that its performance is not dependent on a fixed or smooth initial-condition structure.

\subsection{Operator Learning in Airfoil} 
\label{app:experi_Airfoil}

To evaluate input flexibility in operator learning, we conduct experiments on the Airfoil dataset defined on an irregular mesh structure. This dataset consists of 2,000 samples, of which 700 are used for training, 100 for validation, and 200 for testing. The target operator is formulated similarly to that in Section~\ref{SSS:CNSE}: given the initial condition, the model is required to predict the solutions at four time steps, $t = 1.2, 2.4, 3.6,$ and $4.8$ (see Appendix~\ref{app:datasets_airfoil} for a formal description). Most of the baselines used in Section~\ref{S:Experiments} rely on data defined on a regular grid, so that cannot be applied here, since the Airfoil dataset is defined on an irregular mesh. Consequently, among the existing baselines, we only use DeepONet and additionally include Oformer~\citep{li2023transformer} as a new baseline.

\begin{table}[ht]
\captionsetup{skip=1.5pt}
\caption{The evaluation results for the operator learning task applied to the Airfoil dataset. The best performance is marked in \textbf{bold} and the second-best performance is marked with \underline{underline}.}
\label{table:Airfoil}
\small
\centering
    \centering
    \resizebox{0.55\columnwidth}{!}{%
    \captionsetup{labelformat=empty}
    \begin{tabular}{llcccc}
    \toprule
    \textbf{Metric} & \textbf{Ours} & \textbf{Ours (PINN)}  &  \textbf{DeepONet} & \textbf{Oformer}  \\ 
    \midrule
    $L_1$ Abs  & \underline{0.11047}  & \textbf{0.10755} & 0.23160  & 0.12039	 \\
    $L_2$ Rel & \underline{0.18831}  & \textbf{0.18357}  & 0.44101  & 0.19744   \\
    $L_\infty$ Rel &   0.33584 & \underline{0.32887}  & 0.73076  & \textbf{0.32318}   \\
    \bottomrule
    \end{tabular}%
    }
\end{table}
The difference from the experimental results reported in the original Oformer paper arises because their setting involved using the first four time steps as input to predict subsequent time steps, whereas our experiment is formulated based on the initial condition only. The experimental results in Table~\ref{table:Airfoil} show that both versions of our model outperform the baselines on two metrics. These results demonstrate that our model's ICL capability effectively extends to irregular mesh data and can generalize well to new PDE problems. Detailed information on the best hyperparameter settings can be found in Appendix~\ref{app:hyperparameter}.

\subsection{Operator Learning in Darcy Flow} \label{app:experi_Darcy}
To further evaluate the PDE generalization ability of our model in operator learning, we conducted additional experiments on the Darcy Flow dataset. This dataset consists of 2,000 samples, with 1,400 used for training, 200 for validation, and 400 for testing. The operator learning task is defined as predicting the steady-state solution $u(x)$ from a given viscosity field $a(x)$ (see Appendix~\ref{app:datasets_Darcy} for a formal description). Unlike the CNSE dataset, where both context and query represent the same feature, the Darcy Flow dataset involves mapping between different quantities. This makes it a distinct form of operator learning task.

\begin{table}[ht]
\captionsetup{skip=1.5pt}
\caption{The evaluation results for the operator learning task applied to Darcy Flow. The best performance is marked in \textbf{bold} and the second-best performance is marked with \underline{underline}.}
\label{table:Darcy}
\small
\centering
    \centering
    \captionsetup{labelformat=empty}
    \resizebox{1.0\columnwidth}{!}{%
    \begin{tabular}{lccccccccc}
    \toprule
    \textbf{Metric} & \textbf{Ours} & \textbf{Ours (PINN)}  &  \textbf{DeepONet} & \textbf{FNO} & \textbf{F-FNO} & \textbf{A-FNO} & \textbf{LNO}  & \textbf{Poseidon}  & \textbf{DPOT}  \\ 
    \midrule
    $L_1$ Abs  & 0.00796  & \underline{0.00742} & 0.00996  & 0.00792	 & 0.00806	 & 0.00906	& 0.01413 & 0.00998  & \textbf{0.00726}  \\
    $L_2$ Rel & \underline{0.28771}  & \textbf{0.28091}  & 0.40759  & 0.29292 & 0.34149 & 0.40357 & 0.51778 & 0.41001  & 0.51505  \\
    $L_\infty$ Rel &   \textbf{0.39694} & \underline{0.40124}  & 0.59057  & 0.46398 & 0.59381 & 0.80674 & 0.61245 & 0.52933  & 0.51505   \\
    \bottomrule
    \end{tabular}%
    }
\end{table}

The experimental results in Table~\ref{table:Darcy} show that both versions of our model outperform the baselines on two metrics. These findings further confirm that our model maintains strong PDE generalization ability even when the operator learning task requires transferring between different physical quantities. Detailed information on the best hyperparameter settings can be found in Appendix~\ref{app:hyperparameter}.


\subsection{Spatiotemporal Interpolation and Temporal extrapolation in 3D Diffusion-Reaction equations} 
\label{app:experi_3d_dr}

To further explore the potential of our model for 3D PDEs, we conduct additional experiments on 3D diffusion–reaction equations. We construct a 3D diffusion–reaction equation dataset by extending the same procedure used for generating the 2D CDR equations. This formulation includes three diffusion coefficients $(\nu_x, \nu_y, \nu_z)$ and one reaction coefficient $\rho$, each sampled from ${0.0, 0.5, 1.0, 1.5}$, resulting in 256 distinct equations. We perform experiments on the same two tasks as in the 2D CDR equations—spatiotemporal interpolation and temporal extrapolation—using DeepONet, FNO, and DPOT as baselines, since only they can handle 3D datasets among baselines.

\begin{table}[ht]
\captionsetup{skip=1.5pt}
\caption{The evaluation results for the spatiotemporal interpolation and temporal extrapolation tasks applied to the 3D diffusion reaction equations. They are measured at the seen coefficients ($\nu_x, \nu_y, \nu_z, \rho_1 \in \{0.0, 0.5, 1.0, 1.5\}$). The best performance is marked in \textbf{bold} and the second-best performance is marked with \underline{underline}.}
\label{table:3D_dr}
\small
\centering
    \centering
    \captionsetup{labelformat=empty}
    \resizebox{0.65\columnwidth}{!}{%
    \begin{tabular}{llccccc}
    \toprule
    \textbf{Task}  & \textbf{Metric} & \textbf{Ours}  & \textbf{Ours (PINN)} &  \textbf{DeepONet} & \textbf{FNO}  & \textbf{DPOT}  \\ 
    \midrule
    \multirow{3}{*}{\shortstack[l]{Spatiotemporal\\interpolation}} & $L_1$ Abs  & \textbf{0.02685}  & 0.02843 & \underline{0.02816} & \multirow{3}{*}{-} & \multirow{3}{*}{-}  \\
     & $L_2$ Rel & \textbf{0.03711} & 0.04144 & \underline{0.03981}  &   &  \\
     & $L_\infty$ Rel & \textbf{0.10938} & 0.12823 & \underline{0.12724} &  &  \\
    \midrule
    \multirow{3}{*}{\shortstack[l]{Temporal\\extrapolation}} & $L_1$ Abs  & \textbf{0.01031} & 0.01039  & 0.02421  & 0.01521 & \underline{0.01485}  \\
    & $L_2$ Rel & \textbf{0.01376} & \underline{0.01391} & 0.03033   & 0.01824  &   0.01638 \\
    & $L_\infty$ Rel & \underline{0.04180} & 0.04545 & 0.11530   & 0.05545  & \textbf{0.01638} \\
    \bottomrule
    \end{tabular}%
    }
\end{table}

The experimental results in Table~\ref{table:3D_dr} show that our model outperforms the baselines in spatiotemporal interpolation in all metrics and in temporal extrapolation on two metrics, demonstrating its potential to generalize effectively to 3D datasets.

\subsection{Operator Learning in 3D Gray-Scott equation} 
\label{app:experi_3d_gray_scott}
To further evaluate the PDE generalization ability of our model in a more challenging 3D setting, we conduct additional experiments on the 3D Gray-Scott equations from APEBench~\cite{koehler2024apebench}. In this experiment, we focus on the operator learning task, where the models are required to learn the one-step time-advancing operator of the system. Specifically, let $u(x,t)=[u_0(x,t),u_1(x,t)] \in \mathbb{R}^2$ denote the two-channel state of the Gray-Scott system at spatial coordinate $x$. Given the state $u(x,t)$ at a current time step $t$, the model is asked to predict the state at the next time step $u(x,t+\Delta t)$. We compare our method with DeepONet, FNO, and DPOT, which are widely used baselines for operator learning, and additionally report the performance of our model trained with PINN priors.

\begin{table}[ht]
\captionsetup{skip=1.5pt}
\caption{The evaluation results for the operator learning task in 3D Gray-Scott equation. The best performance is marked in \textbf{bold} and the second-best performance is marked with \underline{underline}.}
\label{table:3D_gray_scott}
\small
\centering
    \centering
    \captionsetup{labelformat=empty}
    \resizebox{0.65\columnwidth}{!}{%
    \begin{tabular}{lccccc}
    \toprule
    \textbf{Metric} & \textbf{Ours}  & \textbf{Ours (PINN)} &  \textbf{DeepONet} & \textbf{FNO}  & \textbf{DPOT}  \\ 
    \midrule
    $L_1$ Abs      & \textbf{0.00152} & \underline{0.00177} & 0.01622   & 0.00558  & 0.01458  \\
    $L_2$ Rel      & \textbf{0.02626} & \underline{0.03004} & 0.24546   & 0.08019  & 0.22860 \\
    $L_\infty$ Rel & \textbf{0.07482} & \underline{0.10394} & 0.57927   & 0.19076  & 0.26938 \\
    \bottomrule
    \end{tabular}
    }
\end{table}

The experimental results in Table~\ref{table:3D_gray_scott} show that both versions of our model substantially outperform all baselines across all evaluation metrics. In particular, Ours achieves the best performance on all three metrics, while Ours (PINN) consistently achieves the second-best performance. These results demonstrate that PDE-PFN can generalize effectively to a complex 3D system and can learn the corresponding operator with high accuracy even when trained with noisy PINN priors. The strong performance gap over the baselines further supports the effectiveness of our in-context learning framework for higher-dimensional PDE problems.


\subsection{Sensitivity Analysis to the Number of Test Context Points in the Family of 2D CDR Equation} 
\label{app:experi_Sensitivity}

In this section, we empirically examine how PDE-PFN behaves as the number of test context $\overline{D}$ observations increases. Specifically, we evaluate the sensitivity of the neural network's posterior predictive mean to the number of the test context set $\overline{D}$ provided during the test process.

To this end, we conduct an evaluation of spatiotemporal interpolation on the family of 2D CDR equations experiment by varying the number of $\overline{D}$ while keeping other factors unchanged, including the best hyperparameter settings. The results, depicted in Figure~\ref{fig:Sensitivity}, clearly demonstrate that as the size of $\overline{D}$ increases, the error consistently decreases across all three evaluation metrics. This trend is qualitatively consistent with Corollary~\ref{cor:context_size_bayes_risk}, which concerns non-increasing population-optimal MSE for nested context sets rather than guaranteed monotonic improvement of a finite trained model.

\begin{figure}[htbp]
    \centering
    \includegraphics[width=1.0\linewidth]{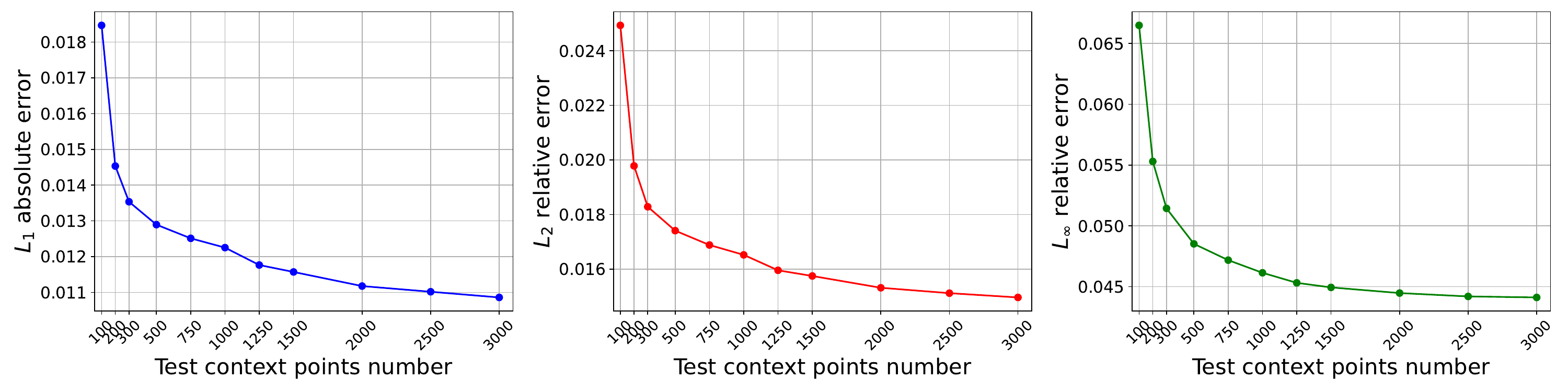}
    \caption{Sensitivity to the number of test context points $\overline{D}$ in the family of 2D CDR equations. Prediction errors decrease as more context observations are provided, which is consistent with the posterior-predictive-mean interpretation under squared loss.}
    \label{fig:Sensitivity}
\end{figure}

These empirical findings are consistent with Corollary~\ref{cor:context_size_bayes_risk}, which states that the population-optimal MSE is non-increasing as more context observations are provided. We emphasize that this experiment does not verify convergence of the full posterior distribution. Rather, it shows that PDE-PFN benefits from more informative context sets, as expected from the PPM interpretation under squared loss. Overall, the decreasing error trend suggests that PDE-PFN effectively benefits from more informative context sets, consistent with the PPM interpretation under squared loss.


\subsection{Ablation Study} 
\label{app:experi_ablation}

In this section, we conduct an ablation study to analyze the necessity of the key components in our model. Using the 2D CDR equation experiments, we replace AGF with a \textit{vanilla} Transformer layer and substitute the rational activation with GELU to evaluate the impact of each component on performance.

\begin{table}[ht]
\captionsetup{skip=1.5pt}
\caption{The evaluation results for the spatiotemporal interpolation and temporal extrapolation tasks applied to the family of 2D CDR equations. “Ours w.o. agf,” “Ours w.o. act,” and “Ours w.o. all” denote replacing AGF with a vanilla Transformer block, replacing the rational activation with GELU, and replacing both, respectively. They are measured at the seen coefficients ($\beta_x, \beta_y, \nu_x, \nu_y, \rho_1, \rho_2, \rho_3 \in \{0.0, 0.5, 1.0\}$). The best performance is marked in \textbf{bold} and the second-best performance is marked with \underline{underline}.}
\label{table:experi_ablation1}
\small
\centering
    \centering
    \captionsetup{labelformat=empty}
    \resizebox{0.65\columnwidth}{!}{%
    \begin{tabular}{llcccc}
    \toprule
    \textbf{Task}  & \textbf{Metric} & \textbf{Ours} &  \textbf{Ours w.o. agf} & \textbf{Ours w.o. act}  & \textbf{Ours w.o. all}  \\ 
    \midrule
    \multirow{3}{*}{\shortstack[l]{Spatiotemporal\\interpolation}} & $L_1$ Abs  & \underline{0.01218}  & 0.01242 & \textbf{0.01040} & 0.01780 \\
     & $L_2$ Rel & 0.01643  & \underline{0.01625}  &  \textbf{0.01495} & 0.02281 \\
     & $L_\infty$ Rel & 0.04600  & \textbf{0.04297} & \underline{0.04399} & 0.05736 \\
    \midrule
    \multirow{3}{*}{\shortstack[l]{Temporal\\extrapolation}} & $L_1$ Abs  & \textbf{0.01474}  & 0.01776  & \underline{0.01725} & 0.02454  \\
    & $L_2$ Rel & \textbf{0.02261}  & 0.02779   & \underline{0.02665}  & 0.03437  \\
    & $L_\infty$ Rel & \textbf{0.08444} & 0.09554   & \underline{0.09376} & 0.10625 \\
    \bottomrule
    \end{tabular}%
    }
\end{table}

The experimental results in Table~\ref{table:experi_ablation1} show that although the ablated variants may occasionally perform slightly better on the interpolation task, our full model consistently achieves superior performance on the more challenging temporal extrapolation task. This supports our claim that the proposed architectural components enhance the model’s ability to generalize across diverse tasks in a stable and efficient manner.

We additionally examine whether the robustness observed in Section~\ref{SS:CDR} is preserved by conducting the same PINN-prior evaluation on the “Ours with sinusoidal activation’’ and “Ours w.o. AGF’’ variants.

\begin{table}[ht]
\captionsetup{skip=1.5pt}
\caption{The evaluation results for the spatiotemporal interpolation and temporal extrapolation tasks applied to the family of 2D CDR equations. “Ours w.o. agf,” “Ours w.o. act,” and “Ours w.o. all” denote replacing AGF with a vanilla Transformer block, replacing the rational activation with GELU, and replacing both, respectively. They are measured at the seen coefficients ($\beta_x, \beta_y, \nu_x, \nu_y, \rho_1, \rho_2, \rho_3 \in \{0.0, 0.5, 1.0\}$). The best performance is marked in \textbf{bold} and the second-best performance is marked with \underline{underline}.}
\label{table:experi_ablation2}
\small
\centering
    \centering
    \captionsetup{labelformat=empty}
    \resizebox{0.65\columnwidth}{!}{%
    \begin{tabular}{llccc}
    \toprule
    \textbf{Task}  & \textbf{Metric} & \textbf{Ours} &  \textbf{Ours w.o. agf} & \textbf{Ours with sinusoidal}   \\ 
    \midrule
    \multirow{3}{*}{\shortstack[l]{Spatiotemporal\\interpolation}} & $L_1$ Abs  & \textbf{0.01507}  & 0.01667 & \underline{0.01482}  \\
     & $L_2$ Rel & \textbf{0.02057}  & 0.02171  &  \underline{0.02096}  \\
     & $L_\infty$ Rel & \underline{0.06584}  & \textbf{0.05536} & {0.06946}  \\
    \midrule
    \multirow{3}{*}{\shortstack[l]{Temporal\\extrapolation}} & $L_1$ Abs  & \textbf{0.01940}  & \underline{0.02105}  & 0.02671   \\
    & $L_2$ Rel & \textbf{0.02705}  & \underline{0.03030}   & 0.04159    \\
    & $L_\infty$ Rel & \textbf{0.08802} & \underline{0.09518}   & 0.14922  \\
    \bottomrule
    \end{tabular}%
    }
\end{table}

Table~\ref{table:experi_ablation2} shows that our model achieves the best results on both tasks, confirming strong robustness, whereas both ablations exhibit degraded performance. AGF and rational activation are most beneficial for temporal extrapolation and robustness under PINN-generated prior data, while interpolation performance can be achieved by simpler variants. We therefore view these components as stability-oriented rather than universally performance-dominating.


\subsection{Computational Cost Analysis related to the Number of Context and Query Points} 
\label{app:experi_cost_analysis}

In this section, we analyze how the number of context and query points affects the training and inference time as well as memory cost. For both training and inference, we report the average per-iteration runtime measured over 10 iterations and record the peak GPU memory usage. When evaluating the effect of context size, we fix the number of query points to 4,096. Likewise, when evaluating the effect of query size, we fix the number of context points to 4,096. Training and inference are profiled under BF16 mixed precision in this section.

\begin{table}[ht]
\captionsetup{skip=1.5pt}
\caption{Training and inference performance with different \textbf{context lengths}.}
\label{table:experi_context_length}
\small
\centering
\captionsetup{labelformat=empty}
\resizebox{0.7\columnwidth}{!}{%
\begin{tabular}{lcccccc}
\toprule
\textbf{Context length} & \textbf{1,024} & \textbf{2,048} & \textbf{4,096} & \textbf{8,192} & \textbf{12,288} & \textbf{16,384} \\
\midrule
Training time (sec)   & 0.104 & 0.102 & 0.101 & 0.107 & 0.136 & 0.165 \\
Training memory (MB)  & 280.0 & 329.8 & 420.7 & 617.2 & 800.4 & 987.4 \\
Inference time (sec)  & 0.015 & 0.015 & 0.022 & 0.037 & 0.052 & 0.069 \\ 
Inference memory (MB) & 182.1 & 228.7 & 323.6 & 510.7 & 702.1 & 889.7 \\
\bottomrule
\end{tabular}
}
\end{table}

\begin{table}[ht]
\captionsetup{skip=1.5pt}
\caption{Training and inference performance with different \textbf{query lengths}.}
\label{table:experi_query_length}
\small
\centering
\captionsetup{labelformat=empty}
\resizebox{0.7\columnwidth}{!}{%
\begin{tabular}{lcccccc}
\toprule
\textbf{Query length} & \textbf{1,024} & \textbf{2,048} & \textbf{4,096} & \textbf{8,192} & \textbf{12,288} & \textbf{16,384} \\
\midrule
Training time (sec)   & 0.101 & 0.100 & 0.101 & 0.123 & 0.148 & 0.176 \\
Training memory (MB)  & 872.7 & 1010.1 & 1265.4 & 1788.6 & 2303.0 & 2831.2 \\
Inference time (sec)  & 0.02 & 0.02 & 0.021 & 0.024 & 0.027 & 0.03 \\
Inference memory (MB) & 242.3 & 270.1 & 323.1 & 433.6 & 540.7 & 648.1 \\
\bottomrule
\end{tabular}
}
\end{table}

As shown in Tables~\ref{table:experi_context_length} and~\ref{table:experi_query_length}, runtime and memory show approximately linear scaling with context or query length. These empirical measurements characterize the evaluated configuration and do not by themselves establish asymptotic computational complexity.

\subsection{Context Identifiability in the CDR Family}

We examine whether the model can distinguish dynamics when the initial context is identical. We select two seen CDR instances,
$$ q_1=(0,0,0.5,0.5,0.5,0.5,0.5),\qquad q_2=(1,1,0.5,0.5,0.5,0.5,0.5), $$
with coefficient order $(\beta_x,\beta_y,\nu_x,\nu_y,\rho_1,\rho_2,\rho_3)$. These instances have the same field at $t=0$ but evolve differently. We fix $N_D=1,024$ and start from the complete initial field. For each $m\in\{0,4,16,32,64\}$, we replace $m$ randomly selected initial observations with observations at $t=0.1$ at the same spatial locations. Both the time coordinate and the observed value are updated. We then predict the complete $32\times32$ field at $t=0.2$ without gradient updates.

To quantify how strongly the predictions distinguish the instances, we define separation recovery as
$$ R_{\mathrm{sep}}= \frac{\|\hat{u}_{q_1}-\hat{u}_{q_2}\|_2} {\|u_{q_1}-u_{q_2}\|_2}. $$
Here, $R_{\mathrm{sep}}=0$ indicates identical predictions, whereas $R_{\mathrm{sep}}=1$ indicates recovery of the reference separation magnitude, not necessarily correct prediction fields. Results for $m>0$ are reported as mean $\pm$ standard deviation over ten random observation selections; $m=0$ has no selection variability.

\begin{table}[ht]
\captionsetup{skip=1.5pt}
\caption{Context identifiability with fixed context cardinality. Replacing initial observations with later observations introduces information about the differing dynamics. Standard deviations reflect observation-selection variability, not training-seed variability.}
\label{table:context_identifiability}
\small
\centering
\captionsetup{labelformat=empty}
\resizebox{0.6\columnwidth}{!}{%
\begin{tabular}{lccc}
\toprule
\textbf{$m$} & \textbf{Separation recovery} & \textbf{$L_2$ Rel: $q_1$} & \textbf{$L_2$ Rel: $q_2$}  \\
\midrule
0 & 0.000 & 0.3059 & 0.2649 \\
4 &  0.073 $\pm$ 0.195 & 0.3041 $\pm$ 0.0021 & 0.2638 $\pm$ 0.0012 \\
16 & 0.270 $\pm$ 0.310 & 0.2970 $\pm$ 0.0057 & 0.2600 $\pm$ 0.0017 \\
32 & 0.570 $\pm$ 0.337 & 0.2874 $\pm$ 0.0060 & 0.2550 $\pm$ 0.0013 \\
64 & 0.887 $\pm$ 0.342 & 0.2692 $\pm$ 0.0109 & 0.2465 $\pm$ 0.0025 \\
\bottomrule
\end{tabular}
}
\end{table}

With identical initial contexts, a deterministic predictor necessarily produces identical predictions. Introducing informative observations increases separation recovery and reduces both prediction errors, although substantial error remains. The variability across selections indicates that observation location matters. This experiment illustrates that prediction depends on distinctions present in the observed context; flexible context input does not resolve unobserved ambiguity.

\subsection{Sensitivity to Context Time}

We evaluate context-time sensitivity for seen-coefficient CDR temporal prediction without retraining. The context contains the complete $32\times32$ spatial state, so $N_D=1,024$ remains fixed. We vary the context time over $t_c\in\{0.45,0.50,0.60,0.65\}$ while keeping the query fields fixed at $t\in\{0.7,0.8,0.9,1.0\}$. All queried values are later than the context and remain masked in the model input.

\begin{table}[ht]
\captionsetup{skip=1.5pt}
\caption{Sensitivity to the time of the observed context snapshot, with fixed context cardinality and future query times. The same trained checkpoint is used throughout.}
\label{table:sesitivity_Context_Time}
\small
\centering
\captionsetup{labelformat=empty}
\resizebox{0.5\columnwidth}{!}{%
\begin{tabular}{lccc}
\toprule
\textbf{Context time $t_c$} & \textbf{$L_1$ Abs} & \textbf{$L_2$ Rel} & \textbf{$L_\infty$ Rel}  \\
\midrule
0.45 & 0.01458 & 0.02199 & 0.08245 \\
0.50 & 0.01437 & 0.02165 & 0.08024 \\
0.60 & 0.01474 & 0.02261 & 0.08444 \\
0.65 & 0.01604 & 0.02472 & 0.09530 \\
\bottomrule
\end{tabular}
}
\end{table}

Errors remain similar across the evaluated snapshots but do not decrease monotonically as the context approaches the query times. Contexts at $t_c=0.45$ and $0.50$ lie within the training-time domain, whereas those at $0.60$ and $0.65$ lie outside it; this distributional difference may contribute to the observed pattern. The results demonstrate flexibility across these complete snapshots, rather than invariance to context selection or robustness to all observation patterns.

\subsection{Periodic Boundary Consistency}

We evaluate periodic boundary consistency in the seen-coefficient CDR temporal extrapolation setting. Given the observed state $u(\cdot,0.6)$, we independently query exact endpoint pairs at $t\in\{0.7,0.8,0.9,1.0\}$ and examine the differences
$$ \delta_x(y,t)=\hat{u}(0,y,t)-\hat{u}(\pi,y,t),\qquad \delta_y(x,t)=\hat{u}(x,0,t)-\hat{u}(x,\pi,t). $$
The endpoint $\pi$ is absent from the stored grid, making this an off-grid query diagnostic. DeepONet is evaluated at the same endpoint pairs. Neither model is retrained or supplied with explicit boundary-condition constraints for this evaluation.

\begin{table}[ht]
\captionsetup{skip=1.5pt}
\caption{Periodic endpoint mismatch for seen-coefficient CDR temporal extrapolation. Lower values indicate greater consistency of independently predicted periodic endpoint pairs.}
\label{table:periodic_boundary_consistency}
\small
\centering
\captionsetup{labelformat=empty}
\resizebox{0.6\columnwidth}{!}{%
\begin{tabular}{lccc}
\toprule
\textbf{Model} & \textbf{Periodic BC RMSE} & \textbf{Relative periodic BC error}  \\
\midrule
DeepONet & 0.07097 & 0.06898 \\
PDE-PFN  & \textbf{0.02081} & \textbf{0.02119} \\
\bottomrule
\end{tabular}
}
\end{table}

PDE-PFN exhibits lower periodic mismatch in this evaluated setting. This provides evidence of approximate periodic consistency of its predictions, but not exact boundary-condition enforcement, PDE residual satisfaction, or conservation-law guarantees. The observed state is supplied as context rather than reconstructed, so this experiment does not evaluate initial-condition prediction error.

\subsection{Comparison with GNOT} \label{app:gnot_comparison}

\paragraph{Experimental protocol.}
We compare PDE-PFN with GNOT~\cite{hao2023gnot}, a Transformer-based neural operator that supports irregular meshes and pointwise prediction. The comparison matches datasets, splits, observed solution information, target coordinates, training-example counts, optimization budget, and evaluation metrics. We report the PDE-PFN configuration trained on numerical priors. The CDR experiments use the same pre-trained PDE-PFN checkpoint without task-specific gradient updates, whereas the SWE and CNSE experiments use downstream fine-tuning as in the main experiments. For unseen-coefficient CDR temporal extrapolation, the context is at $t=0.1$ and queries are at $t\in\{0.2,0.3,0.4,0.5\}$, within the training-time domain.

\begin{table}[ht]
\captionsetup{skip=1.5pt}
\caption{Comparison with GNOT on the 2D CDR family. PDE-PFN uses numerical priors. temporal extrapolation with unseen coefficients remains within the training-time domain. Lower errors are better; the lower value for each metric in each setting is bold.}
\label{tab:gnot_cdr}
\small
\centering
\captionsetup{labelformat=empty}
\resizebox{0.8\columnwidth}{!}{%
\begin{tabular}{lcccc}
\toprule
Task & Model & $L_1$ Abs & $L_2$ Rel & $L_\infty$ Rel \\
\midrule
\multirow{2}{*}{\textit{Seen coefficients: spatiotemporal interpolation}} &
PDE-PFN & \textbf{0.01218} & \textbf{0.01643} & \textbf{0.04600} \\
& GNOT    & 0.01477 & 0.01928 & 0.04619 \\
\midrule
\multirow{2}{*}{\textit{Seen coefficients: temporal extrapolation}} &
PDE-PFN & \textbf{0.01474} & \textbf{0.02261} & \textbf{0.08444} \\
& GNOT    & 0.02377 & 0.03126 & 0.08807 \\
\midrule
\multirow{2}{*}{\textit{Coefficient interpolation: spatiotemporal interpolation}} &
PDE-PFN & \textbf{0.01241} & 0.01658 & 0.04274 \\
& GNOT    & 0.01269 & \textbf{0.01657} & \textbf{0.03930} \\
\midrule
\multirow{2}{*}{\textit{Coefficient interpolation: temporal extrapolation}} &
PDE-PFN & \textbf{0.08957} & \textbf{0.11990} & \textbf{0.29526} \\
& GNOT    & 0.15797 & 0.22053 & 0.57981 \\
\midrule
\multirow{2}{*}{\textit{Coefficient extrapolation: spatiotemporal interpolation}} &
PDE-PFN & \textbf{0.01661} & \textbf{0.02398} & \textbf{0.08324} \\
& GNOT    & 0.04139 & 0.05339 & 0.12084 \\
\midrule
\multirow{2}{*}{\textit{Coefficient extrapolation: temporal extrapolation}} &
PDE-PFN & \textbf{0.01027} & \textbf{0.01068} & \textbf{0.01661} \\
& GNOT    & 0.06625 & 0.08667 & 0.22352 \\
\bottomrule
\end{tabular}
}
\end{table}

\begin{table}[ht]
\captionsetup{skip=1.5pt}
\caption{Comparison with GNOT on SWE after downstream fine-tuning. PDE-PFN is initialized from pre-training on numerical CDR priors. Seen/unseen refers to initial-condition samples used/not used in downstream training. Lower errors are better; the lower value for each metric in each setting is bold.}
\label{tab:gnot_swe}
\small
\centering
\captionsetup{labelformat=empty}
\resizebox{0.8\columnwidth}{!}{%
\begin{tabular}{lcccc}
\toprule
Task & Model & $L_1$ Abs & $L_2$ Rel & $L_\infty$ Rel \\
\midrule
\multirow{2}{*}{\textit{Seen initial conditions: spatiotemporal interpolation}} &
PDE-PFN & \textbf{0.00208} & \textbf{0.00739} & \textbf{0.05897} \\
& GNOT    & 0.00317 & 0.00816 & 0.06265 \\
\midrule
\multirow{2}{*}{\textit{Seen initial conditions: temporal extrapolation}} &
PDE-PFN & \textbf{0.01548} & \textbf{0.03273} & \textbf{0.20590} \\
& GNOT    & 0.03076 & 0.07564 & 0.42872 \\
\midrule
\multirow{2}{*}{\textit{Unseen initial conditions: spatiotemporal interpolation}} &
PDE-PFN & \textbf{0.00213} & \textbf{0.00754} & \textbf{0.05986} \\
& GNOT    & 0.00317 & 0.00818 & 0.06233 \\
\midrule
\multirow{2}{*}{\textit{Unseen initial conditions: temporal extrapolation}} &
PDE-PFN & \textbf{0.01571} & \textbf{0.03323} & \textbf{0.20686} \\
& GNOT    & 0.03106 & 0.07636 & 0.43261 \\
\bottomrule
\end{tabular}
}
\end{table}

\begin{table}[ht]
\captionsetup{skip=1.5pt}
\caption{Comparison with GNOT on CNSE trajectory prediction. PDE-PFN is fine-tuned from pre-training on numerical CDR priors. Lower errors are better; the lower value for each metric is bold.}
\label{tab:gnot_cnse}
\small
\centering
\captionsetup{labelformat=empty}
\resizebox{0.45\columnwidth}{!}{%
\begin{tabular}{lcccc}
\toprule
Model & $L_1$ Abs & $L_2$ Rel & $L_\infty$ Rel \\
\midrule
PDE-PFN & \textbf{0.09497} & \textbf{0.54833} & \textbf{0.63533} \\
GNOT    & 0.09601 & 0.57140 & 0.65714 \\
\bottomrule
\end{tabular}
}
\end{table}

\paragraph{Results.}
Tables~\ref{tab:gnot_cdr}--\ref{tab:gnot_cnse} report the CDR, SWE, and CNSE comparisons. PDE-PFN achieves lower errors on all three metrics in 10 of the 11 settings. The exception is CDR spatiotemporal interpolation at interpolated coefficients: PDE-PFN has lower $L_1$ error, whereas GNOT has slightly lower $L_2$ error
($0.01657$ versus $0.01658$) and lower $L_\infty$ error. PDE-PFN improves all three metrics in the other five CDR
settings, all four SWE settings, and CNSE. These results extend the comparison to a related context-query neural operator. However, the models differ in both training formulation and architecture, so this comparison does not isolate the contribution of PFN-style prior fitting from architectural design.


\section{Detailed Results}
\subsection{Detailed Experimental Results of PDE-PFN with Standard Deviation} 
\label{app:experi_detail_results}

In this section, we provide more detailed results for the family of 2D CDR equation experiments presented in Section~\ref{SS:CDR}. To further assess the stability of our model, we report the results over 5 random seeds in the form of mean $\pm$ standard deviation. As shown in Table~\ref{table:detailed_results}, our model demonstrates consistently stable performance across all evaluation settings, including seen coefficients, coefficient interpolation, and coefficient extrapolation, for both spatiotemporal interpolation and temporal extrapolation tasks. These results further support that the strong performance of PDE-PFN is not due to a particular random seed, but is consistently observed across repeated runs.

\begin{table}[ht]
\captionsetup{skip=1.5pt}
\caption{Detailed results of our model on the family of 2D CDR equations. All results are reported as mean $\pm$ standard deviation over 5 random seeds.}
\label{table:detailed_results}
\small
\centering
    \centering
    \captionsetup{labelformat=empty}
    \resizebox{\columnwidth}{!}{%
    \begin{tabular}{lcccccc}
    \toprule
    \multirow{2}{*}{\textbf{Metric}} & \multicolumn{2}{c}{\textbf{Seen Coeffcient}} & \multicolumn{2}{c}{\textbf{Coeff Interpolation}}  & \multicolumn{2}{c}{\textbf{Coeff Extrapolation}}   \\ 
    \cmidrule(lr){2-3} \cmidrule(lr){4-5} \cmidrule(lr){6-7}
    & \shortstack[c]{\textbf{Spatiotemporal}\\\textbf{Interpolation}} & \shortstack[c]{\textbf{Temporal}\\\textbf{Extrapolation}} & \shortstack[c]{\textbf{Spatiotemporal}\\\textbf{Interpolation}} & \shortstack[c]{\textbf{Temporal}\\\textbf{Extrapolation}} & \shortstack[c]{\textbf{Spatiotemporal}\\\textbf{Interpolation}} & \shortstack[c]{\textbf{Temporal}\\\textbf{Extrapolation}}  \\ 
    \midrule
    $L_1$ Abs      & 0.01218 $\pm$ 0.00218 & 0.01474 $\pm$ 0.00346 & 0.01241 $\pm$ 0.00170 & 0.08957 $\pm$ 0.01627 & 0.01661 $\pm$ 0.00132 & 0.01027 $\pm$ 0.00348  \\
    $L_2$ Rel      & 0.01643 $\pm$ 0.00250 & 0.02261 $\pm$ 0.00449 & 0.01658 $\pm$ 0.00176 & 0.11990 $\pm$ 0.02624 & 0.02398 $\pm$ 0.00202 & 0.01068 $\pm$ 0.00443 \\
    $L_\infty$ Rel & 0.04600 $\pm$ 0.00383 & 0.08444 $\pm$ 0.01088 & 0.04274 $\pm$ 0.00036 & 0.29526 $\pm$ 0.08702 & 0.08324 $\pm$ 0.00441 & 0.01661 $\pm$ 0.01137 \\
    \bottomrule
    \end{tabular} 
    }
\end{table}

\subsection{Qualitative results}
Figures~\ref{visualization_2D_CDR} and~\ref{visualization_2D_SWE} present qualitative comparisons for the 2D CDR and SWE experiments, respectively. For the CDR equation, PDE-PFN closely reconstructs the spatial patterns of the ground-truth solutions in both spatiotemporal interpolation and temporal extrapolation, while DeepONet exhibits larger deviations, particularly at later query times. For SWE, PDE-PFN better preserves the water-height profiles than DeepONet in the interpolation example and FNO in the temporal extrapolation example. These visual comparisons are consistent with the quantitative results reported in Tables~\ref{table:CDR_seen} and~\ref{table:SWE}.

\begin{figure}[h!]
  \centering
  \includegraphics[width=0.49\textwidth]{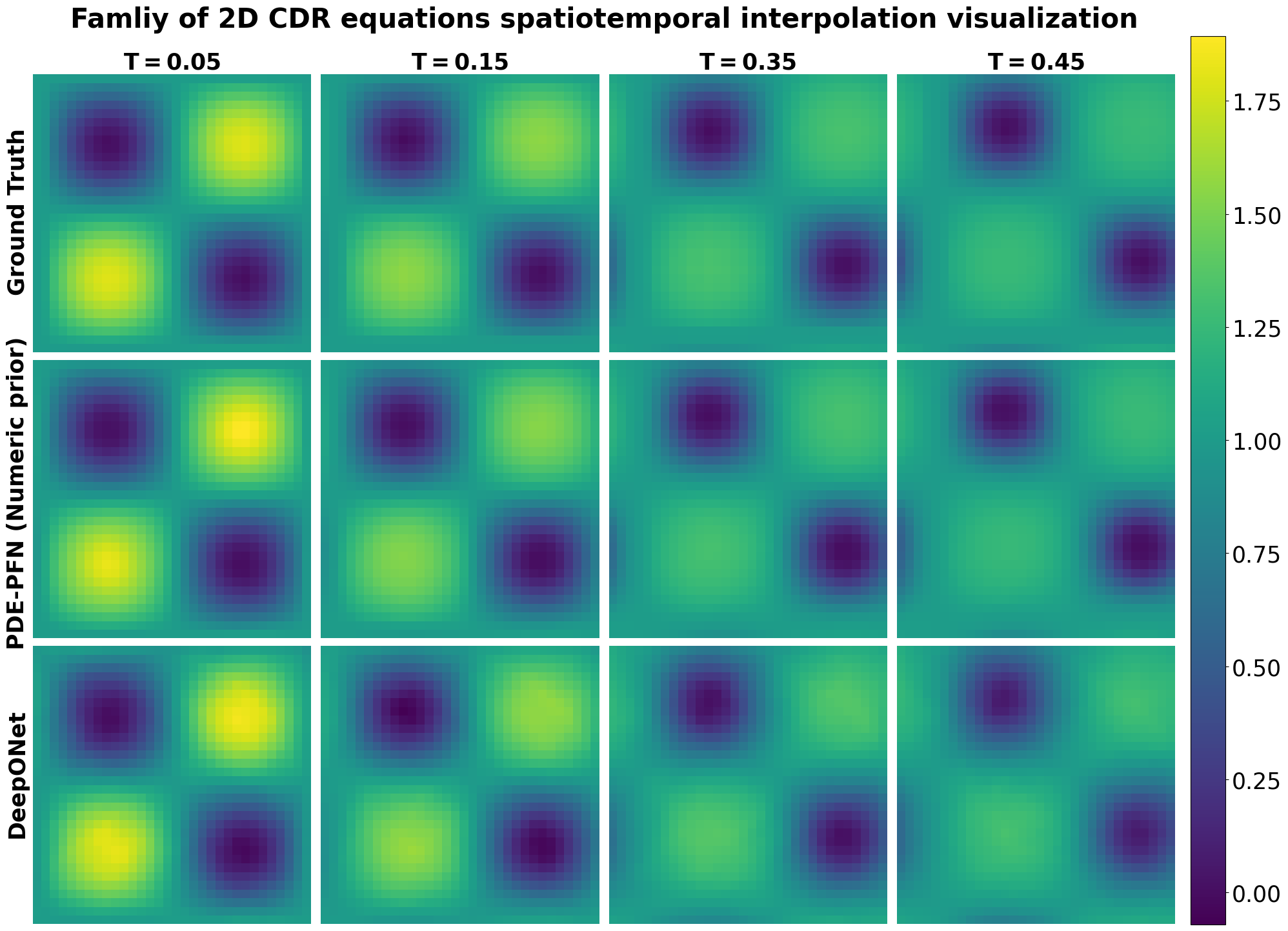}
  \includegraphics[width=0.49\textwidth]{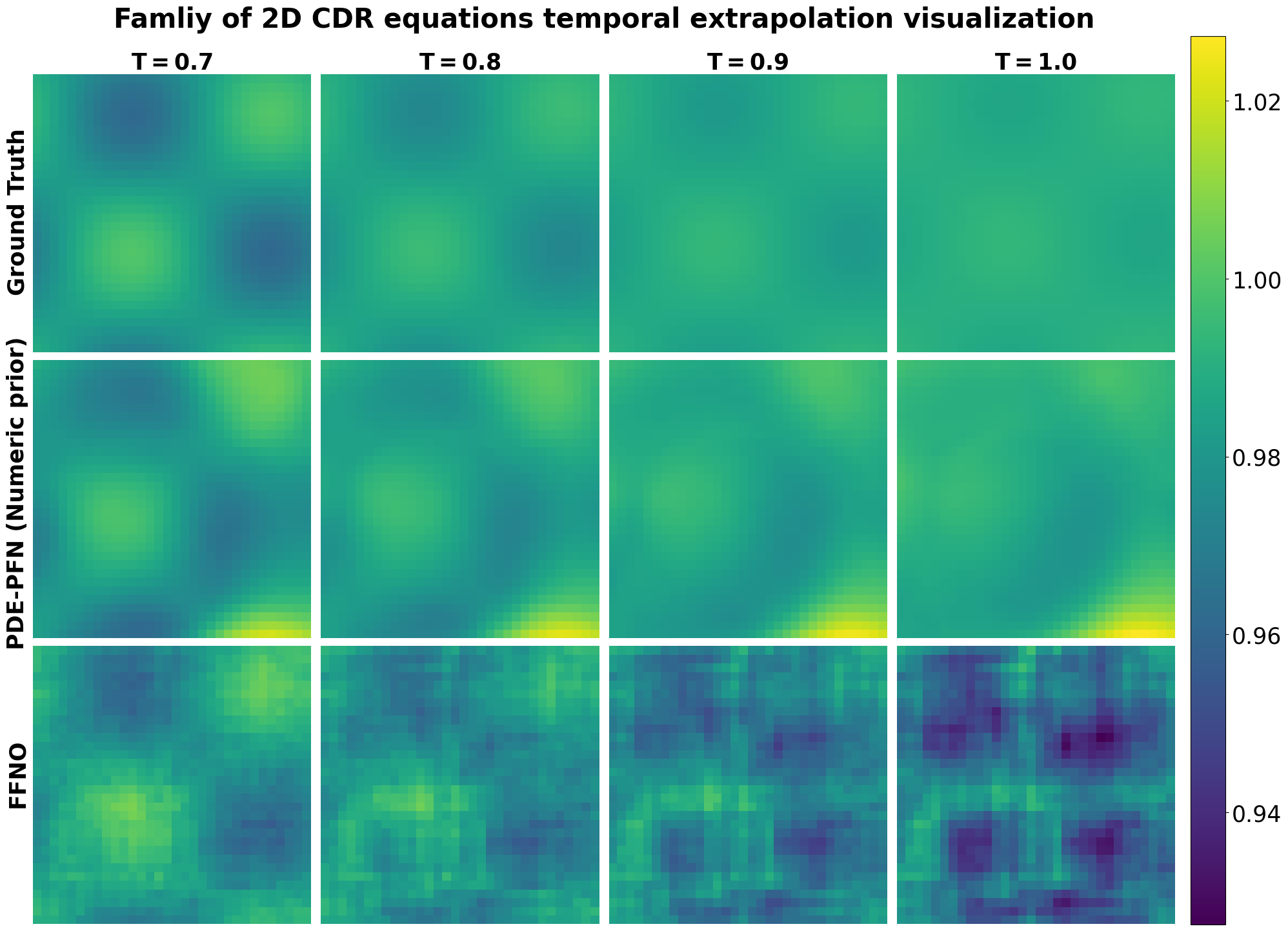}
  \caption{Visualization for the family of 2D CDR equation experiments. The left figure is for the spatiotemporal interpolation task and right figure is for the temporal extrapolation \label{visualization_2D_CDR}}
\end{figure}

\begin{figure}[h!]
  \centering
  \includegraphics[width=0.49\textwidth]{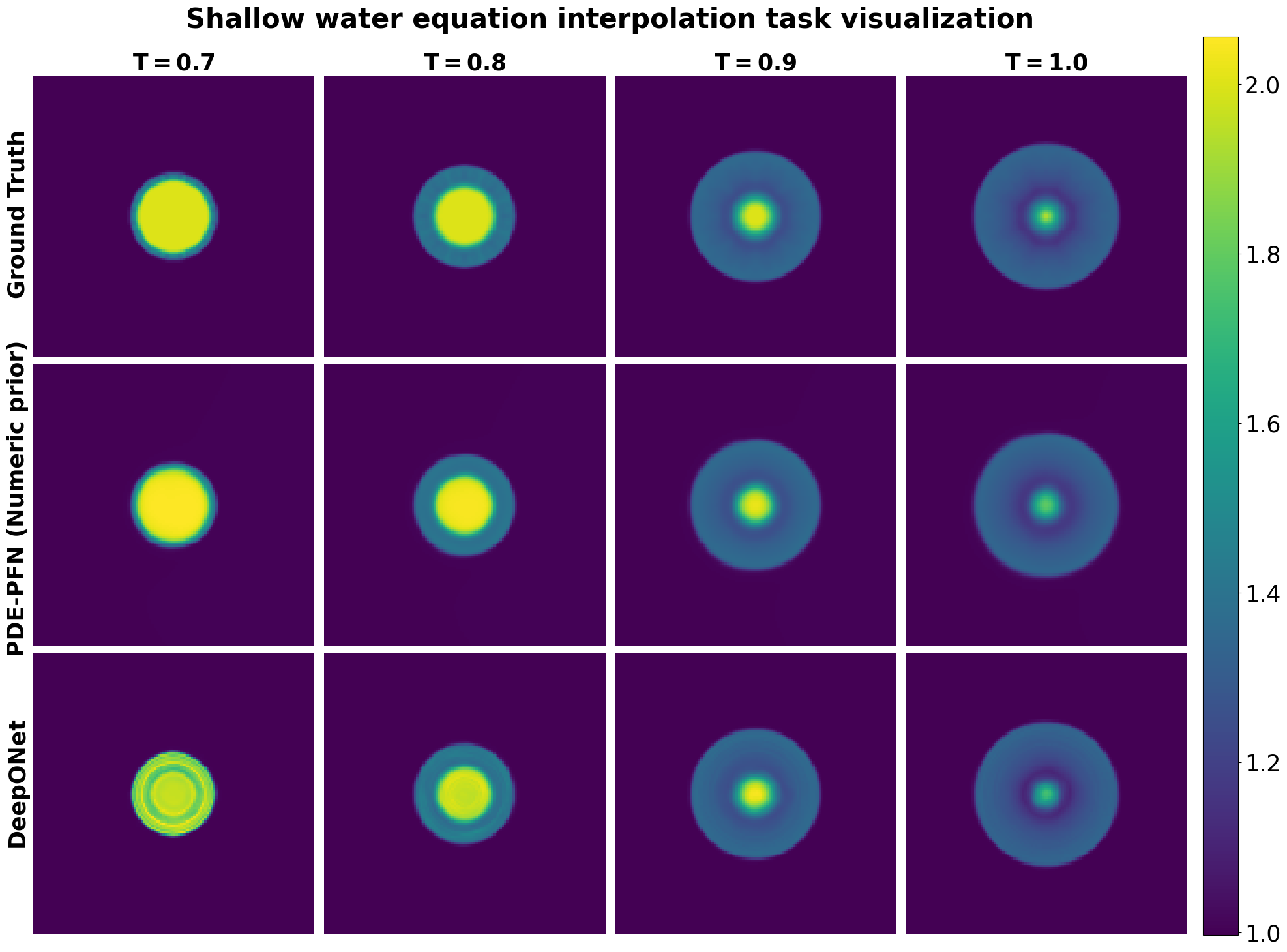}
  \includegraphics[width=0.49\textwidth]{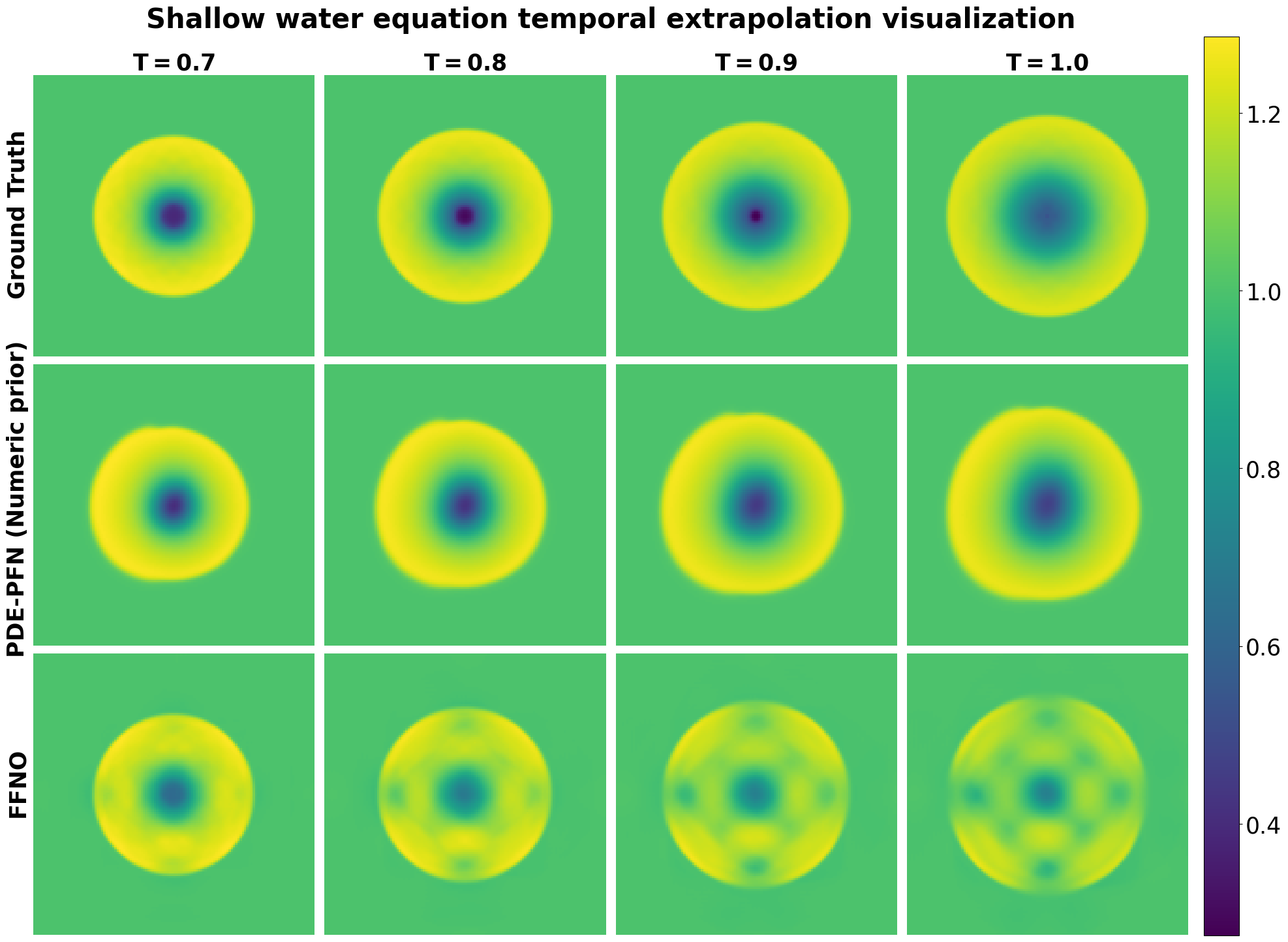}
  \caption{Visualization for the SWE experiments. The left figure is for the spatiotemporal interpolation task and right figure is for the temporal extrapolation \label{visualization_2D_SWE}}
\end{figure}

\section{Experimental Details} \label{app:experimental_details}

\subsection{Reproducibility} \label{app:reproducibility}
We are committed to ensuring the reproducibility of our research. All experimental procedures, data sources, and algorithms used in this study are clearly documented in the paper. The code will be provided as the supplementary material and be made publicly available upon publication, allowing others to validate our findings and build upon our work.

\subsection{Environments} \label{app:environment}
All the experiments were conducted using an NVIDIA RTX A6000. Details of the Python version and the packages used in the experiments can be found in the accompanying code’s environment file.

\subsection{Cost Comparison on the Main Experiments} \label{app:cost_comparison}
In addition to the comparison between the baselines in Table~\ref{table:baseline_comparison}, the additional comparison between the baselines is shown below in Table~\ref{table:experiment_cost_CDR}. In this tables, we compare the number of parameters, inference time, and the GPU memory usage for all models in their best settings. The inference time and GPU memory usage are measured in the test process on the family of 2D CDR equations temporal extrapolation task. In the experiment, both versions of our model are run with the same hyperparameters; therefore, we report the results under the single label Ours.

\begin{table}[ht]
\captionsetup{skip=1.5pt}
\caption{Cost comparisons between baselines and our model in the family of 2D CDR equations temporal extrapolation test process. \label{table:experiment_cost_CDR}}
\small
\centering
\resizebox{\textwidth}{!}{%
\begin{tabular}{lccc}
\toprule
\textbf{Model} & \textbf{Number of parameters}  & \textbf{inference time per sample(s)} &  \textbf{GPU memory usage(MB) } \\ 
\midrule
Ours     & 4,488,873   & 0.24 & 26.64    \\
DeepONet & 723,969     & 0.01 & 11.06    \\
FNO      & 143,268,097 & 0.10 & 1098.86  \\
F-FNO    & 23,631,105  & 0.02 & 98.47    \\
A-FNO    & 7,155,472   & 0.08 & 35.62    \\
LNO      & 5,337,857   & 0.08 & 24.85    \\
Poseidon & 157,625,930 & 0.27 & 2408.40  \\
DPOT     & 475,926,558 & 0.20 & 1825.72  \\			
\bottomrule
\end{tabular}%
}
\end{table}

\subsection{Hyperparameters} \label{app:hyperparameter}
In this section, we describe the number of data points for each equation and the hyperparameters used in the experiments for each model.
In Table~\ref{table:data_num}, the data points used in each experiment are shown separately for training, validation, and testing. For the training data, the context and query sets are constructed differently depending on the task. In the family of 2D CDR equations experiments with unseen coefficients, we only performed testing without training or validation; therefore, only the testing data points are reported.

\begin{table*}[htbp]
\caption{Number of data points used for training, validation, and testing in all experiments. \label{table:data_num}}
\small
\centering
\resizebox{\linewidth}{!}{%
\begin{tabular}{llccccc}
\toprule
\multirow{2}{*}{\textbf{PDE}} & \multirow{2}{*}{\textbf{Task}} & \multirow{2}{*}{\textbf{Train}} & \multicolumn{2}{c}{\textbf{Valid}}    &  \multicolumn{2}{c}{\textbf{Test}}\\ 
\cmidrule(lr){4-5} \cmidrule(lr){6-7}
 &  &  & \textbf{Context} & \textbf{Query} &  \textbf{Context} & \textbf{Query} \\ 
\midrule
\multirow{4}{*}{Family of 2D CDR equations} & Spatiotemporal interpolation & 6,144 & 1,024 & 1,024 & 1,024 & 4,096 \\
\cmidrule{2-7}
& Temporal extrapolation & 6,144 & 1,024 & 1,024 & 1,024 & 4,096  \\
\cmidrule{2-7}
& \begin{tabular}[l]{@{}l@{}} Unseen coeff \\ temporal extrapolation\end{tabular} & - & - & - & 1,024 & 4,096 \\
\midrule
\multirow{2}{*}{SWE} & Spatiotemporal interpolation & 98,304 & 16,384 & 16,384 & 16,384 & 65,536 \\ 
\cmidrule{2-7}
& Temporal extrapolation  & 98,304 & 16,384 & 16,384 & 16,384 & 65,536 \\
\midrule
CNSE &  \begin{tabular}[l]{@{}l@{}} Time trajectory predicting \\ operator learning \end{tabular} & 20,480 & 4,096 & 16,384 & 4,096 & 16,384 \\ 
\midrule
Airfoil & \begin{tabular}[l]{@{}l@{}} Time trajectory predicting \\ operator learning \end{tabular} & 26,165 & 5,233 & 20,932 & 5,233 & 20,932 \\ 
\midrule
Darcy Flow & \begin{tabular}[l]{@{}l@{}} $a \mapsto u$ \\ operator learning \end{tabular}  & \begin{tabular}[c]{@{}c@{}} $a$ : 16,384 \\ $u$ : 16,384\end{tabular} & \begin{tabular}[c]{@{}c@{}} $a$ : 16,384 \\ $u$ : 16,384\end{tabular} & \begin{tabular}[c]{@{}c@{}} $a$ : 16,384 \\ $u$ : 16,384\end{tabular} & \begin{tabular}[c]{@{}c@{}} $a$ : 16,384 \\ $u$ : 16,384 \end{tabular} & \begin{tabular}[c]{@{}c@{}} $a$ : 16,384 \\ $u$ : 16,384 \end{tabular} \\ 

\bottomrule
\end{tabular}%
}
\end{table*}

In the family of 2D CDR equation experiments, the evaluations on unseen coefficients are conducted without additional training. Therefore, we report only the hyperparameters used for the spatiotemporal interpolation and temporal extrapolation tasks trained on seen coefficients. During fine-tuning after pre-training, the first linear layer of the encoder and the final linear layer of the decoder are re-initialized to deal with cases where the number of variables to be predicted differs. The selected CDR, SWE, and CNSE configurations use FFE dimension zero, while Airfoil uses sinusoidal FFE and Darcy Flow uses FFT-based embeddings. For fairness, all best hyperparameters were selected based on those that achieved the highest performance on the validation. The hyperparameters for our model and the baselines in each experiment can be found in Table from~\ref{table:hyperparameter_CDR_SWE} to ~\ref{table:hyperparameter_Airfoil}. In each experiment, both versions of our model are run with the same hyperparameters; therefore, we report the results under the single label Ours.

\begin{table}[ht] 
\captionsetup{skip=1.5pt}
\caption{Best hyperparameter for each model used in the family of 2D CDR equations and SWE experiments. \label{table:hyperparameter_CDR_SWE}}
\small
\centering
\resizebox{\linewidth}{!}{%
\begin{tabular}{llcccc}
\toprule
\multirow{3}{*}{\textbf{Model}} & \multirow{3}{*}{\textbf{Hyperparameter Name}}  & \multicolumn{2}{c}{\textbf{Family of 2D CDR equations}} & \multicolumn{2}{c}{\textbf{SWE}}  \\ 
\cmidrule(lr){3-4} \cmidrule(lr){5-6}
& & \begin{tabular}[c]{@{}c@{}}\textbf{Spatiotemporal}\\ \textbf{interpolation}\end{tabular} & \begin{tabular}[c]{@{}c@{}}\textbf{Temporal}\\ \textbf{extrapolation}\end{tabular} & \begin{tabular}[c]{@{}c@{}}\textbf{Spatiotemporal}\\ \textbf{interpolation}\end{tabular} & \begin{tabular}[c]{@{}c@{}}\textbf{Temporal}\\ \textbf{extrapolation}\end{tabular} \\
\midrule
\multirow{6}{*}{Ours} & Attention layers number     & \multicolumn{4}{c}{16}  \\
                      & Attention hidden dimension  & \multicolumn{4}{c}{64}  \\
                      & Attention head number       & \multicolumn{4}{c}{5} \\
                      & FFE dimension               & \multicolumn{4}{c}{0} \\ 
                      & AGF depth                   & \multicolumn{4}{c}{15} \\ 
                      & En/Decoder hidden dimension & \multicolumn{4}{c}{512} \\ 
\midrule
\multirow{3}{*}{DeepONet} & Branch net depth   & 6   & 5   & 5   & 6   \\
                          & Trunk net depth    & 5   & 5   & 6   & 6   \\
                          & Hidden dimension   & 256 & 256 & 256 & 256   \\ \midrule
\multirow{2}{*}{FNO} & Layers number     & -  & 4   & -  & 3       \\
                     & Hidden dimension  & -  & 256 & -  & 128     \\ \midrule
\multirow{2}{*}{F-FNO} & Layers number     & -  & 5    & -  & 3    \\
                       & Hidden dimension  & -  & 256  & -  & 128  \\ \midrule
\multirow{2}{*}{A-FNO} & Layers number     & -  & 12   & -  & 3  \\
                       & Hidden dimension  & -  & 256  & -  & 256 \\
\midrule
\multirow{2}{*}{LNO}   & Layers number   & -  & 3   & -  & 3  \\
                       & block number    & -  & 8  & -  & 8 \\
\midrule
Poseidon & Pre-trained weight  & -  & L   & -  & T    \\
 \midrule
DPOT     & Pre-trained weight  & -  & L   & -  & M    \\
 \midrule
\multirow{2}{*}{PINN prior} & Training loss threshold  & \multicolumn{2}{c}{$1 \times 10^{-4}$} & \multicolumn{2}{c}{-} \\
                            & Maximum training epoch   & \multicolumn{2}{c}{200}  &\multicolumn{2}{c}{-}  \\
\bottomrule
\end{tabular}%
}
\end{table}

\begin{table}[htbp] 
\captionsetup{skip=1.5pt}
\caption{Best hyperparameter for each model used in the CNSE and Darcy Flow experiments. \label{table:hyperparameter_CNSE_Darcy}}
\small
\centering
\resizebox{0.7\linewidth}{!}{%
\begin{tabular}{llcc}
\toprule
\textbf{Model} & \textbf{Hyperparameter Name}  & CNSE & Darcy Flow \\
\midrule
\multirow{6}{*}{Ours} & Attention layers number     & \multicolumn{2}{c}{16}   \\
                      & Attention hidden dimension  & \multicolumn{2}{c}{64}   \\
                      & Attention head number       & \multicolumn{2}{c}{5}    \\
                      & FFE dimension               & 0  & 3    \\ 
                      & AGF depth                   & \multicolumn{2}{c}{15}   \\ 
                      & En/Decoder hidden dimension & \multicolumn{2}{c}{512}  \\ 
\midrule
\multirow{3}{*}{DeepONet} & Branch net depth   & 6    & 6     \\
                          & Trunk net depth    & 5    & 5     \\
                          & Hidden dimension   & 256  & 256   \\ \midrule
\multirow{2}{*}{FNO}      & Layers number      & 3    & 6     \\
                          & Hidden dimension   & 256  & 128   \\ \midrule
\multirow{2}{*}{F-FNO}    & Layers number      & 4    & 3     \\
                          & Hidden dimension   & 256  & 256   \\ \midrule
\multirow{2}{*}{A-FNO}    & Layers number      & 8    & 16    \\
                          & Hidden dimension   & 256  & 128   \\
\midrule
\multirow{2}{*}{LNO}      & Layers number     & 3    & 4   \\
                          & Block number      & 4    & 8   \\
\midrule
Poseidon  & pre-trained weight  & T   & T   \\
 \midrule
DPOT      & pre-trained weight  & L   & S   \\
\bottomrule
\end{tabular}%
}
\end{table}

\begin{table}[htbp] 
\captionsetup{skip=1.5pt}
\caption{Best hyperparameter for each model used in the Airfoil experiments. \label{table:hyperparameter_Airfoil}}
\small
\centering
\resizebox{0.7\linewidth}{!}{%
\begin{tabular}{llc}
\toprule
\textbf{Model} & \textbf{Hyperparameter Name}  & Operator Learining \\
\midrule
\multirow{6}{*}{Ours} & Attention layers number     & 16  \\
                      & Attention hidden dimension  & 64  \\
                      & Attention head number       & 5 \\
                      & FFE dimension               & 2 \\ 
                      & AGF depth                   & 15 \\ 
                      & En/Decoder hidden dimension & 512 \\ 
\midrule
\multirow{3}{*}{DeepONet} & Branch net depth   & 6    \\
                          & Trunk net depth    & 5    \\
                          & Hidden dimension   & 256  \\ \midrule
\multirow{2}{*}{Oformer}  & Layers number      & 3    \\
                          & Hidden dimension   & 64  \\
\bottomrule
\end{tabular}%
}
\end{table}


\section{Impact Statement}\label{app:Impact_Statement}
This research adheres to the ethical standards required for scientific inquiry. We have considered the potential societal impacts of our work and have found no clear negative implications. We also see no direct path from our model to malicious uses such as surveillance, disinformation, or privacy breaches. On the other hand, we recognize that improvements in PDE-solving capabilities could eventually be applied in real-world applications, such as climate modeling, biomedical simulations, or engineering systems.

All experiments were conducted in compliance with relevant laws and ethical guidelines, ensuring the integrity of our findings. We are committed to transparency and reproducibility in our research processes.

\end{document}